\numberwithin{equation}{section}
\newtheorem{lemma}{Lemma}
\newcommand{\Qa}[0]{\mQ^{\text{absorb}}} 
\newcommand{\Qtok}[0]{\mQ^{\text{tok}}} 
\newcommand{\NFE}[0]{\text{NFEs}} 
\newcommand{\DSE}[0]{\textrm{DSE}} 
\newcommand{\TDCE}[0]{t\textrm{-DCE}} 
\newcommand{\LDCE}[0]{\lambda \textrm{-DCE}}
\newcommand{\ENFE}[0]{\text{E-NFEs}}
\newcommand{\sigmabar}[1]{\bar{\sigma}(#1)}
\def\eqref#1{equation~\ref{#1}}
\def\1{\bm{1}}
\def\vmu{{\bm{\mu}}}
\def\va{{\bm{a}}}
\def\vc{{\bm{c}}}
\def\vf{{\bm{f}}}
\def\vs{{\bm{s}}}
\def\vx{{\bm{x}}}
\def\mA{{\bm{A}}}
\def\mP{{\bm{P}}}
\def\mQ{{\bm{Q}}}
\DeclareMathAlphabet{\mathsfit}{\encodingdefault}{\sfdefault}{m}{sl}
\SetMathAlphabet{\mathsfit}{bold}{\encodingdefault}{\sfdefault}{bx}{n}
\newcommand{\pdata}{p_{\rm{data}}}
\definecolor{LightYellow}{rgb}{1, 1, 0.88}
\definecolor{LightLavender}{rgb}{0.9, 0.9, 1.0}
\definecolor{LightGreen}{rgb}{0.8, 1.0, 0.8}
\newmdenv[
  linewidth=0pt,  
  backgroundcolor=blue!10,  
  skipabove=10pt,
  skipbelow=10pt
]{modified}
\definecolor{burntorange}{rgb}{0.8, 0.33, 0.0}
\newcommand{\revise}[1]{\textcolor{black}{#1}}
\title{Your Absorbing Discrete Diffusion Secretly Models the Conditional Distributions of Clean Data}
\renewcommand\AB@affilsepx{, \protect\Affilfont}
\author{
  Jingyang Ou$^{1,2}$, Shen Nie$^{1,2}$, Kaiwen Xue$^{1,2}$, Fengqi Zhu$^{1,2}$\\
  \vspace{-10pt} 
  \textbf{Jiacheng Sun}$^{3}$, \textbf{Zhenguo Li}$^{3}$, \textbf{Chongxuan Li}$^{1,2}$\thanks{Corresponding to Chongxuan Li.}\\
  
  {\small $^1$Gaoling School of Artificial Intelligence, Renmin University of China}\\
  {\small $^2$Beijing Key Laboratory of Big Data Management and Analysis Methods, Beijing, China}\\
  {\small $^3$Huawei Noah’s Ark Lab}\\
  {\small \texttt{\{oujingyang, nieshen, kaiwenxue, chongxuanli\}@ruc.edu.cn}, 
  \texttt{fengqizhu@whu.edu.cn}, 
  \texttt{\{sunjiacheng1, li.zhenguo\}@huawei.com}}
}
\begin{document}

\maketitle
\begin{abstract}
Discrete diffusion models with absorbing processes have shown promise in language modeling. The key quantities to be estimated are the ratios between the marginal probabilities of two transitive states at all timesteps, called the concrete score. In this paper, we reveal that the concrete score in absorbing diffusion can be expressed as conditional probabilities of clean data, multiplied by a time-dependent scalar in an analytic form. Motivated by this finding, we propose reparameterized absorbing discrete diffusion (RADD), a dedicated diffusion model without time-condition that characterizes the time-independent conditional probabilities. Besides its simplicity, RADD can reduce the number of function evaluations (NFEs) by caching the output of the time-independent network when the noisy sample remains unchanged in a sampling interval, which enables sampling acceleration. 
Built upon the new perspective of conditional distributions, we further unify absorbing discrete diffusion and any-order autoregressive models (AO-ARMs), showing that the upper bound on the negative log-likelihood for the diffusion model can be
interpreted as an expected negative log-likelihood for AO-ARMs. Further, our RADD models achieve SOTA performance among diffusion models on 5 zero-shot language modeling benchmarks (measured by perplexity) at the GPT-2 scale.
Our code is available at  \url{https://github.com/ML-GSAI/RADD}.
\end{abstract}

\section{Introduction}

Auto-regressive models~\citep{radford2018improving, radford2019language, brown2020language} have dominated the area of language modeling for many years. In particular, such models significantly benefit from large-scale transformers~\citep{vaswani2017attention} and training data and have achieved remarkable progress~\citep{chatgpt, achiam2023gpt, touvron2023llama,anil2023palm}. From a probabilistic perspective, the sequential sampling process of auto-regressive models is inefficient and limits the reasoning ability in nonsequential orders~\citep{berglund2023reversal, lv2023we}. Intrinsically, this is
because such models characterize the joint distribution by the chain rule of probability, motivating research on developing other types of generative models for text. 

Diffusion models~\citep{sohl2015deep,ho2020denoising,song2020score} generate data in a coarse-to-fine manner efficiently~\citep{song2020denoising, bao2022analytic, zhang2022fast, lu2022dpm, lu2022dpmplus} and all dimensions simultaneously, providing an appealing alternative to auto-regressive models. Among other efforts~\citep{li2022diffusionlm,austin2023structured,dieleman2022continuous,chen2023analog, graves2024bayesian, xue2024unifying,  he2022diffusionbert, campbell2022continuous, sun2023scorebased, meng2023concrete, lou2024discrete} (see \cref{sec:related_work} for a comprehensive discussion), score entropy discrete diffusion (SEDD)~\citep{lou2024discrete} has shown promise in text generation. In particular, SEDD has achieved comparable results to auto-regressive models on 5 zero-shot language modeling benchmarks at the GPT-2 scale. Meanwhile, SEDD can reduce the number of function evaluations (NFEs) in sampling and fulfill text conditioned on prompts at different positions.

Technically, SEDD employs a discrete-state (absorbing) Markov process that adds noises to data by randomly replacing a token with a mask token $[\textbf{M}]$ and then learns a reverse process to denoise from an entirely masked sentence. The key quantities to be estimated are the ratios between the marginal probabilities of two transitive states at all timesteps, called the \textbf{concrete score}. SEDD also proposes a ``scaling trick'' (see details in \cref{sec:radd}) that scales the output of the score estimation by a factor. The trick has been proven effective in practice yet not fully understood in theory~\citep{lou2024discrete}.


One of our main contributions is to reveal that the concrete score in absorbing diffusion can be expressed as conditional probabilities of clean data, multiplied by a time-dependent scalar in an analytic form (see Theorem~\ref{thm:AnalyticConcreteScore}). 
Our finding theoretically explains the benefits of the scaling trick as a reparameterization for better optimization. Motivated by the finding, we propose reparameterized absorbing discrete diffusion (RADD), a dedicated diffusion model that characterizes the time-independent conditional probabilities by removing the time conditions from the score estimation (see \cref{fig:radd_network}). Besides its simplicity, RADD can significantly reduce the NFEs by caching the output of the time-independent network when the noisy sample remains unchanged during a sampling interval.

Built upon the new understanding of the concrete score, we further unify absorbing discrete diffusion and any-order autoregressive models (AO-ARMs)~\citep{UriaML14,HoogeboomARDM22,Shih2022TrainingAI}, demonstrating that their training objectives are equivalent (see Theorem~\ref{thm:equivalent_obj}). To establish the theory, we first rewrite the original training objective for absorbing discrete diffusion into a simpler form (named $t$-denoising cross-entropy, $t$-DCE). Then, we apply a change of variable from the time $t$ to the probability that a single-dimensional token is masked at time $t$ in the forward process. By integrating the probability variable analytically, we show its equivalence to the training objectives for AO-ARMs. These theoretical findings offer a fresh perspective that the upper bound on the negative log-likelihood of an absorbing discrete diffusion can be interpreted as the expected negative log-likelihood for corresponding AO-ARMs. Furthermore, they provide alternative objective functions for training and likelihood evaluation.

Empirically, 
the RADD model converges faster while achieving similar performance to the strongest baseline, i.e., SEDD~\citep{lou2024discrete}. Moreover,
we trained our RADD models on different objective functions, 
achieving state-of-the-art performance among diffusion models on five zero-shot language modeling benchmarks (measured by perplexity) at the GPT-2 scale. This empirical evidence validates our theoretical findings.

In summary, this paper has several contributions:
\begin{itemize}[leftmargin=*]
\setlength\itemindent{.5em}
    \item \textbf{Deeper understanding of discrete diffusion}: Both the factorization form of the concrete score and unified training objective for absorbing discrete diffusion and AO-ARMs reveal important yet overlooked theoretical properties of absorbing discrete diffusion, which explain the mysterious scaling trick, provide practice guidance, and may inspire future work.
    \item \textbf{Simpler parameterization}: By removing the time conditions, we reparameterize the model to focus on a time-independent conditional probability, simplifying the existing model.
    \item \textbf{Efficient sampling}: Leveraging the reparameterized form, RADD can use a caching strategy to improve the sampling speed and achieve faster convergence. 
    \item \textbf{Enhanced zero-shot language modeling performance}: 
   Our architectural simplifications and optimized training loss lead to superior results.
    On five zero-shot language modeling benchmarks, RADD achieves state-of-the-art performance among discrete diffusion models (measured by perplexity) at the GPT-2 scale. 
\end{itemize}

\section{Background}

In this section, we provide an overview of continuous-time discrete diffusion models (\cref{subsec:ctmc}) and any-order autoregressive models (\cref{subsec:aoarm}), with a more detailed discussion available in \cref{sec:ao_arm}. For a complete list of notations and definitions, please refer to \cref{sec:notations}.


\subsection{Continuous time discrete diffusion model}

\label{subsec:ctmc}
\paragraph{Single dimension}
\label{sec:sin_dim}

Let $x$ denote a single dimensional sample with possible values in $\mathcal{X}=\{ 1, \ldots, N\}$. 
A continuous-time discrete Markov chain at time $t$ is characterized by a transition rate matrix $\mQ_t$ as follows
    \begin{equation}
    \label{eq:delta_transition}
        p_{t+ \Delta t|t}(\hat{x}|x) =
        \begin{cases}
             \mQ_t(x,\hat{x})\Delta t + o(\Delta t),     & \hat{x} \neq x, \\
            1 + \mQ_t(x,x)\Delta t + o(\Delta t), & \hat{x} = x,
        \end{cases}
    \end{equation}
where $\mQ_t(x,\hat{x})$ is the $(x,\hat{x})$ element of transition rate matrix $\mQ_t$, denoting the transition rate from state $x$ to state $\hat{x}$ at time $t$.
Equivalently, $\mQ_t(x,\hat{x})$ is defined as 
\begin{equation}
\label{eq:q_def}
    \mQ_t(x,\hat{x})= \begin{cases}
        \lim _{\Delta t \rightarrow 0} \frac{p_{t+ \Delta t|t}(\hat{x}|x)}{\Delta t},     & \hat{x} \neq x,  \\
        \lim _{\Delta t \rightarrow 0} \frac{p_{t+ \Delta t|t}(x|x)-1}{\Delta t}, & \hat{x} = x. \end{cases}
\end{equation}
Given the above definition, denote $\mP_{t|s}(x,\hat{x}) := p_{t|s}(\hat{x}|x)$. The following Kolmogorov's forward equation holds~\citep{campbell2022continuous,anderson2012continuous}:
\begin{equation}
\label{eq:forward_eq}
    \frac{d}{dt}\mP_{t|s} =  \mP_{t|s} \mQ_t.
\end{equation}
In practice~\citep{campbell2022continuous}, $\mQ_t$ is parameterized as $\sigma(t)\mQ$, where $\sigma(t)$ is a scalar function representing the noise schedule and $\mQ$ is a constant matrix. In this case, the solution to \cref{eq:forward_eq} can be solved analytically as $\mP_{t|s} = \exp\left((\bar{\sigma}(t) - \bar{\sigma}(s)) \mQ \right)$,
where  $\bar{\sigma}(t) = \int_0^t \sigma(s)ds$ and $\exp$ is the matrix exponential. Therefore, we can directly sample $x_t$ from $x_s$ in one step for any $t > s$.

Further, $\mQ$ is often designed to diffuse towards a uniform distribution or an absorbing state $[\textbf{M}]$. Recent work~\citep{austin2023structured,campbell2022continuous} suggests that the absorbing matrix achieves better empirical performance. Besides, as detailed in Section~\ref{sec:radd}, the specific structure of the absorbing matrix can be leveraged to improve performance and accelerate sampling. Therefore, we focus on the absorbing matrix as follows:
\begin{align}
    \Qa=\left[\begin{array}{ccccc}
            -1     & 0      & \cdots & 0      & 1      \\
            0      & -1     & \cdots & 0      & 1      \\
            \vdots & \vdots & \ddots & \vdots & \vdots \\
            0      & 0      & \cdots & -1     & 1      \\
            0      & 0      & \cdots & 0      & 0
    \end{array}\right]. 
    \label{eq:absorb_q}
\end{align}
The time reversal of the forward process is characterized by a reverse transition rate matrix $\tilde{\mQ}_t$~\citep{Sun2022ScorebasedCD, Kelly1980ReversibilityAS}, whose element from state $x_t$ to state $\hat{x}_t$  is given by
\begin{equation}
\label{eq:def_q_rev}
    \tilde{\mQ}_t(x_t, \hat{x}_t)= \begin{cases}
        \frac{ p_t(\hat{x}_t)}{p_t(x_t)} \mQ_t(\hat{x}_t,x_t) ,    & \hat{x_t} \neq x_t, \\
        - \sum_{k \neq x_t}   \tilde{\mQ}_t(x_t,k), & \hat{x}_t=x_t.
    \end{cases}
\end{equation}

Simulating the reverse process requires learning the reverse transition rate $\tilde{\mQ}_t(x_t, \hat{x}_t)$. As $\mQ_t( \hat{x}_t,x_t)$ is known, it is sufficient to estimate the concrete score $\frac{p_t(\hat{x}_t)}{p_t(x_t)}$ by a score network $s_\theta(x_t,t) \approx [\frac{p_t(\hat{x}_t)}{p_t(x_t)}]_{\hat{x}_t\in \mathcal{X}}$~\citep{meng2023concrete}. Denoising score entropy (DSE)~\citep{lou2024discrete}  is an effective objective to train the score network
\begin{equation}
\label{eq:score_entropy}
    \int_0^T \mathbb{E}_{x_t \sim p_{t|0 }\left(x_t \mid x_0\right)} \sum_{{\hat{x}_t} \neq x_t} \mQ_t\left( \hat{x}_t,x_t\right)\left(s_\theta\left(x_t, t\right)_{\hat{x}_t}- \frac{p_{t \mid 0}\left({\hat{x}_t} \mid x_0\right)}{p_{t \mid 0}\left(x_t \mid x_0\right)} \log s_\theta\left(x_t, t\right)_{\hat{x}_t}+ C\right) d t,
\end{equation}
where the optimize irrelevant constant $C = K\left(\frac{p_{t \mid 0}\left({\hat{x}_t} \mid x_0\right)}{p_{t \mid 0}\left(x_t \mid x_0\right)}\right)$ and $K(a):= a \log a - a$. In particular, the DSE loss in \cref{eq:score_entropy} is an upper bound of the negative log-likelihood with an unknown gap. Nevertheless, existing work~\citep{lou2024discrete} still employs it for training and likelihood evaluation. 

After training, sampling can be understood as discretizing the following reverse process 
\begin{equation}
\label{eq:reverse_eq}
    \frac{d}{ds}\mP_{s|t} =  \mP_{s|t} \tilde{\mQ}_s ,
\end{equation}
where $ds$ is an infinitesimal negative timestep and the concrete score is replaced by the score network. Existing samplers include the Euler method and Tweedie $\tau$ -leaping, as detailed in \cref{sec:sample_methods}.
\paragraph{Multi-dimension}
\label{sec:mul_dim}
In a state space of length $d$ like $\mathcal{X}^d = \{1, \ldots, n\}^d$, we denote the sample as a sequence of one-dimensional data, i.e.,$\vx = x^1\ldots x^d$. The transition matrix $\mQ_t \in \mathbb{R}^{n^d \times n^d}$ has an exponential number of possible states, making it expensive to reverse. To alleviate this issue, existing work~\citep{campbell2022continuous,lou2024discrete} assumes independence between dimensions and each dimension is a one-dimensional diffusion process with the same transition rate matrix $\Qtok_t \in \mathbb{R}^{n\times n} $.

Under the independent assumption, $\mQ_t$ assigns zero values~\citep{campbell2022continuous,lou2024discrete} for all sequences with a Hamming distance larger than 1. Therefore, it is sufficient to model the concrete score between sequences that differ by a Hamming distance of 1, such as $\hat{\vx}_t= x_t^1 \ldots \widehat{x}_t^i \ldots x_t^d$ given $\vx_t = x_t^1\cdots x_t^d$. Therefore,  the score network $\vs_\theta(\cdot, t):\{1, \ldots, n\}^d \rightarrow \mathbb{R}^{d \times n}$ is defined as
\begin{equation}
\label{eq:score_net}
        \vs_\theta\left( \vx_t,t\right)_{\hat{\vx}_t} = \vs_\theta\left(x_t^1 \ldots x_t^i \ldots x_t^d, t\right)[i, \widehat{x}_t^i]\approx \frac{p_t\left(x_t^1 \ldots \widehat{x}_t^i \ldots x_t^d\right)}{p_t\left(x_t^1 \ldots x_t^i \ldots x_t^d\right)},
\end{equation}

which leads to the following expression to estimate the reverse transition rate matrix $\tilde{\mQ}_t$:
\begin{align}
\tilde{\mQ}_t\left(x_t^1 \ldots x_t^i \ldots x_t^d, x_t^1 \ldots \widehat{x}_t^i \ldots x_t^d\right)
&= \Qtok_t\left(\widehat{x}_t^i,x_t^i\right) \frac{p_t\left(x_t^1 \ldots \widehat{x}_t^i \ldots x_t^d\right)}{p_t\left(x_t^1 \ldots x_t^i \ldots x_t^d\right)}\\
    &\approx \Qtok_t\left(\widehat{x}_t^i,x_t^i\right)
    \vs_\theta\left(x_t^1 \ldots x_t^i \ldots x_t^d, t\right)[i, \widehat{x}_t^i].  
\end{align}
Existing samplers assume that each dimension is independent within a small interval $\Delta t$ and update each dimension in parallel for efficiency~\citep{lou2024discrete,campbell2022continuous}. 

\subsection{Any-order autoregressive models}
\label{subsec:aoarm}
Any-order autoregressive models (AO-ARMs)~\citep{UriaML14,HoogeboomARDM22,Shih2022TrainingAI} model the joint distribution autoregressively for all possible orders $\pi$ of the $d$ variables. 
Formally, they factorize the joint distribution as $\prod_{k=1}^d p(x^{\pi(k)}|x^{\pi(<k)})$. 
To learn such a distribution, an AO-ARM utilizes a weight-sharing neural network to model all univariate conditionals and employs mask tokens to represent absent variables. 
During training, the expected negative log-likelihood over the uniform distribution of all orders $U_\pi$ is minimized:
\begin{align}
\label{eq:L_AO_def}
\mathcal{L}_{AO}(\vx_0) &= \mathbb{E}_{\pi \sim U_\pi} \sum_{l=1}^d -\log q_\theta(x_0^{\pi(l)}|x_0^{\pi(<l)};\pi).
\end{align}

\section{Reparameterized absorbing discrete diffusion}
\label{sec:radd}

In \cref{subsec:reparameterize_concretescore}, we reveal that the concrete score of absorbing discrete diffusion can be reparameterized as conditional distributions of clean data, which enables efficient sampling by caching the output of time-independent network (see \cref{subsec:Efficient_sampling}). In \cref{subsec:unify}, we unify the training objective of absorbing discrete diffusion and AO-ARMs. 

\subsection{Reparameterizing the concrete score as conditional distributions of clean data}
\label{subsec:reparameterize_concretescore}

A key observation is that only the transition from the masked token to an unmasked token is valid in the reverse process of an absorbing discrete diffusion. In particular, according to the definition of the transition matrix of the absorbing process  (see~\cref{eq:absorb_q}), we have $\Qa(\hat{x}_t^i,x_t^i) = 0$ for any unmasked $x_t^i \neq [\textbf{M}]$ and $\hat{x}_t^i \neq x_t^i$. 
Therefore, the corresponding element in the transition matrix of the reverse process $\tilde{\mQ}_t$ (see~\cref{eq:def_q_rev}) equals zero. Namely, 
\begin{align}
\label{eq:zero_q}
    \tilde{\mQ}_t\left(x_t^1 \ldots x_t^i \ldots x_t^d, x_t^1 \ldots \widehat{x}_t^i \ldots x_t^d\right)    = \sigma(t) \Qa\left(\widehat{x}_t^i,x_t^i\right)
    \frac{p_t\left(x_t^1 \ldots \widehat{x}_t^i \ldots x_t^d\right)}{p_t\left(x_t^1 \ldots x_t^i \ldots x_t^d\right)} = 0, 
\end{align}
for any unmasked state $x_t^i \neq [\textbf{M}]$ and $\hat{x}_t^i \neq x_t^i$ and it is unnecessary to model the corresponding concrete score $ \frac{p_t\left(x_t^1 \ldots \widehat{x}_t^i \ldots x_t^d\right)}{p_t\left(x_t^1 \ldots x_t^i \ldots x_t^d\right)}$. 
Also, note that the concrete score always takes the value of one if $\hat{x}_t^i = x_t^i$. Therefore, we only need to characterize the concrete score for  $x_t^i = [\textbf{M}]$ and $\hat{x}_t^i \neq [\textbf{M}]$.  

Interestingly, in this case, we discover that the concrete score has a simple analytic form w.r.t. to the conditional distributions of clean data, as summarized in the following \cref{thm:AnalyticConcreteScore}.
\begin{restatable}{theorem}{AnalyticRatio}
    \label{thm:AnalyticConcreteScore}
    (Analytic concrete score in absorbing diffusion, proof in \cref{sec:proof_analytic_ratio}) For $\vx_t = x_t^1 \ldots x_t^i \ldots x_t^d$ and $\hat{\vx}_t = x_t^1 \ldots \widehat{x}^i_t \ldots x_t^d$, if $x_t^i = [\textbf{M}]$ and $\hat{x}_t^i \neq [\textbf{M}]$, the concrete score at time $t$ can be expressed as a \textcolor{blue}{time-independent} conditional distribution at time zero multiplied by an analytic \textcolor{orange}{time-dependent} term:
\begin{equation*}
\underbrace{\frac{p_t\left(x_t^1 \ldots \widehat{x}_t^i \ldots x_t^d\right)}{p_t\left(x_t^1 \ldots x_t^i \ldots x_t^d\right)} }_{\displaystyle \text{concrete score}} = \underbrace{\textcolor{orange}{\frac{e^{- \bar{\sigma}(t)}}{1 - e^{- \bar{\sigma}(t)}}}\vphantom{\frac{p_t\left(x_t^1 \ldots \widehat{x}_t^i \ldots x_t^d\right)}{p_t\left(x_t^1 \ldots x_t^i \ldots x_t^d\right)}}}_{ \displaystyle \text{scalar}}~~\underbrace{\textcolor{blue}{p_0(\hat{x}_t^i|\vx_t^{\textrm{UM}})} \vphantom{\frac{p_t\left(x_t^1 \ldots \widehat{x}_t^i \ldots x_t^d\right)}{p_t\left(x_t^1 \ldots x_t^i \ldots x_t^d\right)}}}_{\substack{\displaystyle \text{clean data}\\ 
\displaystyle \text{distribution}}}
\end{equation*}
where $\vx_t^{\textrm{UM}}$ is the vector consists of all unmasked tokens of $\vx_t$.
\end{restatable}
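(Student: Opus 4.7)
The plan is to compute both $p_t(\vx_t)$ and $p_t(\hat{\vx}_t)$ explicitly by marginalizing the joint $p_0(\vx_0) p_{t|0}(\vx_t \mid \vx_0)$ and exploiting the special structure of the absorbing kernel, then take their ratio. The only facts I will need are the analytic single-dimension transition probabilities for $\Qa$ and the dimension-wise independence of the forward process stated in \cref{sec:mul_dim}.

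First I will compute the single-dimension transition kernel. Using $\mP_{t|0} = \exp(\bar{\sigma}(t) \Qa)$ and the structure of \cref{eq:absorb_q} (the $[\textbf{M}]$ row is zero, and every other state has a self-loop with rate $-1$ and a single outgoing rate $1$ into $[\textbf{M}]$), I obtain, writing $\alpha_t := e^{-\bar{\sigma}(t)}$,
\begin{equation*}
p_{t\mid 0}(x_t^i \mid x_0^i) \;=\; \alpha_t\, \mathbf{1}[x_t^i = x_0^i] \;+\; (1-\alpha_t)\, \mathbf{1}[x_t^i = [\textbf{M}]],
\end{equation*}
for any clean $x_0^i \neq [\textbf{M}]$. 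By the independence assumption, $p_{t\mid 0}(\vx_t \mid \vx_0) = \prod_i p_{t\mid 0}(x_t^i \mid x_0^i)$, which vanishes unless $x_0^j = x_t^j$ for every unmasked coordinate $j$ of $\vx_t$, in which case it equals $\alpha_t^{|U|}(1-\alpha_t)^{|M|}$, where $U$ and $M$ denote the unmasked and masked index sets of $\vx_t$.

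Next I will marginalize over $\vx_0$. Since the factor $\alpha_t^{|U|}(1-\alpha_t)^{|M|}$ is identical for all $\vx_0$ consistent with $\vx_t$, it can be pulled outside the sum, leaving
\begin{equation*}
p_t(\vx_t) \;=\; \alpha_t^{|U|}(1-\alpha_t)^{|M|}\, p_0(\vx_t^{\textrm{UM}}),
\end{equation*}
where $p_0(\vx_t^{\textrm{UM}})$ is the marginal of the clean-data distribution over the unmasked coordinates. For $\hat{\vx}_t$, coordinate $i$ flips from $[\textbf{M}]$ to $\hat{x}_t^i$, so the new unmasked set is $U \cup \{i\}$ and the new unmasked tuple is $(\vx_t^{\textrm{UM}}, \hat{x}_t^i)$; applying the same identity yields $p_t(\hat{\vx}_t) = \alpha_t^{|U|+1}(1-\alpha_t)^{|M|-1}\, p_0(\vx_t^{\textrm{UM}}, \hat{x}_t^i)$.

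Finally, dividing the two expressions makes the combinatorial prefactors telescope to $\alpha_t/(1-\alpha_t)$, and the ratio of clean-data marginals collapses to the conditional $p_0(\hat{x}_t^i \mid \vx_t^{\textrm{UM}})$ by definition, giving exactly the claimed identity. The only subtlety worth care is bookkeeping of the index sets $U, M$ under the single-coordinate flip and confirming that no probability mass has been lost; the matrix exponential computation is elementary given the block-triangular form of $\Qa$ (the absorbing row is invariant, and the remaining diagonal block is $-\mI$). I do not anticipate a genuinely hard step; the content of the theorem is the clean factorization rather than any technical difficulty.
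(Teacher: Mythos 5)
Your proposal is correct and follows essentially the same route as the paper: derive the single-dimension transition kernel, use dimension-wise independence to marginalize over $\vx_0$ and obtain $p_t(\vx_t) = [1-e^{-\bar{\sigma}(t)}]^{|M|}[e^{-\bar{\sigma}(t)}]^{|U|} p_0(\vx_t^{\textrm{UM}})$ (the paper's Proposition on the analytic joint distribution), then take the ratio. The only cosmetic difference is that you obtain the single-dimension kernel directly from the matrix exponential, whereas the paper uses a holding-time argument; the paper itself notes these are interchangeable.
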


One immediate implication of Theorem~\ref{thm:AnalyticConcreteScore} is to theoretically explain the benefit of the ``scaling trick'' in existing work~\citep{lou2024discrete} (see Appendix C.2 therein), which significantly improves the practical performance of discrete diffusion (see \cref{tbl:zero_shot_perplexity_small}) but has not been fully understood before. In particular, the scaling trick divides the output of the score network by a factor. Equivalently, it reparameterizes \(\vs_\theta(\vx_t, t)\) as follows:
\begin{equation}
\vs_\theta(\vx_t, t) =\textcolor{orange} {\frac{e^{-\bar{\sigma}(t)}}{1 - e^{-\bar{\sigma}(t)}}} \tilde{\vs}_\theta(\vx_t, t),
\end{equation}
where $\tilde{\vs}_\theta(\vx_t, t)$ is the output of the reparameterized score network and the scaling factor coincides with the time-dependent term in Theorem~\ref{thm:AnalyticConcreteScore}. In the original parameterization, the score network $s_\theta$ must model the whole time-dependent concrete score. In contrast, with the scaling trick, the reparameterized score $\tilde{\vs}_\theta(\vx_t, t)$ can focus on capturing the clean data distribution $p_0(\hat{x}^i|\vx_t^{\textrm{UM}})$  and simplifies learning, according to Theorem~\ref{thm:AnalyticConcreteScore}.

Further, Theorem~\ref{thm:AnalyticConcreteScore} suggests that the reparameterized score is essentially a conditional probability on clean data, which is time-independent. Motivated by this, we propose reparameterized absorbing discrete diffusion (RADD), which employs a time-independent network $\vc_\theta(\vx_t)$ that defines a model distribution $q_\theta$ by corresponding conditional distributions to approximate data distribution $p_0$ directly:
\begin{equation}
\label{eq:c_q_def}
    \vc_\theta(\vx_t)[i,\hat{x}_t^i] = q_\theta (\hat{x}_t^i|\vx_t^{\textrm{UM}}) \approx p_0(\hat{x}_t^i|\vx_t^{\textrm{UM}}). 
\end{equation}

In practice, we make a minimal modification of the score network in SEDD~\citep{lou2024discrete} for simplicity and fairness as shown in \cref{fig:radd_network}. Specifically, we remove the time-conditioning input, reducing the architecture to a form similar to the standard GPT model, with softmax as the final activation. Further details can be found in \cref{subsec:model_details}.

\revise{Our reparameterization approach, which removes $t$, applies to both score and mean parameterizations, as detailed in \cref{sec:mean_pm}. This not only simplifies the training target but also enables a more efficient sampling process than SEDD~\citep{lou2024discrete}, as presented below.}

\begin{figure}[h!]
    \centering
    \begin{minipage}{0.4\linewidth}
        \centering
        \includegraphics[width=\linewidth]{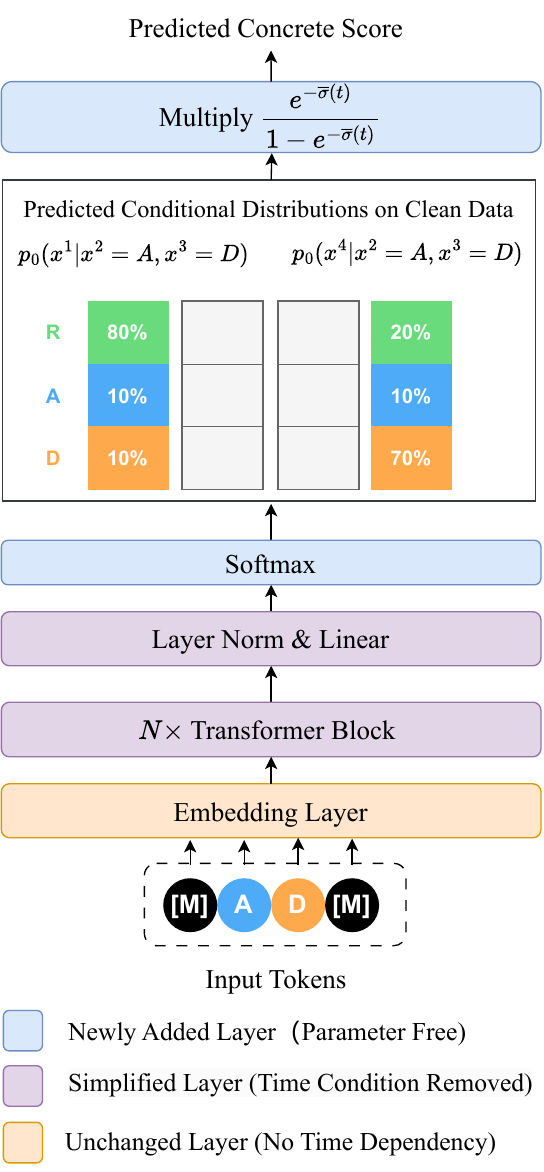}
            \vspace{-15pt}
        \caption{\textbf{Reparameterized network architecture vs. SEDD (DiT).} 
Our network simplifies the original DiT by removing time conditions and outputs the conditional distributions on clean data, similar to the standard Transformer. For example, with a vocabulary of \{R, A, D\}, only the probabilities in the columns corresponding to the [\textbf{M}] token are meaningful (highlighted in color), since the network  is designed to learn to denoise the masked input. The remaining output, shown in grey, will not be utilized.}

        
        \label{fig:radd_network}
    \end{minipage}
    \hspace{0.03\linewidth}
    \begin{minipage}{0.55\linewidth}
        \centering
        \begin{subfigure}[b]{\linewidth}
            \includegraphics[width=\linewidth]{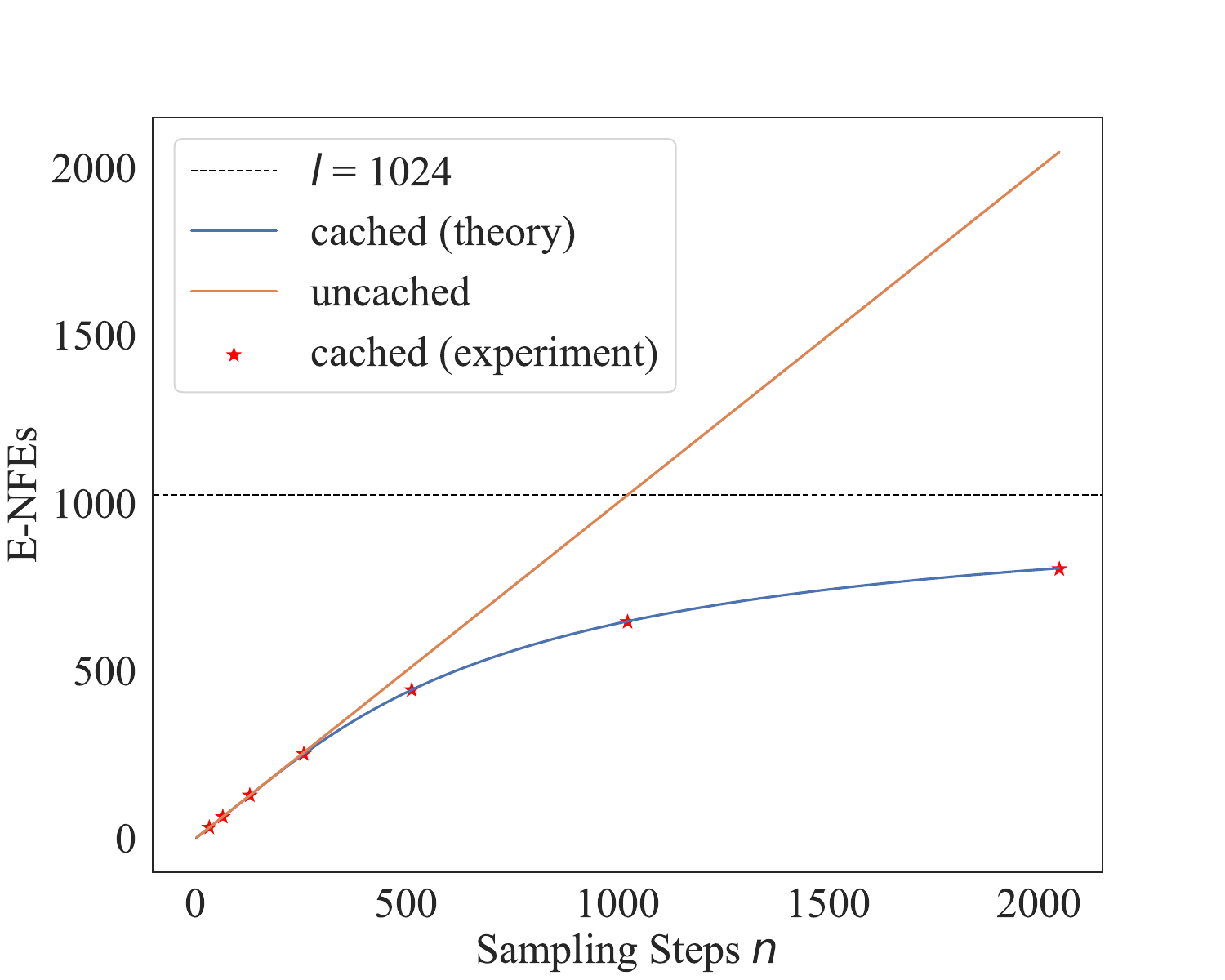}
            \vspace{5pt}
            \caption{\textbf{Expected number of function evaluations (E-NFEs) over different sampling steps.} E-NFEs measured by Tweedie $\tau$-leaping method with log-linear noise schedule.}
            \label{fig:E_nfe}
        \end{subfigure}
        \vspace{-30pt}
        \begin{subfigure}[b]{\linewidth}
            \includegraphics[width=\linewidth]{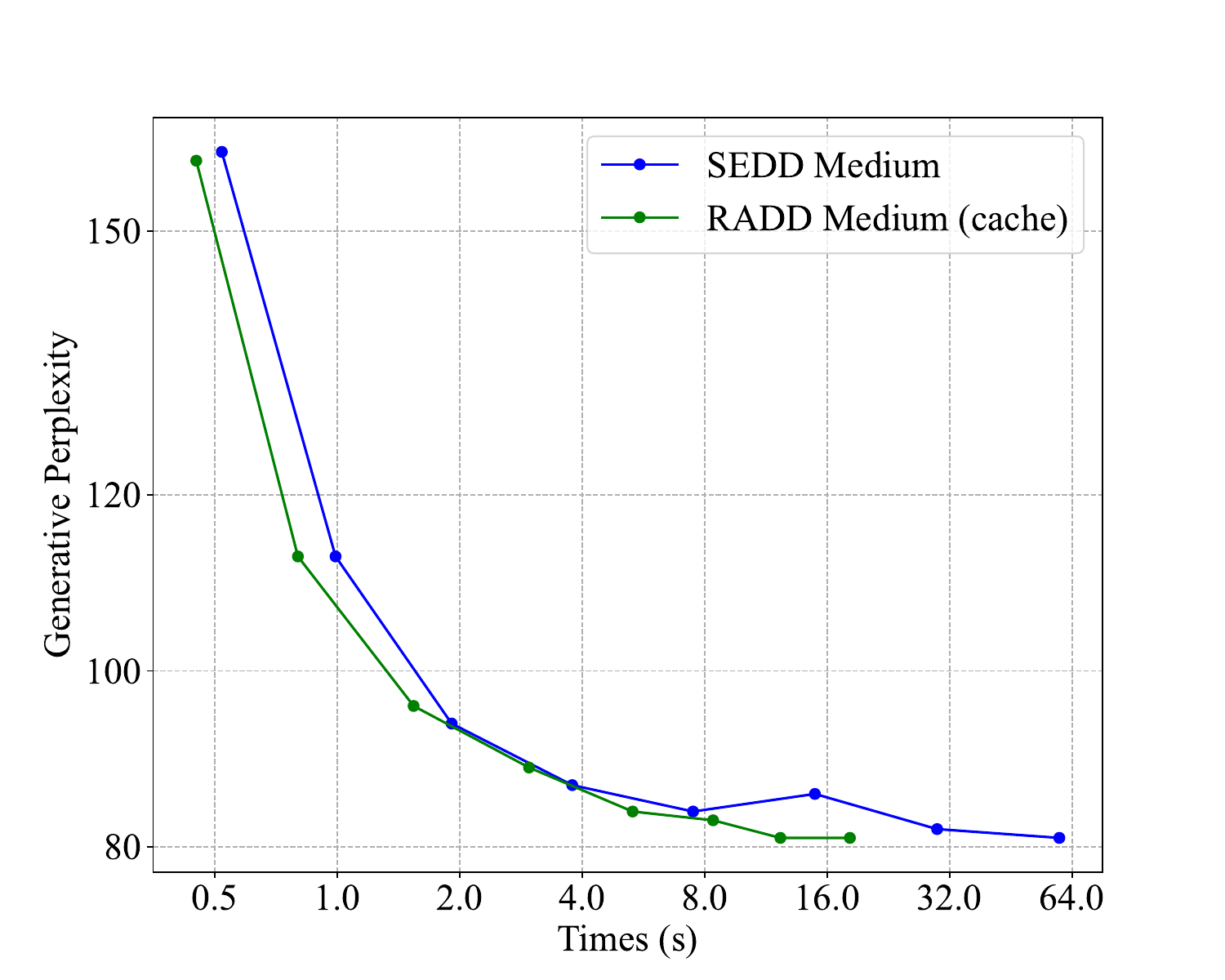}
            \vspace{5pt}
            \caption{\textbf{Sample quality measured by perplexity ($\downarrow$).} Our RADD model, utilizing the cache strategy, achieves slightly better perplexity with less time, enabling faster convergence and reduced sampling time. }
            \label{fig:unditional}
        \end{subfigure}
    \end{minipage}
\end{figure}

\subsection{Efficient samplers to reduce NFEs by caching the output of RADD}
\label{subsec:Efficient_sampling}
In the reverse process of an absorbing discrete diffusion, once a token transitions from $[\textbf{M}]$ to an unmasked token, it remains unchanged. Consequently, for a sequence consisting of $d$ tokens, there will be at most $d$ intervals during the sampling process where changes occur, regardless of the number of sampling steps $D$. In the remaining steps, the sequence remains unchanged across all $d$ dimensions.
This property allows us to cache $\vc_\theta(x_t)$ to avoid the need to reevaluate the time-independent $\vc_\theta$ when $\vx_t$ is unchanged in the previous step (see \cref{sec:pseudo} for the pseudo-code). However, since SEDD is conditioned on time, it does not support this caching strategy for reducing NFEs.


The NFEs with the caching strategy is a random variable. To quantify it, we calculate the expected NFEs (E-NFEs) in analytic form, conditioned on the sampling method, time steps, and noise schedule. For instance, using the Tweedie $\tau$-leaping method with a log-linear noise schedule~\citep{lou2024discrete}, the E-NFEs can be expressed by sampling steps $n$ and generating length $l$ 
\footnote{\revise{A similar analysis was presented in \citet{chen2024fastsamplingdiscretenonmarkov} for their discrete-time sampler, although it is based on different assumptions and applicable to different scenarios. A detailed comparison is provided in \cref{sec:comparison_dndm}.} }
(proof in \cref{subsec:E_nfe}):
\begin{equation}
\label{eq:Tweedie_E_NFE}
    \ENFE(n) =  n(1 - (1 - \frac{1}{n})^l),
\end{equation}

In \cref{fig:E_nfe}, we plot the curve of \cref{eq:Tweedie_E_NFE} in blue, which aligns well with our experiments (red stars). This demonstrates that our method theoretically reduces E-NFEs, particularly at larger sampling steps. This reduction is also supported by \cref{fig:unditional}, where RADD shows faster convergence trends.

Furthermore, based on \cref{thm:AnalyticConcreteScore}, simplified forms of the reverse process for both Euler method and Tweedie $\tau$ -leaping method can be derived, which leads to corresponding analytic forms of E-NFEs given time steps and noise schedule. 
We also prove that these two sampling methods are equivalent under a log-linear noise schedule for absorbing discrete diffusion (see \cref{sec:sample_methods} for more details). 


\subsection{Unifying absorbing discrete diffusion and any-order autoregressive model}
\label{subsec:unify}
Building upon Theorem~\ref{thm:AnalyticConcreteScore}, we further prove the equivalence between absorbing discrete diffusion and any-order 
 autoregressive models introduced in \cref{subsec:aoarm}, as presented in the following theorem.
\begin{restatable}{theorem}{UnifyTwoModels}
\label{thm:equivalent_obj}
The absorbing discrete diffusion objective of \cref{eq:score_entropy} is equivalent to any-order autoregressive objective of \cref{eq:L_AO_def} when the final total noise level $\sigmabar{T} \rightarrow + \infty$. 
\end{restatable}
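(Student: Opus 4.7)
The plan is to reduce both objectives to a common closed form by first reparameterizing the concrete score via Theorem~\ref{thm:AnalyticConcreteScore} and then performing a time change of variables that analytically integrates out the time dependence. Throughout I will work conditionally on a clean sample $\vx_0$ and freely drop $\theta$-independent terms.

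First, I would rewrite the integrand of \eqref{eq:score_entropy}. For the absorbing process, $\mQ_t(\hat{x}_t, x_t) = \sigma(t)$ is nonzero only for single-coordinate unmasking transitions $x_t^i = [\textbf{M}] \to \hat{x}_t^i \neq [\textbf{M}]$, and conditioned on $\vx_0$ the only $\hat{x}_t$ with nonzero ratio $p_{t|0}(\hat{x}_t|\vx_0)/p_{t|0}(x_t|\vx_0)$ is the one with $\hat{x}_t^i = x_0^i$, for which the ratio equals $e^{-\bar{\sigma}(t)}/(1 - e^{-\bar{\sigma}(t)})$ by the same derivation underlying Theorem~\ref{thm:AnalyticConcreteScore}. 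Substituting the reparameterization $s_\theta(\vx_t, t)[i, \hat{x}_t^i] = \frac{e^{-\bar{\sigma}(t)}}{1 - e^{-\bar{\sigma}(t)}} q_\theta(\hat{x}_t^i | \vx_t^{\textrm{UM}})$ and using $\sum_{\hat{x}_t^i \neq [\textbf{M}]} q_\theta(\hat{x}_t^i | \vx_t^{\textrm{UM}}) = 1$, the linear-in-$s_\theta$ term and the $K(\cdot)$ term both become $\theta$-independent and may be discarded. After summing over coordinates this leaves the ``$t$-DCE'' form
\begin{equation*}
\mathcal{L}_{t\text{-DCE}}(\vx_0) = -\int_0^T \frac{\sigma(t)\,e^{-\bar{\sigma}(t)}}{1 - e^{-\bar{\sigma}(t)}} \, \mathbb{E}_{\vx_t \sim p_{t|0}(\cdot|\vx_0)} \sum_{i:\,x_t^i = [\textbf{M}]} \log q_\theta(x_0^i | \vx_t^{\textrm{UM}}) \, dt.
\end{equation*}

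Second, I would change variables via $\lambda = 1 - e^{-\bar{\sigma}(t)}$, under which $\frac{\sigma(t) e^{-\bar{\sigma}(t)}}{1 - e^{-\bar{\sigma}(t)}}\,dt = d\lambda/\lambda$; the hypothesis $\bar{\sigma}(T) \to +\infty$ pushes the upper limit to $\lambda = 1$. Under the absorbing forward process, $p_{t|0}(\cdot | \vx_0)$ becomes a product of Bernoulli-$\lambda$ masks across coordinates, so expanding the expectation as a sum over mask patterns $M \subseteq \{1,\ldots,d\}$ with $|M| = k$ produces a weight $\lambda^k(1-\lambda)^{d-k}$ multiplying $\sum_{i \in M} \log q_\theta(x_0^i | \vx_0^{M^c})$. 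Interchanging sum and integral and applying the Beta identity $\int_0^1 \lambda^{k-1}(1-\lambda)^{d-k}\,d\lambda = (k-1)!(d-k)!/d!$ for $k \geq 1$ (the $k = 0$ term vanishes because its inner sum is empty, neutralizing the apparent singularity at $\lambda = 0$) yields
\begin{equation*}
\mathcal{L}_{t\text{-DCE}}(\vx_0) = -\sum_{k=1}^{d} \frac{(k-1)!(d-k)!}{d!} \sum_{|M|=k} \sum_{i \in M} \log q_\theta(x_0^i | \vx_0^{M^c}).
\end{equation*}

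Third, I would match this to $\mathcal{L}_{AO}$ by expanding the expectation over $\pi \sim U_\pi$ as a uniform sum over the $d!$ orderings. For each triple $(j, S)$ with $j \notin S$ and $|S| = l - 1$, the number of orderings $\pi$ realizing $\pi(l) = j$ and $\pi(<l) = S$ is exactly $(l-1)!(d-l)!$. Reindexing via $k = d - l + 1$ and identifying $S = M^c$ with $j \in M$, the combinatorial coefficients $(l-1)!(d-l)!/d!$ from AO-ARM coincide with $(k-1)!(d-k)!/d!$ from the $t$-DCE expansion term by term. Hence $\mathcal{L}_{t\text{-DCE}}(\vx_0) = -\mathcal{L}_{AO}(\vx_0)$ up to $\theta$-independent constants, which is the claimed equivalence. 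The main obstacle is bookkeeping: confirming that every $\theta$-independent piece may be dropped legitimately (especially the $\log$ of the scaling factor that appears after reparameterization), that Fubini applies at the boundary $\lambda \to 0$ despite the $1/\lambda$ factor, and that the Beta coefficients align exactly with ordering counts after the reindexing $k \leftrightarrow d - l + 1$.
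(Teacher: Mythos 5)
Your proposal is correct and follows essentially the same route as the paper's proof: reduce DSE to a $t$-DCE cross-entropy form via the reparameterization of Theorem~\ref{thm:AnalyticConcreteScore}, change variables to $\lambda = 1-e^{-\bar{\sigma}(t)}$, integrate out $\lambda$ with the Beta identity, and match the resulting $\frac{(k-1)!(d-k)!}{d!}$ coefficients to the ordering counts of the AO-ARM objective. The only substantive difference is that you discard the $\theta$-independent pieces (the linear term, the $K(\cdot)$ term, and the $\log$ of the scaling factor) as irrelevant constants, whereas the paper explicitly verifies they integrate to exactly zero under $\bar{\sigma}(T)\to\infty$ — a check worth completing since the paper relies on exact equality of the four losses for likelihood evaluation, not merely equality up to additive constants.
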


The proof of \cref{thm:equivalent_obj} consists of three key steps, which introduce three different yet equivalent loss functions. Below we briefly present the key ideas and defer the proof in \cref{sec:equivalent_proof}.

In the first step, by removing the terms $\vs_\theta\left(\vx_t, t\right)_{\hat{\vx}_t} $ and $K\left(\frac{p_{t \mid 0}\left({\hat{\vx}_t} \mid \vx_0\right)}{p_{t \mid 0}\left(\vx_t \mid \vx_0\right)}\right)$ in \cref{eq:score_entropy}, we can define 
a simpler loss $\mathcal{L}_{\TDCE}^T$ called $t$-denoising cross-entropy loss (abbr. $\TDCE$), which is equivalent to DSE loss. In the multi-dimensional case, it has the form:
\begin{equation}   
\mathcal{L}_{\TDCE}^T(\vx_0)
\label{eq:tDCE_q}
    =  \int_0^T \mathbb{E}_{ \vx_t \sim p_{t|0}( \vx_t|\vx_0)}
    \left[ \sum_{x_t^i = [\textbf{M}]}
    - \frac{ \sigma(t) e^{- \bar{\sigma}(t)}}{1 - e^{- \bar{\sigma}(t)}}  \log\left(  \frac{e^{- \bar{\sigma}(t)}}{1 - e^{- \bar{\sigma}(t)}} q_\theta(x_0^i|\vx_t^{\textrm{UM}}) \right)
    \right]dt
\end{equation}

We emphasize that \cref{eq:tDCE_q} holds in a nonparametric setting because RADD can be interpreted as a model distribution $q_\theta$ representing the conditional distribution of clean data, which approximates the true distribution $p_0$, as proven in~\cref{thm:AnalyticConcreteScore}.

In the second step, inspired by ~\citet{kingma2023variational}, we change the variable from $t$ to $\lambda(t) =1 - e^{- \bar{\sigma}(t)}$, which represents the probability of a token being masked in $[0,t]$ during the forward process. Thus, $\mathcal{L}^{T}_{\TDCE}(\vx_0)$ can be rewritten as an integral of $\lambda$, defined as $\lambda$-denoising cross-entropy loss (abbr. $\LDCE$):
\begin{equation}
\label{eq:DCEloss_lambda}
    \mathcal{L}_{\LDCE}(\vx_0) := \int_{0}^{1}
    \frac{1}{\lambda}
    \mathbb{E}_{ \vx_\lambda \sim p_\lambda(\vx_\lambda|\vx_0)}
    \left[
    \sum_{x_\lambda^i = [\textbf{M}]}
    -\log q_\theta(\vx_0^i|\vx_\lambda^{\textrm{UM}})
    \right]
    d\lambda,
\end{equation}
where $p_\lambda(\vx_\lambda|\vx_0)$ is the joint distribution induced by masking each dimension in $\vx_0$ independently with a probability $\lambda$.

Finally, we prove that $\LDCE$ loss in \cref{eq:DCEloss_lambda} can be integrated analytically and rewritten as $\mathcal{L}_{AO}$ in  \cref{eq:L_AO_def}. We summarize our proof procedure by the equivalence between these losses:
\begin{equation}
\label{eq:loss_equivalence}
    \mathcal{L}_{\DSE}^{T}(\vx_0) \overset{\textrm{Appendix}~\ref{subsec:DSE/DCE}}{\iff} \mathcal{L}_{\TDCE}^{T}(\vx_0) \overset{\textrm{Appendix}~\ref{subsec:DCEloss_lambda}}{\iff} \mathcal{L}_{\LDCE}(\vx_0) \overset{\textrm{Appendix}~\ref{subsec:DCEloss_k}}{\iff} \mathcal{L}_{AO}(\vx_0).
\end{equation}


A direct benefit from \cref{thm:equivalent_obj} is that we can use an absorbing discrete diffusion model to sample like AO-ARM and vice versa. For training and likelihood evaluation, the four losses in \cref{eq:loss_equivalence} can also be used (see \cref{sec:pseudo} for pseudo-code). To efficiently estimate the four losses using Monte Carlo methods, we can replace the sum or integral with an expectation. Take \cref{eq:DCEloss_lambda} for example, it can be rewritten as the following form of expectation on $\lambda$:
\begin{equation}
        \mathcal{L}_{\LDCE}(\vx_0) =  \mathbb{E}_{\lambda \sim U([0,1])}
    \frac{1}{\lambda}
    \mathbb{E}_{ \vx_\lambda \sim p_\lambda(\vx_\lambda|\vx_0)}
    \left[
    \sum_{x_\lambda^i = [\textbf{M}]}
    -\log q_\theta(\vx_0^i|\vx_\lambda^{\textrm{UM}})
    \right].
\end{equation}

Additionally, \cref{thm:equivalent_obj} provides a new perspective on the DSE loss. While it has been traditionally viewed as an upper bound on the negative log-likelihood for the diffusion model, it can also be interpreted as an expected negative log-likelihood over factorial numbers of orderings for AO-ARM by \cref{eq:L_AO_def}. As discussed in~\citep{UriaML14}, for different orders $\pi$, $q_\theta(x_0;\pi)$ will be inconsistent in general. Despite this inconsistency, it can be viewed as an ensemble of multiple autoregressive models with different orders, potentially more robust than fixed-order models.

\revise{We mention that \citet{HoogeboomARDM22} establish the equivalence between ARMs and the ELBO of absorbing diffusion models. In comparison, built upon our \cref{thm:AnalyticConcreteScore}, we extend existing work by unifying four loss functions in \cref{eq:loss_equivalence}, deepening the understanding of absorbing discrete diffusion. We provide a detailed discussion in \cref{sec:comparison_aoarm} and a systematic empirical study in \cref{subsec:zero_shot}.}

\section{Experiments}
\label{sec:experiments}
We present the experimental setups in \cref{subsec:settings}.
We then evaluate the performance of accelerated generation in \cref{subsec:acc_generation} and zero-shot perplexity on various language datasets in \cref{subsec:zero_shot}.

\subsection{Settings}
\label{subsec:settings}
Below, we briefly present the experimental settings. For more details, please see \cref{sec:experimental_details}.
\paragraph{Model.}
\label{para:model}
We use RADD model $\vc_\theta$ reparameterzied as described in  \cref{subsec:reparameterize_concretescore}.  Compared with SEDD models, RADD models have fewer parameters because the time-condition has been eliminated.  We trained our RADD model $\vc_\theta$ using denoising score entropy, $t$-denoising cross-entropy, $\lambda$-denoising cross-entropy and any-order autoregressive loss, abbreviated as RADD-DSE, RADD-$\TDCE$, RADD-$\LDCE$ and RADD-AO, since all these models share the same architecture as described in \cref{fig:radd_network}. For the SEDD small and medium model, we employed their pre-trained model. When performing text generation tasks, we used RADD-$\LDCE$ medium models. 
\paragraph{Data.}
Following SEDD, we trained on the OpenWebText~\citep{Gokaslan2019OpenWeb} dataset and tested on the LAMBADA, WikiText2, PTB, WikiText103, and One Billion Words datasets~\citep{lambadadataset,merity2016pointer,Marcus1993BuildingAL,chelba2014billion}. For data splits and data processing, we adopted the same settings and techniques as SEDD, which involves packing sentences to generate uniform-length blocks as model input. 

\paragraph{Training setup.}
We used a log-linear noise schedule where the expectation of the number of changed tokens at time $t$ is linear with $t$. 
\revise{Following SEDD, we report results for RADD trained over 400K iterations in \cref{tbl:zero_shot_perplexity_small,tbl:zero_shot_perplexity_medium}. 
To further analyze the convergence and performance trends, we extended the training of RADD-small to 1,000K iterations, detailed in \cref{tbl:add_zero_shot_perplexity} of the appendix.}

\paragraph{Metric.}
Following~\cite {lou2024discrete}, we conduct experiments on unconditional generation and language modeling tasks. 
For language modeling tasks, we report the perplexity calculated on the dataset with different models. 
For generation, we assess sample quality using perplexity (PPL) on unconditional samples measured by an additional larger language model (i.e., GPT-2 large), and sample diversity through unigram entropy~\citep{strudel2022self}.
\subsection{Efficient sampling}
\label{subsec:acc_generation}
\revise{As shown in Fig.1 of \citet{lou2024discrete}, SEDD surpasses AR  in terms of sampling speed.} Therefore, we compare the sample quality between SEDD and our RADD model measured by perplexity. As shown in \cref{fig:unditional}, RADD with the caching strategy is more efficient than SEDD. This improvement is expected because the NFEs is limited by the generating sequence length. We further \revise{conducted batch size ablation and} compare the running time and unigram entropy as detailed in \cref{subsec:further_evaluation}.

As discussed in \cref{subsec:unify}, we can also use RADD as an any-order autoregressive model to generate samples in different orders, 
as detailed in \cref{subsec:further_evaluation}. We present more sampling details in \cref{subsec:unconditional_generation_details}.  and the generated samples in \cref{subsec:additional_samples}.

\subsection{Improved zero-shot perplexity on language modeling}

Following SEDD, we present zero-shot perplexities on the LAMBADA, WikiText2, PTB, WikiText103, and 1 Billion Words datasets~\citep{Radford2019LanguageMA} in \cref{tbl:zero_shot_perplexity_small,tbl:zero_shot_perplexity_medium}
and compare the zero-shot perplexity of our model with other baseline models~\citep{austin2023structured,Gulrajani2023LikelihoodBasedDL,lou2024discrete}. Perplexities of RADD models are calculated based on their corresponding loss (e.g., RADD-$\LDCE$ on $\mathcal{L}_{\LDCE})$, which is valid for likelihood estimation as discussed in \cref{subsec:unify}.  


\label{subsec:zero_shot}
Firstly, we conduct an ablation study of the scaling trick in the middle of the~\cref{tbl:zero_shot_perplexity_small,tbl:zero_shot_perplexity_medium}. For the absorbing diffusion, the perplexity of the scaled version of SEDD outperforms its unscaled version, which matches our theoretical discovery in \cref{thm:AnalyticConcreteScore}. \revise{Secondly, under the same DSE loss and similar parameter counts, we observed that the RADD-DSE model without time-conditioning outperforms the SEDD-Scale model with time-conditioning. This ablation validates our analysis in \cref{subsec:reparameterize_concretescore}, indicating that time-conditioning is unnecessary for absorbing discrete diffusion models.
Additionally, while RADD models trained with four equivalent loss functions achieve similar performance, minor discrepancies persist. These differences stem from variations in gradient estimation on finite data, leading models to converge at distinct local optima despite the theoretical equivalence of their objectives on expectation. 
Overall, all RADD losses outperform SEDD on average across the five datasets, validating our analysis in \cref{subsec:reparameterize_concretescore,subsec:unify}. }





 \begin{table}[t]
  \centering
    \caption{\textbf{Zero-shot language modeling perplexity ($\downarrow$) on five datasets using \textit{small} models.} "SEDD-Unscale" and "SEDD-Scale" refer to the unscaled and scaled versions of the absorbing models, respectively. All SEDD and  RADD models are trained for \textbf{400k} iterations. 
 Results for other diffusion models are based on the upper bound from 
~\citep{austin2023structured,Gulrajani2023LikelihoodBasedDL,lou2024discrete}. For RADD models, the results are calculated based on the corresponding loss.  }
\vspace{.15cm}
  \begin{tabular}{lccccr}
    \toprule
      Method & LAMBADA & WikiText2 & PTB& WikiText103 & 1BW \\ 
    \midrule
       \multicolumn{1}{l}{GPT-2} & \textbf{45.04} & 42.43 & 138.43 & 41.60 & 75.20\\
    \midrule
      \multicolumn{1}{l}{D3PM} & 93.47 & 77.28 & 200.82 & 75.16 & 138.92\\
      \multicolumn{1}{l}{PLAID} & 57.28 & 51.80 & 142.60 & 50.86 & 91.12\\
    \multicolumn{1}{l}{SEDD-Uniform} & 65.40 & 50.27 & 140.12 & 49.60 & 101.37\\ 
     \midrule
       \multicolumn{1}{l}{SEDD-Unscale} & 52.21& 44.75& 130.49& 43.14& 80.70\\
   \multicolumn{1}{l}{SEDD-Scale}& 50.92 & 41.84& 114.24& 40.62&79.29\\
  \midrule
  \multicolumn{1}{l}{RADD-DSE}  & 49.57& 38.83& 111.74&  37.46&\textbf{72.35}\\
   \multicolumn{1}{l}{RADD-$\TDCE$ } & 50.56& 39.02& 109.03& 36.38& 72.60\\
    \multicolumn{1}{l}{RADD-$\LDCE$ } & 51.70& 39.98& \textbf{107.85}& 37.98& 72.99\\
    \multicolumn{1}{l}{RADD-AO } & 50.27& \textbf{38.26}& 110.38& \textbf{35.90}& 74.28\\
\bottomrule
  \end{tabular}
  \label{tbl:zero_shot_perplexity_small}
\end{table}

\begin{table}
  \centering
    \caption{\textbf{Zero-shot language modeling perplexity ($\downarrow$) on five datasets using \textit{medium} models.} "SEDD-Unscale" and "SEDD-Scale" refer to the unscaled and scaled versions of the absorbing models, respectively. All SEDD and RADD models are trained for \textbf{400k} iterations.}
\vspace{.15cm}
  \begin{tabular}{lccccr}
    \toprule
      Method & LAMBADA & WikiText2 & PTB& WikiText103 & 1BW \\ 
    \midrule
       \multicolumn{1}{l}{GPT-2} & \textbf{35.66}& 31.80& 123.14& 31.39& \textbf{55.72}\\
    \midrule
 SEDD-Unscale& 44.60& 34.85& 93.26& 32.97&67.91\\
   \multicolumn{1}{l}{SEDD-Scale}& 42.77& 31.04& 87.12& 29.98&61.19\\
  \midrule
  \multicolumn{1}{l}{RADD-DSE}  & 42.30& \textbf{29.17}& \textbf{75.16}&  \textbf{28.03}&57.45\\
   \multicolumn{1}{l}{RADD-$\TDCE$ } & 43.24& 30.19& 78.77& 29.36& 57.95\\
    \multicolumn{1}{l}{RADD-$\LDCE$ } & 44.10& 30.60& 82.08& 29.29& 60.32\\
    \multicolumn{1}{l}{RADD-AO } & 41.96& 29.96	& 79.06& 28.51& 	57.07\\
\bottomrule
  \end{tabular}
  \label{tbl:zero_shot_perplexity_medium}
\end{table}

\section{Related work}
\label{sec:related_work}

\paragraph{Continouous-state diffusion models for text generation.}
Several works have been proposed to apply continuous diffusion to text~\citep{li2022diffusionlm, dieleman2022continuous, chen2023analog, graves2024bayesian}. \citet{li2022diffusionlm} use an embedding layer to map discrete tokens to a latent space and learn a continuous-state diffusion on it. Bit Diffusion~\citep{chen2023analog} learns a continuous diffusion model to generate binary bits of discrete tokens. However, transforming between these continuous representations and discrete tokens by thresholding may lose information. Bayesian Flow Network~\citep{graves2024bayesian} achieves competitive log-likelihood on character-level language modeling tasks and is proven equivalent to continuous stochastic differential equations trained by denoising score matching~\citep{xue2024unifying}. Such models underperform auto-regressive models on standard text generation tasks.

\paragraph{Discrete-state diffusion models for text generation.} Several discrete-state diffusion models have been proposed~\citep{sohl2015deep, hoogeboom2021argmax, austin2023structured}. D3PM~\citep{austin2023structured} proposed a diffusion framework based on any probability transition matrix and trained with a lower bound of log-likelihood. DiffusionBERT~\citep{he2022diffusionbert} utilizes a pre-trained BERT~\citep{devlin2018bert} as an initialization of diffusion. Furthermore,~\citet{campbell2022continuous} generalizes the framework to continuous time by introducing a rate matrix. It is difficult to apply the score matching in such models because the gradient of the data distribution is undefined. Several works try to generalize the score matching on discrete data~\citep{lou2024discrete, meng2023concrete, campbell2022continuous, sun2023scorebased,campbell2024generativeflowsdiscretestatespaces}.~\citet{meng2023concrete} introduce the concrete score and the denoising concrete score matching loss. Furthermore, SEDD bridges the discrete state diffusion and the concrete score by introducing a denoising score entropy loss~\citep{lou2024discrete}. By incorporating an absorbing process, SEDD achieves competitive performance with the auto-regressive models, especially, GPT-2. \revise{Motivated by the success of absorbing discrete diffusion, RADD is specifically designed for this class of models. Due to fundamental differences in score formulations, it can not directly apply to other models like multinomial diffusion. \revise{\citet{campbell2024generativeflowsdiscretestatespaces} proposed discrete flow models. } \citet{chen2024fastsamplingdiscretenonmarkov}  proposed a discrete non-Markov diffusion model to accelerate sampling, which has some connections to our cache-based acceleration in \cref{subsec:Efficient_sampling}. A detailed comparison can be found in \cref{sec:comparison_dndm}.  }

\paragraph{Concurrent works} We mention that \citet{shi2024simplifiedgeneralizedmaskeddiffusion} and \citet{sahoo2024simpleeffectivemaskeddiffusion} independently conducted related studies on absorbing discrete diffusion. We provide a detailed discussion here. 

\citet{shi2024simplifiedgeneralizedmaskeddiffusion} derived a weighted integral of cross-entropy loss in their Eq.(5) similar to our $\TDCE$ loss in \cref{eq:tDCE_q}. Besides, their Proposition 1, which connects the score parameterization and the mean parameterization\footnote{Our conclusions are based on score parameterization but can be extended to mean prediction parameterization (please see \cref{sec:mean_pm}).}, also resembles our \cref{thm:AnalyticConcreteScore}. In comparison, we simplified the conditional expectation term (related to $t$) in Proposition 1~\citep{shi2024simplifiedgeneralizedmaskeddiffusion} to a time-independent conditional probability at time zero. Motivated by the finding, we proposed a simpler parameterization that enables fast sampling. It is worth noting that the hyperparameters they selected significantly contribute to the model's performance, which also applies to our RADD models (see \cref{subsec:training_details} for details). In addition, \citet{shi2024simplifiedgeneralizedmaskeddiffusion} proposed a generalized masked diffusion model allowing state-dependent masking schedules.

\citet{sahoo2024simpleeffectivemaskeddiffusion} derive the same cross-entropy losses with \citet{shi2024simplifiedgeneralizedmaskeddiffusion}. 
Despite lacking a theoretical foundation, they conducted time-conditioning ablation which shows that time-conditioning has minimal impact on perplexity. Notably, their method for removing time conditioning retained the same network structure (e.g., keeping adaptive layer normalization) while setting the time input to zero uniformly. In contrast, our approach removes the network structure related to time inputs entirely, eliminating the need for time input and thereby simplifying the network design. They also proposed a caching strategy to accelerate sampling. While this coincides with our work in \cref{subsec:Efficient_sampling}, we present a complete theoretical analysis of $\ENFE$ to quantify the acceleration efficiency. 

\revise{\textbf{Our unique contribution lies in the decomposition of the concrete score and time-independent parameterization}, serveing as the foundation for subsequent contributions in \cref{subsec:Efficient_sampling,subsec:unify}.}


\section{Conclusion} 
\label{sec:conclusion}

We introduce RADD, a dedicated discrete diffusion model that characterizes the time-independent conditional probabilities, built upon a new factorization form of the concrete score. RADD is more efficient by reducing the NFEs with a cache strategy while maintaining comparable performance to strong baselines. Additionally, we demonstrated the unification of training objectives for absorbing discrete diffusion and AO-ARMs.  On five zero-shot language modeling benchmarks, our RADD models achieve state-of-the-art performance at the GPT-2 scale.

\paragraph{Limitaition.} 
Our model has been trained and evaluated primarily on the GPT-2 scale. For broader applicability, it is essential to explore the effects of scaling on the performance~\citep{Hoffmann2022TrainingCL}, which is left as future work. The success of diffusion transformers on images~\citep{bao2023all,peebles2023scalable,bao2023one} and videos~\citep{bao2024vidu} suggests that diffusion models can be scaled up by incorporating transformers.

Another limitation is that our model can only generate full-length outputs, unlike auto-regressive models that can produce variable-length outputs. This restricts the flexibility of our model in certain applications. We leave the investigation on this issue as future work.

\section*{Acknowledgments}

This work was supported by the National Natural Science Foundation of China (No. 92470118); the Beijing Nova Program (No. 20220484044); Beijing Natural Science Foundation (No. L247030).

We thank Aaron Lou for his prompt and detailed responses to our inquiries, which greatly assisted our research. We also thank Zebin You for his support in setting up the coding environment.

\textbf{Ethics statement.} For the current theoretical and experimental scope of this paper, we have not found any direct social impacts. However, considering future developments, the paper potentially contributes to the next-generation large language models. In this context, this work could significantly reduce the inference cost of language models but may also lead to hallucinations, amplify biases and discrimination in the data, and pose risks of misuse. As with other generative models, addressing these issues requires further advancements in the field.

\textbf{Reproducibility statement} 
We have open-sourced our code in \url{https://github.com/ML-GSAI/RADD}. For detailed instructions on environment setup and running scripts, please refer to the \texttt{README.md} file. Comprehensive explanations and proofs of our theoretical claims can be found in \cref{sec:proof_analytic_ratio,sec:equivalent_proof}.


\bibliography{iclr2025_conference}

@inproceedings{campbell2022continuous,
  title={A Continuous Time Framework for Discrete Denoising Models},
  author={Andrew Campbell and Joe Benton and Valentin De Bortoli and Tom Rainforth and George Deligiannidis and A. Doucet},
  booktitle={Advances in Neural Information Processing Systems},
    year={2022},
}

@inproceedings{peebles2023scalable,
  title={Scalable diffusion models with transformers},
  author={Peebles, William and Xie, Saining},
  booktitle={Proceedings of the IEEE/CVF International Conference on Computer Vision},
  pages={4195--4205},
  year={2023}
}

@inproceedings{bao2023all,
  title={All are worth words: A vit backbone for diffusion models},
  author={Bao, Fan and Nie, Shen and Xue, Kaiwen and Cao, Yue and Li, Chongxuan and Su, Hang and Zhu, Jun},
  booktitle={Proceedings of the IEEE/CVF Conference on Computer Vision and Pattern Recognition},
  pages={22669--22679},
  year={2023}
}

@article{bao2024vidu,
  title={Vidu: a Highly Consistent, Dynamic and Skilled Text-to-Video Generator with Diffusion Models},
  author={Bao, Fan and Xiang, Chendong and Yue, Gang and He, Guande and Zhu, Hongzhou and Zheng, Kaiwen and Zhao, Min and Liu, Shilong and Wang, Yaole and Zhu, Jun},
  journal={arXiv preprint arXiv:2405.04233},
  year={2024}
}

@inproceedings{bao2023one,
  title={One Transformer Fits All Distributions in Multi-Modal Diffusion at Scale},
  author={Fan Bao and Shen Nie and Kaiwen Xue and Chongxuan Li and Shiliang Pu and Yaole Wang and Gang Yue and Yue Cao and Hang Su and Jun Zhu},
  booktitle={International Conference on Machine Learning},
  year={2023},
}

@misc{lou2024discrete,
      title={Discrete Diffusion Modeling by Estimating the Ratios of the Data Distribution}, 
      author={Aaron Lou and Chenlin Meng and Stefano Ermon},
      year={2024},
      eprint={2310.16834},
      archivePrefix={arXiv},
      primaryClass={stat.ML}
}

@inproceedings{Vaswani2017AttentionIA,
  title={Attention is All you Need},
  author={Ashish Vaswani and Noam M. Shazeer and Niki Parmar and Jakob Uszkoreit and Llion Jones and Aidan N. Gomez and Lukasz Kaiser and Illia Polosukhin},
  booktitle={Neural Information Processing Systems},
  year={2017},
}

@inproceedings{
    austin2023structured,
    title={Structured Denoising Diffusion Models in Discrete State-Spaces},
    author={Jacob Austin and Daniel D. Johnson and Jonathan Ho and Daniel Tarlow and Rianne van den Berg},
    booktitle={Advances in Neural Information Processing Systems},
    year={2021},
}

@inproceedings{Peebles2022ScalableDM,
  title={Scalable Diffusion Models with Transformers},
  author={William S. Peebles and Saining Xie},
  booktitle={International Conference on Computer Vision},
  year={2023},
}

@inproceedings{Devlin2019BERTPO,
    title = "{BERT}: Pre-training of Deep Bidirectional Transformers for Language Understanding",
    author = "Devlin, Jacob  and
      Chang, Ming-Wei  and
      Lee, Kenton  and
      Toutanova, Kristina",
    booktitle = "Proceedings of the 2019 Conference of the North {A}merican Chapter of the Association for Computational Linguistics: Human Language Technologies, Volume 1 (Long and Short Papers)",
    publisher = "Association for Computational Linguistics",
    year = {2019},
}

@article{Su2021RoFormerET,
  title={RoFormer: Enhanced Transformer with Rotary Position Embedding},
  author={Jianlin Su and Yu Lu and Shengfeng Pan and Bo Wen and Yunfeng Liu},
  journal = {Neurocomputing},
  year={2021},
}

@misc{touvron2023llama,
      title={LLaMA: Open and Efficient Foundation Language Models}, 
      author={Hugo Touvron and Thibaut Lavril and Gautier Izacard and Xavier Martinet and Marie-Anne Lachaux and Timothée Lacroix and Baptiste Rozière and Naman Goyal and Eric Hambro and Faisal Azhar and Aurelien Rodriguez and Armand Joulin and Edouard Grave and Guillaume Lample},
      year={2023},
      eprint={2302.13971},
      archivePrefix={arXiv},
      primaryClass={cs.CL}
}

@article{Hoffmann2022TrainingCL,
  title={Training Compute-Optimal Large Language Models},
  author={Jordan Hoffmann and Sebastian Borgeaud and Arthur Mensch and Elena Buchatskaya and Trevor Cai and Eliza Rutherford and Diego de Las Casas and Lisa Anne Hendricks and Johannes Welbl and Aidan Clark and Tom Hennigan and Eric Noland and Katie Millican and George van den Driessche and Bogdan Damoc and Aurelia Guy and Simon Osindero and Karen Simonyan and Erich Elsen and Jack W. Rae and Oriol Vinyals and L. Sifre},
  journal={ArXiv},
  year={2022},
  volume={abs/2203.15556},
}

@misc{meng2023concrete,
      title={Concrete Score Matching: Generalized Score Matching for Discrete Data}, 
      author={Chenlin Meng and Kristy Choi and Jiaming Song and Stefano Ermon},
      year={2023},
      eprint={2211.00802},
      archivePrefix={arXiv},
      primaryClass={cs.LG}
}

@article{strudel2022self,
  title={Self-conditioned Embedding Diffusion for Text Generation},
  author={Strudel, Robin and Tallec, Corentin and Altch{\'e}, Florent and Du, Yilun and Ganin, Yaroslav and Mensch, Arthur and Grathwohl, Will and Savinov, Nikolay and Dieleman, Sander and Sifre, Laurent and others},
  journal={arXiv preprint arXiv:2211.04236},
  year={2022}
}

@misc{zheng2024maskeddiffusionmodelssecretly,
      title={Masked Diffusion Models are Secretly Time-Agnostic Masked Models and Exploit Inaccurate Categorical Sampling}, 
      author={Kaiwen Zheng and Yongxin Chen and Hanzi Mao and Ming-Yu Liu and Jun Zhu and Qinsheng Zhang},
      year={2024},
      eprint={2409.02908},
      archivePrefix={arXiv},
      primaryClass={cs.LG},
      url={https://arxiv.org/abs/2409.02908}, 
}

@misc{graves2024bayesian,
      title={Bayesian Flow Networks}, 
      author={Alex Graves and Rupesh Kumar Srivastava and Timothy Atkinson and Faustino Gomez},
      year={2024},
      eprint={2308.07037},
      archivePrefix={arXiv},
      primaryClass={cs.LG}
}

@misc{li2022diffusionlm,
      title={Diffusion-LM Improves Controllable Text Generation}, 
      author={Xiang Lisa Li and John Thickstun and Ishaan Gulrajani and Percy Liang and Tatsunori B. Hashimoto},
      year={2022},
      eprint={2205.14217},
      archivePrefix={arXiv},
      primaryClass={cs.CL}
}

@misc{dieleman2022continuous,
      title={Continuous diffusion for categorical data}, 
      author={Sander Dieleman and Laurent Sartran and Arman Roshannai and Nikolay Savinov and Yaroslav Ganin and Pierre H. Richemond and Arnaud Doucet and Robin Strudel and Chris Dyer and Conor Durkan and Curtis Hawthorne and Rémi Leblond and Will Grathwohl and Jonas Adler},
      year={2022},
      eprint={2211.15089},
      archivePrefix={arXiv},
      primaryClass={cs.CL}
}

@misc{chen2023analog,
      title={Analog Bits: Generating Discrete Data using Diffusion Models with Self-Conditioning}, 
      author={Ting Chen and Ruixiang Zhang and Geoffrey Hinton},
      year={2023},
      eprint={2208.04202},
      archivePrefix={arXiv},
      primaryClass={cs.CV}
}

@article{achiam2023gpt,
  title={Gpt-4 technical report},
  author={Achiam, Josh and Adler, Steven and Agarwal, Sandhini and Ahmad, Lama and Akkaya, Ilge and Aleman, Florencia Leoni and Almeida, Diogo and Altenschmidt, Janko and Altman, Sam and Anadkat, Shyamal and others},
  journal={arXiv preprint arXiv:2303.08774},
  year={2023}
}

@article{anil2023palm,
  title={Palm 2 technical report},
  author={Anil, Rohan and Dai, Andrew M and Firat, Orhan and Johnson, Melvin and Lepikhin, Dmitry and Passos, Alexandre and Shakeri, Siamak and Taropa, Emanuel and Bailey, Paige and Chen, Zhifeng and others},
  journal={arXiv preprint arXiv:2305.10403},
  year={2023}
}

@article{ho2020denoising,
  title={Denoising diffusion probabilistic models},
  author={Ho, Jonathan and Jain, Ajay and Abbeel, Pieter},
  journal={Advances in neural information processing systems},
  volume={33},
  pages={6840--6851},
  year={2020}
}

@inproceedings{song2020score,
    title={Score-Based Generative Modeling through Stochastic Differential Equations},
    author={Yang Song and Jascha Sohl-Dickstein and Diederik P Kingma and Abhishek Kumar and Stefano Ermon and Ben Poole},
    booktitle={International Conference on Learning Representations},
    year={2021},
}

@article{hoogeboom2021argmax,
  title={Argmax flows and multinomial diffusion: Learning categorical distributions},
  author={Hoogeboom, Emiel and Nielsen, Didrik and Jaini, Priyank and Forr{\'e}, Patrick and Welling, Max},
  journal={Advances in Neural Information Processing Systems},
  volume={34},
  pages={12454--12465},
  year={2021}
}

@inproceedings{HoogeboomARDM22,
  author    = {Emiel Hoogeboom and
               Alexey A. Gritsenko and
               Jasmijn Bastings and
               Ben Poole and
               Rianne van den Berg and
               Tim Salimans},
  title     = {Autoregressive Diffusion Models},
  booktitle = {10th International Conference on Learning Representations},
  year      = {2022},
}

@inproceedings{UriaML14,
  author    = {Benigno Uria and
               Iain Murray and
               Hugo Larochelle},
  title     = {A Deep and Tractable Density Estimator},
  booktitle = {Proceedings of the 31th International Conference on Machine Learning},
  year      = {2014}
}

@inproceedings{kingma2023variational,
 author = {Kingma, Diederik and Salimans, Tim and Poole, Ben and Ho, Jonathan},
 booktitle = {Advances in Neural Information Processing Systems},
 editor = {M. Ranzato and A. Beygelzimer and Y. Dauphin and P.S. Liang and J. Wortman Vaughan},
 pages = {21696--21707},
 publisher = {Curran Associates, Inc.},
 title = {Variational Diffusion Models},
 volume = {34},
 year = {2021}
}

@inproceedings{Shih2022TrainingAI,
  title={Training and Inference on Any-Order Autoregressive Models the Right Way},
  author={Andy Shih and Dorsa Sadigh and Stefano Ermon},
  booktitle = {Proceedings of the 31th International Conference on Machine Learning},
  year={2022}
}

@inproceedings{sohl2015deep,
  title={Deep unsupervised learning using nonequilibrium thermodynamics},
  author={Sohl-Dickstein, Jascha and Weiss, Eric and Maheswaranathan, Niru and Ganguli, Surya},
  booktitle={International conference on machine learning},
  pages={2256--2265},
  year={2015},
  organization={PMLR}
}

@book{anderson2012continuous,
  title={Continuous-time Markov chains: An applications-oriented approach},
  author={Anderson, William J},
  year={2012},
  publisher={Springer Science \& Business Media}
}

@article{Kelly1980ReversibilityAS,
author = {Frank Kelly},
year = {1981},
month = {06},
pages = {},
title = {Reversibility and stochastic networks / F.P. Kelly},
volume = {76},
journal = {SERBIULA (sistema Librum 2.0)},
doi = {10.2307/2287860}
}

@inproceedings{
    Sun2022ScorebasedCD,
    title={Score-based Continuous-time Discrete Diffusion Models},
    author={Haoran Sun and Lijun Yu and Bo Dai and Dale Schuurmans and Hanjun Dai},
    booktitle={The Eleventh International Conference on Learning Representations },
    year={2023},
}

@misc{Gokaslan2019OpenWeb,  
    title={OpenWebText Corpus},
    author={Aaron Gokaslan and Vanya Cohen},
    howpublished={\url{http://Skylion007.github.io/OpenWebTextCorpus}}, 
    year={2019}
}

@InProceedings{lambadadataset,
  author    = {Paperno, Denis  and  Kruszewski, Germ\'{a}n  and  Lazaridou,
Angeliki  and  Pham, Ngoc Quan  and  Bernardi, Raffaella  and  Pezzelle,
Sandro  and  Baroni, Marco  and  Boleda, Gemma  and  Fernandez, Raquel},
  title     = {The {LAMBADA} dataset: Word prediction requiring a broad
discourse context},
  booktitle = {Proceedings of the 54th Annual Meeting of the Association for
Computational Linguistics (Volume 1: Long Papers)},
  month     = {August},
  year      = {2016},
  address   = {Berlin, Germany},
  publisher = {Association for Computational Linguistics},
  pages     = {1525--1534},
  url       = {http://www.aclweb.org/anthology/P16-1144}
}

@inproceedings{merity2016pointer,
    title={Pointer Sentinel Mixture Models},
    author={Stephen Merity and Caiming Xiong and James Bradbury and Richard Socher},
    booktitle={International Conference on Learning Representations},
    year={2016},
}

@article{Marcus1993BuildingAL,
  title={Building a Large Annotated Corpus of English: The Penn Treebank},
  author={Mitchell P. Marcus and Beatrice Santorini and Mary Ann Marcinkiewicz},
  journal={Comput. Linguistics},
  year={1993},
  volume={19},
  pages={313-330},
}

@inproceedings{chelba2014billion,
  title={One billion word benchmark for measuring progress in statistical language modeling},
  author={Ciprian Chelba and Tomas Mikolov and Mike Schuster and Qi Ge and T. Brants and Phillip Todd Koehn and Tony Robinson},
  booktitle={Interspeech},
  year={2013},
}

@inproceedings{Gulrajani2023LikelihoodBasedDL,
  title={Likelihood-Based Diffusion Language Models},
  author={Ishaan Gulrajani and Tatsunori Hashimoto},
  booktitle={Advances in Neural Information Processing Systems},
  year={2023},
}

@misc{sun2023scorebased,
      title={Score-based Continuous-time Discrete Diffusion Models}, 
      author={Haoran Sun and Lijun Yu and Bo Dai and Dale Schuurmans and Hanjun Dai},
      year={2023},
      eprint={2211.16750},
      archivePrefix={arXiv},
      primaryClass={cs.LG}
}

@misc{xue2024unifying,
      title={Unifying Bayesian Flow Networks and Diffusion Models through Stochastic Differential Equations}, 
      author={Kaiwen Xue and Yuhao Zhou and Shen Nie and Xu Min and Xiaolu Zhang and Jun Zhou and Chongxuan Li},
      year={2024},
      eprint={2404.15766},
      archivePrefix={arXiv},
      primaryClass={cs.LG}
}

@article{Radford2019LanguageMA,
  title={Language models are unsupervised multitask learners},
  author={Radford, Alec and Wu, Jeffrey and Child, Rewon and Luan, David and Amodei, Dario and Sutskever, Ilya and others},
  journal={OpenAI blog},
  volume={1},
  number={8},
  pages={9},
  year={2019}
}

@article{he2022diffusionbert,
  title={Diffusionbert: Improving generative masked language models with diffusion models},
  author={He, Zhengfu and Sun, Tianxiang and Wang, Kuanning and Huang, Xuanjing and Qiu, Xipeng},
  journal={arXiv preprint arXiv:2211.15029},
  year={2022}
}

@inproceedings{devlin2018bert,
    title = "{BERT}: Pre-training of Deep Bidirectional Transformers for Language Understanding",
    author = "Devlin, Jacob  and
      Chang, Ming-Wei  and
      Lee, Kenton  and
      Toutanova, Kristina",
    booktitle = "Proceedings of the 2019 Conference of the North {A}merican Chapter of the Association for Computational Linguistics: Human Language Technologies, Volume 1 (Long and Short Papers)",
    publisher = "Association for Computational Linguistics",
    year = {2019},
}

@article{radford2018improving,
  title={Improving language understanding by generative pre-training},
  author={Radford, Alec and Narasimhan, Karthik and Salimans, Tim and Sutskever, Ilya and others},
  year={2018},
  journal={OpenAI blog},
}

@article{radford2019language,
  title={Language models are unsupervised multitask learners},
  author={Radford, Alec and Wu, Jeffrey and Child, Rewon and Luan, David and Amodei, Dario and Sutskever, Ilya and others},
  journal={OpenAI blog},
  volume={1},
  number={8},
  pages={9},
  year={2019}
}

@article{brown2020language,
  title={Language models are few-shot learners},
  author={Brown, Tom and Mann, Benjamin and Ryder, Nick and Subbiah, Melanie and Kaplan, Jared D and Dhariwal, Prafulla and Neelakantan, Arvind and Shyam, Pranav and Sastry, Girish and Askell, Amanda and others},
  journal={Advances in neural information processing systems},
  volume={33},
  pages={1877--1901},
  year={2020}
}

@article{vaswani2017attention,
  title={Attention is all you need},
  author={Vaswani, Ashish and Shazeer, Noam and Parmar, Niki and Uszkoreit, Jakob and Jones, Llion and Gomez, Aidan N and Kaiser, {\L}ukasz and Polosukhin, Illia},
  journal={Advances in neural information processing systems},
  volume={30},
  year={2017}
}

@article{berglund2023reversal,
  title={The Reversal Curse: LLMs trained on" A is B" fail to learn" B is A"},
  author={Berglund, Lukas and Tong, Meg and Kaufmann, Max and Balesni, Mikita and Stickland, Asa Cooper and Korbak, Tomasz and Evans, Owain},
  journal={arXiv preprint arXiv:2309.12288},
  year={2023}
}

@article{chatgpt,
  title={{ChatGPT: Optimizing Language Models for Dialogue}},
  author={OpenAI},
  url={https://openai.com/blog/chatgpt/},
  journal={OpenAI blog},
  month={November},
  year={2022}
}

@inproceedings{song2020denoising,
  title     = {Denoising Diffusion Implicit Models},
  author    = {Jiaming Song and Chenlin Meng and Stefano Ermon},
  booktitle = {International Conference on Learning Representations},
  year      = {2021},
}

@inproceedings{bao2022analytic,
  title     = {Analytic-{DPM}: An Analytic Estimate of the Optimal Reverse Variance in Diffusion Probabilistic Models},
  author    = {Fan Bao and Chongxuan Li and Jun Zhu and Bo Zhang},
  booktitle = {International Conference on Learning Representations},
  year      = {2022},
}

@article{lu2022dpm,
  title={Dpm-solver: A fast ode solver for diffusion probabilistic model sampling in around 10 steps},
  author={Lu, Cheng and Zhou, Yuhao and Bao, Fan and Chen, Jianfei and Li, Chongxuan and Zhu, Jun},
  journal={Advances in Neural Information Processing Systems},
  volume={35},
  pages={5775--5787},
  year={2022}
}

@article{lu2022dpmplus,
  title={Dpm-solver++: Fast solver for guided sampling of diffusion probabilistic models},
  author={Lu, Cheng and Zhou, Yuhao and Bao, Fan and Chen, Jianfei and Li, Chongxuan and Zhu, Jun},
  journal={arXiv preprint arXiv:2211.01095},
  year={2022}
}

@inproceedings{zhang2022fast,
        title={Fast Sampling of Diffusion Models with Exponential Integrator},
        author={Zhang, Qinsheng and Chen, Yongxin},
        booktitle={International Conference on Learning Representations},
        year={2023},
}

@article{lv2023we,
  title={Are we falling in a middle-intelligence trap? an analysis and mitigation of the reversal curse},
  author={Lv, Ang and Zhang, Kaiyi and Xie, Shufang and Tu, Quan and Chen, Yuhan and Wen, Ji-Rong and Yan, Rui},
  journal={arXiv preprint arXiv:2311.07468},
  year={2023}
}

@misc{chen2024fastsamplingdiscretenonmarkov,
      title={Fast Sampling via Discrete Non-Markov Diffusion Models}, 
      author={Zixiang Chen and Huizhuo Yuan and Yongqian Li and Yiwen Kou and Junkai Zhang and Quanquan Gu},
      year={2024},
      eprint={2312.09193},
      archivePrefix={arXiv},
      primaryClass={cs.LG},
}

@misc{shi2024simplifiedgeneralizedmaskeddiffusion,
      title={Simplified and Generalized Masked Diffusion for Discrete Data}, 
      author={Jiaxin Shi and Kehang Han and Zhe Wang and Arnaud Doucet and Michalis K. Titsias},
      year={2024},
      eprint={2406.04329},
      archivePrefix={arXiv},
      primaryClass={cs.LG},
}

@misc{sahoo2024simpleeffectivemaskeddiffusion,
      title={Simple and Effective Masked Diffusion Language Models}, 
      author={Subham Sekhar Sahoo and Marianne Arriola and Yair Schiff and Aaron Gokaslan and Edgar Marroquin and Justin T Chiu and Alexander Rush and Volodymyr Kuleshov},
      year={2024},
      eprint={2406.07524},
      archivePrefix={arXiv},
      primaryClass={cs.CL},
}

@misc{campbell2024generativeflowsdiscretestatespaces,
      title={Generative Flows on Discrete State-Spaces: Enabling Multimodal Flows with Applications to Protein Co-Design}, 
      author={Andrew Campbell and Jason Yim and Regina Barzilay and Tom Rainforth and Tommi Jaakkola},
      year={2024},
      eprint={2402.04997},
      archivePrefix={arXiv},
      primaryClass={stat.ML},
}
\bibliographystyle{iclr2025_conference}

\newpage

\tableofcontents

\appendix

\section{Detailed notations and definitions}
\label{sec:notations}
We introduce the notations used throughout the paper. Let lower, boldface lower and upper case letters represent scalers (e.g., $a$), vectors (e.g., $\va$), and matrices (e.g., $\mA$), respectively. For a vector $\va$, $a^i$ denotes its $i$-th element. For a matrix $\mA$, $\mA(i,j)$ denotes $(i,j)$-th element. For a vector function $\vf$, $\vf(\vx)_{i}$ denotes the $i$-th element of $\vf(\vx)$. Constants and random variables are not distinguished in the notation if there is no confusion. We represent the distributions of the forward and reverse processes by $p$ and $q_\theta$ respectively. The transition probability from time $s$ to time $t$ is denoted by $p_{t|s}(\cdot|\cdot)$, and the probability at time $t$ is denoted by $p_t(\cdot)$. Complete notations and definitions are listed below:

\begin{itemize}
    \item $x$, $\hat{x}$: Scalar variables representing states in a model.
    \item $\mathcal{X}$: A one-dimensional sample space $\{1,\cdots,N\}$.
    \item $\mQ_t$: The transition rate matrix at time $t$.
    \item $p$: The probability of the forward process defined by the transition rate matrix $\mQ_t$. 
    \item $q_\theta$: The probability of the reverse process defined by model $\vc_\theta$. 
    \item $p_{t|s}(\hat{x}|x)$: The transition probability from state $x$ to state $\hat{x}$ from time $s$ to time $t$.
    \item $p_t(x)$: The probability of $x$ at time $t$. 
    \item $\mP_{t|s}$:  The transition probability matrix from time $s$ to time $t$.
    \item $\sigma(t)$: The noise schedule function.
    \item $\tilde{\mQ}_t$: The reverse transition rate matrix at time $t$.
    \item $\vs_\theta(\vx_t,t)_{{\hat{\vx}_t}}$: The corresponding element of  $\vs_\theta(\vx_t,t)$, 
     which approximates 
    $\frac{p_t({\hat{\vx}_t})}{p_t(\vx)}$. 
    \item $[\textbf{M}]$: A special mask token in the absorbing process.
    \item $\mathcal{X}^d$: A multi-dimensional sample space $\{1,\cdots,N\}^d$.
    \item $\vx_t$: A multi-dimensional vector.
    \item $x_t^i$: The $i$-th element of $\vx_t$.
    \item $p_{s|t}^i(\cdot|\vx_t)$: The probability on dimension $i$ from time $s$ to time $t$ conditioned on full vector $\vx_t$. 
    \item $p_{s|t}^{\text{tweedie}}(\cdot|\cdot)$: The transition probability from time $s$ to time $t$ under Tweedie $\tau$-leaping method. 
    \item $p_{s|t}^{\text{euler}}(\cdot|\cdot)$: The transition probability from time $s$ to time $t$ under the Euler method. 
    \item $\Qtok_t$: Transition rate matrix for each dimension of $\vx_t$.
    \item $\vx^{\textrm{UM}}$:  vector consists of all unmasked tokens of $\vx$.
    \item $\vx^{a:b}$: The elements of $\vx$ with indices ranging from  $a$ to $b$. 
    \item $\vc_\theta(\vx_t)$: A network that characterizes the time-independent conditional probabilities in reparameterized absorbing discrete diffusion (RADD).
    \item $d$: Total sequence length or dimension of $\vx$. 
    \item $l$: Generating sequence length.
    \item $\pi$: one permutation, $\pi(l)$ denotes the $l$-th element of permutation $\pi$,  $\pi(<l)$ denotes the elements of permutation $\pi$ with indices less than $l$. 
    \item $U(\cdot)$: Uniform distribution. 
    \item $p_\lambda(\cdot |\vx_0)$: The joint distribution induced by masking each dimension in $\vx_0$ independently with a probability $\lambda$.
    \item $\text{Cat}$: Categorical distribution.
    \item $\NFE$: Number of function evaluations.
    \item $\ENFE$: Expected number of function evaluations.
\end{itemize}

\section{Proof of Theorem~\ref{thm:AnalyticConcreteScore}}
\label{sec:proof_analytic_ratio}

In this section, we provide a detailed proof of \cref{thm:AnalyticConcreteScore}, which is carried out in three key steps. The core idea of the proof involves leveraging the properties of a continuous-time Markov chain with an absorbing state, where the forward diffusion process is independent across different dimensions. This independence simplifies the analysis of both the conditional and joint distributions.

First, we derive the analytic form of the conditional distribution, as stated in \cref{lemma:p_condition}. This can be derived directly from \cref{eq:forward_eq}, but for a better understanding, we provide a more intuitive proof for $\mQ_t =\sigma(t) \Qa$. Second, we extend this analysis to multiple dimensions to obtain the joint distribution, as formalized in \cref{thm:AnalyticP}. Finally, by simply dividing the joint distributions derived in the second step, we decouple the concrete score, thereby completing the proof of \cref{thm:AnalyticConcreteScore}.

\begin{lemma}{(Analytic conditional distribution in absorbing diffusion)}
    \label{lemma:p_condition}
    Suppose $\{X_t\}$ is a continuous time Markov chain with transition rate matrix $\mQ_t = \sigma(t) \Qa$, given the value $x_0$ at time zero , the conditional distribution $p_{t|0}(x_t|x_0)$ has the following analytic form:
    \begin{equation}
        p_{t|0}(x_t|x_0) = \begin{cases}
             e^{- \bar{\sigma}(t)},     &x_t = x_0,\\
             1 - e^{- \bar{\sigma}(t)}, &x_t = [\textbf{M}],\\
             0 ,                        &x_t \neq [\textbf{M}] \ \text{and} \ x_t \neq x_0.
        \end{cases}
    \end{equation}
\end{lemma}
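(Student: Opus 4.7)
The plan is to exploit the particularly simple structure of the absorbing rate matrix $\Qa$ given in \cref{eq:absorb_q}, whose nonzero off-diagonal entries all lie in the last column (every non-mask state can only jump to $[\textbf{M}]$, and $[\textbf{M}]$ itself is absorbing). Since $\mQ_t = \sigma(t)\Qa$ is the product of a scalar schedule and a constant matrix, the rate matrices at different times commute, so Kolmogorov's forward equation \cref{eq:forward_eq} with initial condition $\mP_{0|0} = \mI$ is solved in closed form by $\mP_{t|0} = \exp(\bar{\sigma}(t)\Qa)$. This reduction will be the only ``heavy'' ingredient; everything else is elementary.

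For the more intuitive route hinted at in the text, I would reason directly about the sample paths. Fix a non-mask starting state $x_0$. By inspection of $\Qa$, the only possible transition out of $x_0$ is to $[\textbf{M}]$, with instantaneous rate $\sigma(t)$; once in $[\textbf{M}]$ the chain stays there. Let $f(t) := p_{t|0}(x_0 \mid x_0)$ be the survival probability. Restricting \cref{eq:forward_eq} to the two relevant entries collapses it to the scalar ODE $f'(t) = -\sigma(t)\,f(t)$ with $f(0)=1$, which integrates to $f(t) = e^{-\bar{\sigma}(t)}$. Conservation of probability then forces $p_{t|0}([\textbf{M}] \mid x_0) = 1 - e^{-\bar{\sigma}(t)}$, and the inaccessibility of every other state from $x_0$ gives the third case of the lemma for free.

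As a cross-check (and a second, more algebraic proof), I would compute $\exp(\bar{\sigma}(t)\Qa)$ directly. A one-line matrix multiplication using the column-structure of $\Qa$ shows $\Qa^2 = -\Qa$, and induction then gives $\Qa^k = (-1)^{k-1}\Qa$ for all $k \geq 1$. Substituting into the Taylor series of the matrix exponential collapses it to $\mP_{t|0} = \mI + (1 - e^{-\bar{\sigma}(t)})\Qa$, whose entries, read off row by row, match the three cases of the lemma.

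I do not expect a real obstacle in this proof; the mildest subtlety is verifying that the time-ordered exponential really does collapse to $\exp(\bar{\sigma}(t)\Qa)$ despite the time dependence of $\sigma(t)$, which follows immediately from the fact that $\mQ_t$ and $\mQ_s$ are both scalar multiples of the same matrix $\Qa$ and hence commute.
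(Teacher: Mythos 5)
Your proposal is correct, and both of your routes go through; they differ mildly from the path the paper actually takes. The paper's proof is the hands-on version of your first argument: instead of quoting the solution of the scalar ODE $f'(t)=-\sigma(t)f(t)$, it introduces the holding time $T_h$, partitions $[0,t]$, writes the survival probability as a product of one-step probabilities $1+\sigma(s_{k-1})\Qa(x_0,x_0)(s_k-s_{k-1})+o(s_k-s_{k-1})$ via the Markov property, and passes to the limit of the resulting Riemann sum to obtain $e^{-\bar{\sigma}(t)}$; conservation of probability and inaccessibility of all other states then finish exactly as you describe. Your ODE reduction is legitimate because column $x_0$ of $\Qa$ has the single nonzero entry $\Qa(x_0,x_0)=-1$, so the $(x_0,x_0)$ component of \cref{eq:forward_eq} genuinely decouples. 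Your second, algebraic route is arguably the cleanest of the three: $\Qa^2=-\Qa$ does hold (for a non-mask row the only nonzero entries are the diagonal and the mask column, and the mask row is zero), so $\exp(\bar{\sigma}(t)\Qa)=\mI+(1-e^{-\bar{\sigma}(t)})\Qa$, which yields the entire transition matrix in one stroke rather than entry by entry. The commutation point you flag at the end is already settled in the paper's background discussion following \cref{eq:forward_eq}, since every $\mQ_t=\sigma(t)\Qa$ is a scalar multiple of the same fixed matrix.
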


\begin{proof}
    Given the initial value $x_0 \in \mathcal{X} = \{1, \cdots, N\}$, we have
  \begin{equation}
  \label{eq:holding_time}
      x_t  = \begin{cases}
            x_0, & t<T_h,      \\
            [\textbf{M}], & t \geq T_h.
        \end{cases}
  \end{equation}
        
  Here, $T_h$ represents the holding time before $x_0$ transitions to the absorbing state [\textbf{M}].

    Based on the definition of the $\mQ_t$  in \cref{eq:q_def} and  $\Qa$, the probability of $x_0$ remaining the same after a small time increment $\Delta t$ is
    \begin{equation}
         p_{t + \Delta t|t}(x_0|x_0) = 1 + \sigma(t)\Qa(x_0,x_0) \Delta t + o(\Delta t).
    \end{equation}
    Partitioning the interval $[0, t]$ into $\{ s_k\}_{k = 0}^n$ and utilizing the memoryless property of continuous-time Markov chains, we can express the probability of $x_0$ remaining the same from time 0 to $t$ as a product of probabilities over these small intervals. This gives us:
    \begin{align}
    p_{t|0}(x_0|x_0) & = \prod_{k = 1}^n p_{s_k|s_{k-1}}(x_0|x_0)             \\
                    & = \prod_{k = 1}^n \left( 1 + \sigma(t_{k-1})\Qa(x_0,x_0) (s_k - s_{k-1}) + o(s_k - s_{k-1})\right)           \\
                    & = \exp\left(\sum_{k = 1}^n \ln\left( 1 + \sigma(t_{k-1})\Qa(x_0,x_0) (s_k - s_{k-1})+ o\left(s_k - s_{k-1}\right)\right)\right) \\
                               & = \exp\left(\sum_{k = 1}^n \sigma(t_{k-1})\Qa(x_0,x_0) (s_k - s_{k-1}) + o(s_k - s_{k-1})\right).
                               \label{eq:sum_condition}
    \end{align}
    Let $\max(s_k - s_{k-1}) \to 0$ , the Riemann sum in \cref{eq:sum_condition} equals the following continuous integral:
    \begin{equation}
        p_{t|0}(x_0|x_0) = \exp \left( \int_{0}^{t} \sigma(s)\Qa(x_0,x_0) ds \right) = \exp \left( \Qa(x_0,x_0) \bar{\sigma}(t) \right).
    \end{equation}
    By \cref{eq:absorb_q}, $\Qa(x_0,x_0) = -1$, we have
\begin{align}
        &p_{t|0}(x_0|x_0) = P(T_h > t) = e^{- \bar{\sigma}(t)} \\ 
        &p_{t|0}([\textbf{M}]|x_0) = P(T_h \leq t) = 1 - e^{- \bar{\sigma}(t)} \\
        &p_{t|0}(k|x_0) = 0 \quad \text{if} \ k \neq [\textbf{M}] \ \text{and} \ k \neq x_0 .
\end{align}

Similarly, given value $x_s$ at time $s<t$, the conditional distribution can be expressed as
     \begin{equation}
     \label{eq:ts_condition}
        p_{t|s}(x_t|x_s) = \begin{cases}
             e^{- \left(\bar{\sigma}(t) - \bar{\sigma}(s)\right )},     &x_t = x_s,\\
             1 - e^{- \left(\bar{\sigma}(t) - \bar{\sigma}(s)\right )}, &x_t = [\textbf{M}],\\
             0 ,                        &x_t \neq [\textbf{M}] \ \text{and} \ x_t \neq x_s.
        \end{cases}
    \end{equation}
\end{proof}

\begin{restatable}{proposition}{AnalyticP}
    \label{thm:AnalyticP}
    (Analytic joint distribution in absorbing diffusion)

    Suppose $\{X_t\}$ is a continuous time Markov chain with transition rate matrix $\mQ_t = \sigma(t) \Qa$. For $\vx_t = x_t^1\cdots x_t^d$ with  $d_1$ components as  $[\textbf{M}]$ and $d_2 = d - d_1$ components as unmasked tokens,  $p_t(\vx_t)$ can be expressed as 
    \begin{equation}
        \label{equ:analytic_p}
        p_t(\vx_t) =  [1 - e^{- \bar{\sigma}(t)}]^{d_1}
        [e^{- \bar{\sigma}(t)}]^{d_2}
        p_0(\vx_t^{\textrm{UM}}),
    \end{equation}
    where $\vx_t^{\textrm{UM}}$ is the vector consists of all unmasked tokens of $\vx_t$.
\end{restatable}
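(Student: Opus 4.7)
The plan is to compute $p_t(\vx_t)$ by marginalizing over $\vx_0$ using the forward transition $p_{t|0}(\vx_t \mid \vx_0)$, exploiting the per-dimension independence of the forward process and the single-dimension formula already given in \cref{lemma:p_condition}. Concretely, I would write
\begin{equation*}
p_t(\vx_t) \;=\; \sum_{\vx_0} p_{t|0}(\vx_t \mid \vx_0)\, p_0(\vx_0) \;=\; \sum_{\vx_0} \Bigl(\prod_{i=1}^d p_{t|0}(x_t^i \mid x_0^i)\Bigr)\, p_0(\vx_0),
\end{equation*}
where the factorization uses the independence assumption on the multi-dimensional forward process introduced in \cref{sec:mul_dim}. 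Note that only the \emph{forward kernel} factorizes; the data distribution $p_0(\vx_0)$ is not assumed to factorize, which is exactly why the marginal $p_0(\vx_t^{\textrm{UM}})$ remains in the final expression.

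Next, I would split the index set $\{1,\dots,d\}$ into $\Mc := \{i : x_t^i = [\textbf{M}]\}$ (of size $d_1$) and its complement $\Uc := \{i : x_t^i \neq [\textbf{M}]\}$ (of size $d_2$), and apply \cref{lemma:p_condition} factor-by-factor. For $i \in \Uc$, the lemma gives $p_{t|0}(x_t^i \mid x_0^i) = e^{-\bar\sigma(t)}$ if $x_0^i = x_t^i$ and $0$ otherwise; this forces the sum over $x_0^i$ to collapse to a single term. For $i \in \Mc$, the lemma gives $p_{t|0}([\textbf{M}] \mid x_0^i) = 1 - e^{-\bar\sigma(t)}$ for every value of $x_0^i$, so this factor pulls out of the sum over $x_0^i$ as a constant. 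Collecting these factors yields
\begin{equation*}
p_t(\vx_t) \;=\; [e^{-\bar\sigma(t)}]^{d_2}\,[1 - e^{-\bar\sigma(t)}]^{d_1} \sum_{\{x_0^i\}_{i\in\Mc}} p_0\bigl(\vx_0\bigm|_{\Uc} = \vx_t^{\textrm{UM}},\ \{x_0^i\}_{i\in\Mc}\bigr).
\end{equation*}

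Finally, the remaining sum is exactly the marginalization of $p_0$ over the coordinates in $\Mc$, leaving the joint marginal of $p_0$ on the coordinates in $\Uc$ evaluated at $\vx_t^{\textrm{UM}}$. This gives the claimed identity. I do not foresee a serious obstacle: the only subtle point is book-keeping the two regimes (masked vs.\ unmasked coordinates) and being explicit that independence is used for the forward kernel rather than for $p_0$. A cleaner alternative that I would also mention is to derive the same formula directly from $\mP_{t|0} = \exp(\bar\sigma(t)\Qa)^{\otimes d}$, but the marginalization argument above is more transparent and reuses \cref{lemma:p_condition} verbatim.
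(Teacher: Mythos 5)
Your proposal is correct and follows essentially the same route as the paper's proof: marginalize over $\vx_0$ via the law of total probability, factorize the forward kernel across dimensions, apply \cref{lemma:p_condition} factor by factor so the unmasked coordinates collapse the sum and the masked ones contribute constant factors, and recognize the residual sum as the marginal $p_0(\vx_t^{\textrm{UM}})$. The only cosmetic difference is that you track masked versus unmasked coordinates with explicit index sets while the paper assumes without loss of generality that the masked coordinates come first.
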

\cref{thm:AnalyticP} shows that the joint distribution $p_t(\vx_t)$ can be expressed as the multiplication of two terms. One is an analytic term only depending on time, the other is a  $d_2$ dimensions joint distribution of clean data $p_0(\vx_t^{\textrm{UM}})$ independent of time.

\begin{proof}
    Without loss of generality, let's assume that the preceding $d_1$ terms of $\vx$ are all [\textbf{M}], and the remaining $d_2$ terms are unmasked tokens. That is, $\vx_t = [\textbf{M}]\cdots [\textbf{M}]x_t^{d_1 +1} \cdots x_t^{d}$, and here $x^k$ is an unmasked token in $\mathcal{X}$.
    
Using the law of total probability and Lemma \ref{lemma:p_condition}, along with the assumption of independence between different dimensions of the diffusion process, we can express the joint distribution $p_t([\textbf{M}]\cdots [\textbf{M}]x_t^{d_1 +1} \cdots x_t^{d})$ as a sum over all possible initial states $\vx_0 \in \mathcal{X}^d$:

\begin{align*}
           p_t([\textbf{M}]\cdots [\textbf{M}]x_t^{d_1 +1} \cdots x_t^{d})   
        = & \sum_{\vx_0 \in \mathcal{X}^d} p_{t|0}([\textbf{M}]\cdots [\textbf{M}]x_t^{d_1 +1} \cdots x_t^{d}|\vx_0)p_0(\vx_0)  \\
        = & \sum_{x_0^1\in \mathcal{X}, \cdots , x_0^d\in \mathcal{X}}
        p_{t|0}([\textbf{M}]\cdots [\textbf{M}]x_t^{d_1 +1} \cdots x_t^{d}|x_0^1 \cdots x_0^d)p_0(x_0^1 \cdots x_0^d) \\
        = & \sum_{x_0^1\in \mathcal{X}, \cdots , x_0^d\in \mathcal{X}}
        \prod_{k = 1}^{d_1} p_{t|0}^k([\textbf{M}]|x_0^k)
        \prod_{k = d_1 + 1}^d p_{t|0}^k(x_t^k|x_0^k) p_0(x_0^1 \cdots x_0^d).
\end{align*}

Substituting the analytic forms of $p_{t|0}^k([\textbf{M}]|x_0^k)$ and $p_{t|0}^k(x_t^k|x_0^k)$ from Lemma \ref{lemma:p_condition}, above equations can be further simplified as follows:
\begin{align*}
    &\sum_{x_0^1\in \mathcal{X}, \cdots , x_0^d\in \mathcal{X}}
        \prod_{k = 1}^{d_1} p_{t|0}^k([\textbf{M}]|x_0^k)
        \prod_{k = d_1 + 1}^d p_{t|0}^k(x_t^k|x_0^k) p_0(x_0^1 \cdots x_0^d)\\
   = &\sum_{x_0^1\in \mathcal{X}, \cdots , x_0^{d_1}\in \mathcal{X}}
        \prod_{k = 1}^{d_1}p_{t|0}^k([\textbf{M}]|x_0^k)
        [e^{- \bar{\sigma}(t)}]^{d_2}
        p_0(x_0^1 \cdots x_0^{d_1} x_t^{d_1 + 1} \cdots x_t^{d})                                                  \\
        = & \sum_{x_0^1\in \mathcal{X}, \cdots , x_0^{d_1}\in \mathcal{X}}
        [1 - e^{- \bar{\sigma}(t)}]^{d_1}
        [e^{- \bar{\sigma}(t)}]^{d_2}
        p_0(x_0^1 \cdots x_0^{d_1} x_t^{d_1 + 1} \cdots x_t^{d})                                                  \\
        = & [1 - e^{- \bar{\sigma}(t)}]^{d_1}
            [e^{- \bar{\sigma}(t)}]^{d_2}
        \sum_{x_0^1\in \mathcal{X}, \cdots , x_0^{d_1}\in \mathcal{X}}
       p_0(x_0^1 \cdots x_0^{d_1} x_t^{d_1 + 1} \cdots x_t^{d})                                                  \\
        = & [1 - e^{- \bar{\sigma}(t)}]^{d_1}
            [e^{- \bar{\sigma}(t)}]^{d_2}
        p_0(x_t^{d_1 + 1} \cdots x_t^{d}). 
\end{align*}

By noting that $p_0(x_t^{d_1 + 1} \cdots x_t^{d}) = p_0(\vx_t^{\textrm{UM}})$, in the general case, we have     
\begin{equation*}    
    p_t(\vx_t) =  [1 - e^{- \bar{\sigma}(t)}]^{d_1}    
    [e^{- \bar{\sigma}(t)}]^{d_2}    
    p_0(\vx_t^{\textrm{UM}}),    
\end{equation*}    
which demonstrates that the likelihood of the noisy data $\vx_t$ at time $t$ equals the likelihood of the unmasked part $\vx_t^{\textrm{UM}}$ at time $0$ multiplied by an analytic time-dependent term.     
\end{proof}

\AnalyticRatio*

\begin{proof}
   According to \cref{thm:AnalyticP}, if $x_t^i = [\textbf{M}]$ and $\hat{x}_t^i \neq [\textbf{M}]$,  $\hat{\vx}_t^{\textrm{UM}} =(\vx_t^{\textrm{UM}}, \hat{x}_t^i)$, 
\begin{align*}
    \frac{p_t(\hat{\vx}_t)}{p_t(\vx_t)} = &  \frac{
        [1 - e^{- \bar{\sigma}(t)}]^{d_1 - 1} [e^{- \bar{\sigma}(t)}]^{d_2 + 1}
        p_0(\hat{\vx}_t^{\textrm{UM}})
        }{
         [1 - e^{- \bar{\sigma}(t)}]^{d_1}[e^{- \bar{\sigma}(t)}]^{d_2}
        p_0(\vx_t^{\textrm{UM}})
        }
        \\
        = &  \frac{
        [1 - e^{- \bar{\sigma}(t)}]^{d_1 - 1} [e^{- \bar{\sigma}(t)}]^{d_2 + 1}
        p_0(\vx_t^{\textrm{UM}}, \hat{x}_t^i)
        }{
         [1 - e^{- \bar{\sigma}(t)}]^{d_1}[e^{- \bar{\sigma}(t)}]^{d_2}
        p_0(\vx_t^{\textrm{UM}})
        }
        \\
        = & \frac{e^{- \bar{\sigma}(t)}}{1 - e^{- \bar{\sigma}(t)}}p_0(\hat{x}_t^i|\vx_t^{\textrm{UM}}).
\end{align*}

\end{proof}

\section{Proof of  Theorem~\ref{thm:equivalent_obj}}
\label{sec:equivalent_proof}
\UnifyTwoModels*

Here, the infinity final total noise level guarantees that all tokens will be finally masked with probability one ($1 - e^{-\sigmabar{T}}$). Below we present the detailed proof in three steps.  

\subsection{Equivalence between DSE loss and t-DCE loss}
\label{subsec:DSE/DCE}
For a given noisy input $\vx_t$, as established in \cref{subsec:reparameterize_concretescore}, ${\hat{\vx}_t}$ is valid only when it contains exactly one more unmasked token than $\vx_t$.
In this case, the transition probability $\mQ_t\left( {\hat{\vx}_t},\vx_t\right)$ equals $\sigma(t)$. 
Replace $\vs_\theta(\vx_t)$ with $\frac{e^{- \bar{\sigma}(t)}}{1 - e^{- \bar{\sigma}(t)}} \vc_\theta(\vx_t)$, we can express the DSE loss in the multi-dimensional case as follows:

\begin{align*}
    &\mathcal{L}_{\DSE}^{T}(\vx_0) = \int_0^T \mathbb{E}_{ \vx_t \sim p_{t|0}( \vx_t|\vx_0)}
    \left[ \sum_{x_t^i = [\textbf{M}], j \neq [\textbf{M}]} \sigma(t)
    \left(
    \frac{e^{- \bar{\sigma}(t)}}{1 - e^{- \bar{\sigma}(t)}} \vc_\theta(\vx_t)[i,j]  \right.  \right.  \\
    &\left.\left.- \frac{e^{- \bar{\sigma}(t)}}{1 - e^{- \bar{\sigma}(t)}} \mathbb{I}(x_0^i = j) \log\left(  \frac{e^{- \bar{\sigma}(t)}}{1 - e^{- \bar{\sigma}(t)}} \vc_\theta(\vx_t)[i,j] \right)
    + K\left( \frac{e^{- \bar{\sigma}(t)}}{1 - e^{- \bar{\sigma}(t)}} I(x_0^i = j)\right)
    \right)
    \right]dt \\
        &= \int_0^T \mathbb{E}_{ \vx_t \sim p_{t|0}( \vx_t|\vx_0)}
    \left[ \sum_{x_t^i = [\textbf{M}], j \neq [\textbf{M}]} \sigma(t)
    \left(
    \frac{e^{- \bar{\sigma}(t)}}{1 - e^{- \bar{\sigma}(t)}} \vc_\theta(\vx_t)[i,j] \right) \right] dt\\
    &+ \int_0^T \mathbb{E}_{ \vx_t \sim p_{t|0}( \vx_t|\vx_0)} \left[ \sum_{x_t^i = [\textbf{M}], j \neq [\textbf{M}]}   - \frac{\sigma(t)e^{- \bar{\sigma}(t)}}{1 - e^{- \bar{\sigma}(t)}} \mathbb{I}(x_0^i = j) \log\left(  \frac{e^{- \bar{\sigma}(t)}}{1 - e^{- \bar{\sigma}(t)}} \vc_\theta(\vx_t)[i,j] \right) \right] dt\\
    &+ \int_0^T \mathbb{E}_{ \vx_t \sim p_{t|0}( \vx_t|\vx_0)} \left[ \sum_{x_t^i = [\textbf{M}], j \neq [\textbf{M}]} \sigma(t) K\left( \frac{e^{- \bar{\sigma}(t)}}{1 - e^{- \bar{\sigma}(t)}} I(x_0^i = j)\right) \right] dt .
\end{align*}

We analyze each term in the above equation separately. The first term simplifies due to the property $\sum_{j\neq [\textbf{M}]} \vc_\theta(\vx_t)[i,j] = 1$:
\begin{equation}
    \int_0^T \mathbb{E}_{ \vx_t \sim p_{t|0}( \vx_t|\vx_0)}
    \left[ \sum_{x_t^i = [\textbf{M}]} \sigma(t)
    \frac{e^{- \bar{\sigma}(t)}}{1 - e^{- \bar{\sigma}(t)}}   \right] dt. 
\end{equation}

The third term can be simplified by substituting $K(a) = a\log a - a$ and using $0 \log 0 = 0$:
\begin{equation}
        \int_0^T \mathbb{E}_{ \vx_t \sim p_{t|0}( \vx_t|\vx_0)}
    \left[ \sum_{x_t^i = [\textbf{M}]} \sigma(t)
    \frac{e^{- \bar{\sigma}(t)}}{1 - e^{- \bar{\sigma}(t)}} \left( \log \frac{e^{- \bar{\sigma}(t)}}{1 - e^{- \bar{\sigma}(t)}} -1 \right) \right] dt .
\end{equation}

Combining the first and third terms:
\begin{align}
    \label{eq:first_plus_third_e}
        &\int_0^T \mathbb{E}_{ \vx_t \sim p_{t|0}( \vx_t|\vx_0)}
    \left[ \sum_{x_t^i = [\textbf{M}]} \sigma(t)
    \frac{e^{- \bar{\sigma}(t)}}{1 - e^{- \bar{\sigma}(t)}} \left( \log \frac{e^{- \bar{\sigma}(t)}}{1 - e^{- \bar{\sigma}(t)}} \right) \right] dt \\
    =&\int_0^T  d (1 - e^{- \bar{\sigma}(t)})\sigma(t)
    \frac{e^{- \bar{\sigma}(t)}}{1 - e^{- \bar{\sigma}(t)}} \left( \log \frac{e^{- \bar{\sigma}(t)}}{1 - e^{- \bar{\sigma}(t)}} \right)  dt\\
    =& d\int_0^T \sigma(t) e^{- \bar{\sigma}(t)} \log \frac{e^{- \bar{\sigma}(t)}}{1 - e^{- \bar{\sigma}(t)}} dt .
    \label{eq:first_plus_third}
\end{align}
Introducing a new variable $\lambda(t) =1 - e^{- \bar{\sigma}(t)} $, which represents the probability of a token being masked from $0$ to $t$ in the forward process. As $\bar{\sigma}(t)= \int_0^t \sigma(\tau) d\tau$ and $\bar{\sigma}(T)=\infty$, we have $\lambda(0) = 0$, $\lambda(T) = 1$ and $d\lambda = \sigma(t)e^{- \bar{\sigma}(t)}dt $. Obviously, $\lambda(t)$ is invertible, which allows us to perform a change of variables from $t$ to $\lambda$ and simplifies \cref{eq:first_plus_third} to 
\begin{align}
    & d\int_0^1 \log \frac{1- \lambda}{\lambda} d\lambda= - d \left( 
    \lambda \log \lambda + (1 - \lambda) \log (1 - \lambda) 
    \right)|_{0}^{1} = 0. 
\end{align}

Here we used
$$\lim_{\lambda \to 0} \lambda \log \lambda = \lim_{\lambda \to 1} (1 - \lambda) \log (1 - \lambda) = 0.$$

Thus, the DSE loss reduces to the second term, which we define as the $t$-denoising cross-entropy loss ($\TDCE$):
\begin{align*}
     \mathcal{L}_{\TDCE}^{T}(\vx_0) &=\int_0^T \mathbb{E}_{ \vx_t \sim p_{t|0}( \vx_t|\vx_0)} \left[ \sum_{\substack{x_t^i = [\textbf{M}]\\ j \neq [\textbf{M}]}}  - \frac{ \sigma(t)  e^{- \bar{\sigma}(t)}}{1 - e^{- \bar{\sigma}(t)}} \mathbb{I}(x_0^i = j) \log\left(  \frac{e^{- \bar{\sigma}(t)}\vc_\theta(\vx_t)[i,j]}{1 - e^{- \bar{\sigma}(t)}}  \right)\right] dt\\
     &=\int_0^T \mathbb{E}_{ \vx_t \sim p_{t|0}( \vx_t|\vx_0)} \left[ \sum_{x_t^i = [\textbf{M}]}  - \frac{ \sigma(t)  e^{- \bar{\sigma}(t)}}{1 - e^{- \bar{\sigma}(t)}}  \log\left(  \frac{e^{- \bar{\sigma}(t)}}{1 - e^{- \bar{\sigma}(t)}} \vc_\theta(\vx_t)[i,x_0^i] \right)\right] dt \\
     &= \int_0^T \mathbb{E}_{ \vx_t \sim p_{t|0}( \vx_t|\vx_0)}
    \left[ \sum_{x_t^i = [\textbf{M}]}
    - \frac{ \sigma(t) e^{- \bar{\sigma}(t)}}{1 - e^{- \bar{\sigma}(t)}}  \log\left(  \frac{e^{- \bar{\sigma}(t)}}{1 - e^{- \bar{\sigma}(t)}} q_\theta(x_0^i|\vx_t^{\textrm{UM}}) \right)
    \right]dt.
\end{align*}

\subsection{Equivalence between t-DCE loss and lambda-DCE loss}
\label{subsec:DCEloss_lambda}
Starting from the $\TDCE$ loss in \cref{eq:tDCE_q}, we can perform a change of variable from \(t\) to \(\lambda(t) = 1 - e^{- \bar{\sigma}(t)}\), as demonstrated in \cref{subsec:DSE/DCE}. This allows us to rewrite the $\TDCE$  loss integral in terms of $\lambda$:
\begin{align*}
      & \int_{0}^{1}
    \frac{1}{\lambda}
    \mathbb{E}_{ \vx_\lambda \sim p_\lambda(\vx_\lambda|\vx_0)}
    \left[ \sum_{x_\lambda^i = [\textbf{M}]}
    - \log\left(
        \frac{1 - \lambda}{\lambda}
q_\theta(x_0^i|\vx_\lambda^{\textrm{UM}})
    \right)
    \right]d\lambda                      \\
    = & \int_{0}^{1}
    \frac{1}{\lambda}
    \mathbb{E}_{ \vx_\lambda \sim p_\lambda(\vx_\lambda|\vx_0)}
     \left[ \sum_{x_\lambda^i = [\textbf{M}]}
    - \log 
        q_\theta(x_0^i|\vx_\lambda^{\textrm{UM}}) \right]
    d\lambda  \\-& \int_{0}^{1}
    \frac{1}{\lambda}
    \mathbb{E}_{ \vx_\lambda \sim p_\lambda(\vx_\lambda|\vx_0)} \left[
     \sum_{x_\lambda^i = [\textbf{M}]}
    \log 
        \frac{1 - \lambda}{\lambda}
    \right]
    d\lambda.
\end{align*}
Given the independence of the forward process and \cref{lemma:p_condition}, the original probability $p_{t|0}( \vx_t|\vx_0)$ can be factorized as $ \prod_{i = 1}^d p_{t|0}^i( x_t^i|x_0^i)$, where
\begin{equation}
    p_{t|0}^i( x_t^i|x_0^i) = \begin{cases}
        1 - e^{- \bar{\sigma}(t)}, & x_t^i = [\textbf{M}],\\
        e^{- \bar{\sigma}(t)},     & x_t^i = x_0^i,\\
        0,                         & \text{else}.
    \end{cases}
\end{equation}
Therefore, the induced probability $p_{\lambda}(\vx_\lambda|\vx_0)=\prod_{i = 1}^d p_{\lambda}^i( x_\lambda^i|x_0^i) $
where 
\begin{equation}
    p_{\lambda}^i( x_\lambda^i|x_0^i) = \begin{cases}
        \lambda ,         & x_\lambda^i = [\textbf{M}],\\
        1 - \lambda ,     & x_\lambda^i = x_0^i,\\
        0 ,            & \text{else}.
    \end{cases}
\end{equation}

Next, consider the second term. Similar to  \cref{eq:first_plus_third_e}, we can prove that it equals zero:
\begin{equation}
     \int_{0}^{1}
    \frac{1}{\lambda}
    \mathbb{E}_{ \vx_\lambda \sim p_\lambda(\vx_\lambda|\vx_0)} \left[
     \sum_{x_\lambda^i = [\textbf{M}]}
     \log (
        \frac{1 - \lambda}{\lambda}
        )
    \right] = 0 . 
\end{equation}

Therefore, $\TDCE$ loss is equivalent to the first term, defined as $\lambda$-denoising cross-entropy ($\LDCE$):
\begin{equation}
    \mathcal{L}_{\LDCE}^T(\vx_0) = \int_{0}^{1} \frac{1}{\lambda} \mathbb{E}_{ \vx_\lambda \sim p_\lambda(\vx_\lambda|\vx_0)}
    \left[ \sum_{x_\lambda^i = [\textbf{M}]} - \log q_\theta(x_0^i|\vx_\lambda^{\textrm{UM}}) \right] d\lambda.
\end{equation}

\subsection{Equivalence between lambda-DCE loss and any-order autoregressive loss}
\label{subsec:DCEloss_k}
Based on $\LDCE$ loss in \cref{eq:DCEloss_lambda}, we first define the sample space and analytically express the expectation term.

Given $\vx_0$, we define the sample space of $\vx_\lambda$ as  $\tilde{\mathcal{X}}(\vx_0) := \{x_0^1, [\textbf{M}]\}\times\cdots\{x_0^d, [\textbf{M}]\}$ and $\tilde{\mathcal{X}}_k(\vx_0):=\{ \tilde{\vx}:\tilde{\vx} \in \tilde{\mathcal{X}}(\vx_0) \land \tilde{\vx} \ \text{has exact k dimensions with values }  [\textbf{M}] \}$ . It follows that $|\tilde{\mathcal{X}}(\vx_0)| = 2^d$ and $|\tilde{\mathcal{X}}_k(\vx_0)| = \binom{d}{k}$. Therefore, the sample space  $\tilde{\mathcal{X}}(\vx_0)$ can be decoupled by the number of masked tokens $k$ in $\tilde{\vx}$:
\begin{align}
      & \int_{0}^{1}
    \frac{1}{\lambda}
    \mathbb{E}_{\vx_\lambda \sim p_\lambda(\vx_\lambda|\vx_0)}
    \left[ \sum_{\tilde{x}^i = [\textbf{M}]}
    - \log 
        q_\theta(x_0^i|\vx_\lambda^{\textrm{UM}})
    \right]d\lambda                     \\
    = & \int_{0}^{1}
    \frac{1}{\lambda}
    \sum_{\tilde{x} \in \tilde{\mathcal{X}}(\vx_0)} p_\lambda(\tilde{\vx}|\vx_0)
    \left[ \sum_{\tilde{x}^i = [\textbf{M}]}
    - \log 
        q_\theta(x_0^i|\tilde{\vx}^{\textrm{UM}})
    \right]d\lambda                     \\
    = & \int_{0}^{1}
    \frac{1}{\lambda}
    \sum_{k = 0}^d \sum_{\tilde{\vx} \in \tilde{\mathcal{X}}_k(\vx_0)}
    \lambda^k(1-\lambda)^{d - k}
    \left[ \sum_{\tilde{x}^i = [\textbf{M}]}
    - \log 
        q_\theta(x_0^i|\tilde{\vx}^{\textrm{UM}})
    \right]d\lambda                     \\
    = & \int_{0}^{1}
    \frac{1}{\lambda}
    \sum_{k = 1}^d \sum_{\tilde{\vx} \in \tilde{\mathcal{X}}_k(\vx_0)}
    \lambda^k(1-\lambda)^{d - k}
    \left[ \sum_{\tilde{x}^i = [\textbf{M}]}
    - \log 
        q_\theta(x_0^i|\tilde{\vx}^{\textrm{UM}})
    \right]d\lambda . \label{eq:lambda_k_mix}
\end{align}

The last equation holds because there are no masked tokens when $k = 0$, and the inner sum is zero.

From \cref{eq:lambda_k_mix}, by rearranging the order of summation and integration, we can analytically evaluate the integral $\int_{0}^{1} \lambda^{k-1}(1-\lambda)^{d - k} d\lambda$ using the Beta function, which eliminates $\lambda$:

\begin{align}
      \cref{eq:lambda_k_mix} = & \sum_{k = 1}^d
    \int_{0}^{1}  \lambda^{k-1}(1-\lambda)^{d - k}d\lambda
    \sum_{\tilde{\vx} \in \tilde{\mathcal{X}}_k(\vx_0)}
    \left[ \sum_{\tilde{x}^i = [\textbf{M}]}
    - \log q_\theta(x_0^i|\tilde{\vx}^{\textrm{UM}})
    \right]                             \\
    = & \sum_{k = 1}^d
    \frac{(k-1)!(d-k)!}{d!}
    \sum_{\tilde{\vx} \in \tilde{\mathcal{X}}_k(\vx_0)}
    \left[ \sum_{\tilde{x}^i = [\textbf{M}]}
    - \log 
        q_\theta(x_0^i|\tilde{\vx}^{\textrm{UM}})
    \right]                             \\
    = & \sum_{k = 1}^d \frac{1}{kC_d^k}
    \sum_{\tilde{\vx} \in \tilde{\mathcal{X}}_k(\vx_0)}
    \left[ \sum_{\tilde{x}^i = [\textbf{M}]}
    - \log 
        q_\theta(x_0^i|\tilde{\vx}^{\textrm{UM}})
    \right]. \label{eq:DCEloss_k_sum}
\end{align}

\cref{eq:DCEloss_k_sum} can be reformulated in terms of an expectation over a uniform distribution $U(\tilde{\mathcal{X}}_k(\vx_0))$ as follows:
\begin{align}
       \sum_{k = 1}^d \frac{1}{k}\mathbb{E}_{\tilde{\vx} \sim U(\tilde{\mathcal{X}}_k(\vx_0))}  \left[ \sum_{\tilde{x}^i = [\textbf{M}]}
    \left(
    - \log (
        q_\theta(x_0^i|\tilde{\vx}^{\textrm{UM}})
        )
    \right)
    \right]. \label{eq:DCEloss_k_mc}
\end{align}

Let  \( \pi \) be one permutation of the integers \( 1, \cdots, d \), and $U_\pi$ represent the uniform distribution of all orders. 
We note that \cref{eq:DCEloss_k_mc} is equivalent to the following term from the perspective of any-order autoregressive model:
\begin{align}
 \sum_{k = 1}^d \frac{1}{k} \mathbb{E}_{\pi \sim U_\pi} \sum_{r = d - k + 1}^d -\log q_\theta(x_0^{\pi(r)} | x_0^{\pi(<d - k + 1)};\pi).
\label{eq:ao_loss_parallel_k}
\end{align}

Here, \( x_0^{\pi(<l)} \) denotes the sequence of the first \( l-1 \) elements in the permutation \( \pi \). Given a fixed \( k \), the term \( x_0^{\pi(<d - k + 1)} \) can be interpreted as the unmasked part of the noisy data \(\tilde{\vx}^{\textrm{UM}}\). For \( r = d - k + 1, \cdots, d \), \( x_0^{\pi(r)} \) corresponds to the \( k \) items of the masked part. Since both \(\pi\) and \(\tilde{\vx}\) are both uniformly sampled, \cref{eq:DCEloss_k_mc} and \cref{eq:ao_loss_parallel_k} are equivalent. 

Further, we can make a simple subscription transformation by letting $l=d-k+1$ and change the summation to Monte Carlo estimation on \cref{eq:ao_loss_parallel_k} :
\begin{align}
 d \cdot \mathbb{E}_{l \sim U(1,\cdots,d)}\frac{1}{d - l + 1} \mathbb{E}_{\pi \sim U_\pi} \sum_{r = l}^d  -\log q_\theta(x_0^{\pi(r)} | x_0^{\pi(<l)};\pi).
\label{eq:ao_loss_parallel_l}
\end{align}

In~\citep{UriaML14,HoogeboomARDM22}, it was proved that \cref{eq:ao_loss_parallel_l} is mathematically equivalent to \cref{eq:L_AO_def}. Actually, \cref{eq:ao_loss_parallel_l} is widely used as a training objective for any-order autoregressive models for efficient parallel optimization. 

This concludes our proof of \cref{thm:equivalent_obj}.

\section{Sampling methods}
\label{sec:sample_methods}

In this section, we first derived the exact reverse distribution for absorbing discrete diffusion in \cref{subsec:exact_rev}. This derivation led to simplified forms of the Tweedie $\tau$-leaping and Euler methods, detailed in \cref{subsec:simple_tweedie} and \cref{subsec:simple_euler}, respectively. In \cref{subsec:equivalent_sampler}, we proved the equivalence of these two methods under a log-linear noise schedule. Finally, in \cref{subsec:E_nfe}, we discussed the expected number of function evaluations (E-NFEs) for these methods.

\subsection{Exact reverse distribution in absorbing discrete diffusion}
\label{subsec:exact_rev}
\begin{lemma}{(Analytic reverse distribution in absorbing diffusion)}
    \label{lemma:p_reverse}
     Suppose $\{X_t\}$ is a continuous time Markov chain with transition rate matrix $\mQ_t = \sigma(t) \Qa$. For $\vx_t = x_t^1\cdots x_t^d$ with  $d_1$ masked tokens and $d_2 = d - d_1$ unmasked tokens, and $\vx_s = x_s^1\cdots x_s^d$ with  $d_1 - \Delta d$ masked tokens and $d_2 + \Delta d$ unmasked tokens, the reverse distribution is given by:
     \begin{equation}
         p_{s|t}(\vx_s|\vx_t) = \begin{cases}
             \left[\frac{e^{-\sigmabar{s}} - e^{-\sigmabar{t}}}{1 - e^{-\sigmabar{s}}}\right]^{\Delta d}
             \left[\frac{1 - e^{-\sigmabar{s}}}{1 - e^{-\sigmabar{t}}}\right]^{d_1} \frac{p_0(\vx_s^{\textrm{UM}})}{p_0(\vx_t^{\textrm{UM}})}, & \vx_t \subseteq_{\text{UM}} \vx_s,  \\ 
             0,  &\vx_t \not\subseteq_{\text{UM}} \vx_s,
         \end{cases}
     \end{equation}
     where $\vx_t \subseteq_{\text{UM}} \vx_s$ denotes \(\forall i:\)  \(\vx_t^i \neq [\textbf{M}]\), we have \( \vx_t^i = \vx_s^i\). 
\end{lemma}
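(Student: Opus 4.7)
The plan is to apply Bayes' rule and substitute closed forms that we already have available. Specifically, I would write
\[
p_{s|t}(\vx_s \mid \vx_t) \;=\; \frac{p_{t|s}(\vx_t \mid \vx_s)\, p_s(\vx_s)}{p_t(\vx_t)},
\]
and compute each of the three factors separately. For the marginals $p_s(\vx_s)$ and $p_t(\vx_t)$, I would invoke Proposition~\ref{thm:AnalyticP} (which gives $p_\tau(\vx_\tau) = [1-e^{-\sigmabar{\tau}}]^{m_\tau}[e^{-\sigmabar{\tau}}]^{d - m_\tau} p_0(\vx_\tau^{\textrm{UM}})$ for $m_\tau$ masked coordinates). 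Plugging in $m_t = d_1$ and $m_s = d_1 - \Delta d$ gives both marginals in closed form immediately.

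For the forward transition $p_{t|s}(\vx_t \mid \vx_s)$, I would use the dimension-independence assumption to factor it as a product $\prod_{i=1}^d p_{t|s}^i(x_t^i \mid x_s^i)$, and apply the single-dimension formula from \cref{lemma:p_condition} (see \cref{eq:ts_condition}). The key structural observation is that $[\textbf{M}]$ is absorbing: if $x_s^i = [\textbf{M}]$ then necessarily $x_t^i = [\textbf{M}]$ with probability $1$, and if $x_s^i$ is unmasked then $x_t^i$ must be either $x_s^i$ (with probability $e^{-(\sigmabar{t}-\sigmabar{s})}$) or $[\textbf{M}]$ (with probability $1 - e^{-(\sigmabar{t}-\sigmabar{s})}$). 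This immediately yields the zero case: if $\vx_t \not\subseteq_{\text{UM}} \vx_s$, some coordinate violates one of the admissible transitions and the product is $0$.

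In the nonzero case $\vx_t \subseteq_{\text{UM}} \vx_s$, I would partition the $d$ coordinates into three groups: (i) the $d_1 - \Delta d$ coordinates where both $\vx_s$ and $\vx_t$ are $[\textbf{M}]$, contributing factors of $1$; (ii) the $d_2$ coordinates unmasked in both with equal value, contributing $e^{-(\sigmabar{t}-\sigmabar{s})}$ each; and (iii) the $\Delta d$ coordinates where $\vx_s$ is unmasked but $\vx_t$ is $[\textbf{M}]$, contributing $1 - e^{-(\sigmabar{t}-\sigmabar{s})}$ each. This gives
\[
p_{t|s}(\vx_t \mid \vx_s) = \bigl[1 - e^{-(\sigmabar{t}-\sigmabar{s})}\bigr]^{\Delta d}\bigl[e^{-(\sigmabar{t}-\sigmabar{s})}\bigr]^{d_2}.
\]

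The final step is algebraic simplification. Substituting $a = e^{-\sigmabar{s}}$ and $b = e^{-\sigmabar{t}}$ (so $b<a$), multiplying the three factors and canceling the $b^{d_2}$ and $a^{d_2+\Delta d}$ terms reduces the numerator to $(a-b)^{\Delta d}(1-a)^{d_1 - \Delta d}\, p_0(\vx_s^{\textrm{UM}})$ and the denominator to $(1-b)^{d_1}\, p_0(\vx_t^{\textrm{UM}})$, which regroups exactly as the claimed product of $\bigl[(a-b)/(1-a)\bigr]^{\Delta d}$ and $\bigl[(1-a)/(1-b)\bigr]^{d_1}$ times the ratio of clean-data marginals. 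Nothing in this argument is genuinely hard; the main obstacle is careful bookkeeping of the exponents across the three dimension groups and keeping track of which exponential terms cancel, which I would organize by introducing the shorthand $a,b$ above before starting the multiplication.
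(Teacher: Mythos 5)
Your proposal is correct and follows essentially the same route as the paper: Bayes' rule, the closed-form marginals from \cref{thm:AnalyticP}, and the per-dimension forward transition probabilities from \cref{eq:ts_condition}, followed by cancellation of the exponential factors. The algebra checks out (the $a,b$ substitution is just a bookkeeping convenience the paper omits), so there is nothing to add.
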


\begin{proof}
    Using  Bayes' theorem, $p_{s|t}(\vx_s|\vx_t) = p_{t|s}(\vx_t|\vx_s)\frac{p_s(\vx_s)}{p_t(\vx_t)}$.
    
    From \cref{thm:AnalyticP}:
    \begin{align}
 p_t(\vx_t) &=  [1 - e^{- \bar{\sigma}(t)}]^{d_1}
        [e^{- \bar{\sigma}(t)}]^{d_2}
        p_0(\vx_t^{\textrm{UM}}), \\
         p_s(\vx_s) &=  [1 - e^{- \bar{\sigma}(s)}]^{d_1 - \Delta d}
        [e^{- \bar{\sigma}(s)}]^{d_2 + \Delta d}
        p_0(\vx_s^{\textrm{UM}}).
    \end{align}
    Utilizing  \cref{eq:ts_condition}, we get
    \begin{equation}
        p_{t|s}(\vx_t|\vx_s) = \prod_{i = 1}^d p_{t|s}^i(x_t^i|x_s^i)
        = \begin{cases}
            \left[e^{- \left(\sigmabar{t} - \sigmabar{s}\right)}\right]^{d_2} \left[1- e^{- \left(\sigmabar{t} - \sigmabar{s}\right)}\right]^{\Delta d}, &\vx_t \subseteq_{\text{UM}} \vx_s,\\
            0 ,& \vx_t \not\subseteq_{\text{UM}} \vx_s.
        \end{cases}
    \end{equation}

Simplifying these equations, we can express  $p_{s|t}(\vx_s|\vx_t)$  as 
    \begin{equation}
         p_{s|t}(\vx_s|\vx_t) = \begin{cases}
             \left[\frac{e^{-\sigmabar{s}} - e^{-\sigmabar{t}}}{1 - e^{-\sigmabar{s}}}\right]^{\Delta d}
             \left[\frac{1 - e^{-\sigmabar{s}}}{1 - e^{-\sigmabar{t}}}\right]^{d_1} \frac{p_0(\vx_s^{\textrm{UM}})}{p_0(\vx_t^{\textrm{UM}})}, & \vx_t \subseteq_{\text{UM}} \vx_s,  \\ 
             0,  &\vx_t \not\subseteq_{\text{UM}} \vx_s.
         \end{cases}
     \end{equation}
\end{proof}
It should be noted that when \(\vx_t \subseteq_{\text{UM}} \vx_s\), the ratio \(\frac{p_0(\vx_s^{\textrm{UM}})}{p_0(\vx_t^{\textrm{UM}})}\) can be reformulated as a \(d_1\)-dimensional conditional distribution \(p_0(\vx_s^{\textrm{UM}}|\vx_t^{\textrm{UM}})\) with \(N^{d_1}\) states. This is not accessible using our one-dimensional conditional distribution \(p_0(\hat{x}_t^i|\vx_t^{\textrm{UM}})\) in \cref{thm:AnalyticConcreteScore} if \(d_1 > 1\). Therefore, for efficiency, existing samplers assume that each dimension is independent within a small interval and update each dimension in parallel~\citep{lou2024discrete,campbell2022continuous}.

\subsection{Tweedie $\tau$-leaping method and its simplified form in RADD} 
\label{subsec:simple_tweedie}
Given the vector $\vx_t$, if we sample each  $x_s^i$ independently, the factorization of marginal distribution $p_{s|t}^{\text{tweedie}}$ results in the minimum  KL divergence with true reverse  $p_{s|t}(\vx_s|\vx_t)$ (proof in ~\citep{lou2024discrete}, Appendix A). This assumption formally defines $p_{s|t}^{\text{tweedie}}$ as follows: 
\begin{equation}
    p_{s|t}^{\text{tweedie}}(\vx_s|\vx_t) = \prod_{i=1}^dp_{s|t}^{\text{tweedie},i}(x_s^i|\vx_t)
    = \prod_{i=1}^dp_{s|t}^{i}(x_s^i|\vx_t).
\end{equation}

To sample from $p_{s|t}^{\text{tweedie}}$, we need to derive the analytic form of $p_{s|t}^{i}(x_s^i|\vx_t)$. Without loss of generality, let’s assume that the preceding $d_1$ terms of $\vx_t$ are all $[\textbf{M}]$, and the remaining $d_2$ terms are unmasked tokens. 

For illustration, we can take \(i = 1\) as an example. Let $\tilde{\mathcal{X}}_{k}$ denote the sample space of length $d_1-1$ sequence where each sequence has exact $k$ masked tokens, with $|\tilde{\mathcal{X}}_{k}| = C_{d_1-1}^{k} N^{d_1-1-k}$. When $x_s^1 \neq [\textbf{M}]$,  According to  \cref{lemma:p_reverse}:
\begin{align*}
    p_{s|t}^1(x_s^1|\vx_t) &= \sum_{\vx_s^{2:d}}  p_{s|t}(\vx_s|\vx_t)\\
                            &= \sum_{k=0}^{d_1-1} 
                            \sum_{\vx_s^{2:d} \in \tilde{\mathcal{X}}_{k}}
                            \left[\frac{e^{-\sigmabar{s}} - e^{-\sigmabar{t}}}{1 - e^{-\sigmabar{s}}}\right]^{k+1}
             \left[\frac{1 - e^{-\sigmabar{s}}}{1 - e^{-\sigmabar{t}}}\right]^{d_1} \frac{p_0(x_s^1,\vx_s^{2:d,\textrm{UM}},\vx_t^{d_1+1:d})}{p_0(\vx_t^{d_1+1:d})}\\
             &= \sum_{k=0}^{d_1-1} 
                            C_{d_1-1}^{k}
                            \left[\frac{e^{-\sigmabar{s}} - e^{-\sigmabar{t}}}{1 - e^{-\sigmabar{s}}}\right]^{k+1}
             \left[\frac{1 - e^{-\sigmabar{s}}}{1 - e^{-\sigmabar{t}}}\right]^{d_1} \frac{p_0(x_s^1,\vx_t^{d_1+1:d})}{p_0(\vx_t^{d_1+1:d})}\\
             &=\frac{e^{-\sigmabar{s}} - e^{-\sigmabar{t}}}{1 - e^{-\sigmabar{s}}} \left[1 + \frac{e^{-\sigmabar{s}} - e^{-\sigmabar{t}}}{1 - e^{-\sigmabar{s}}}\right]^{d_1-1}
             \left[\frac{1 - e^{-\sigmabar{s}}}{1 - e^{-\sigmabar{t}}}\right]^{d_1} \frac{p_0(x_s^1,\vx_t^{d_1+1:d})}{p_0(\vx_t^{d_1+1:d})}\\
             &= \frac{e^{-\sigmabar{s}} - e^{-\sigmabar{t}}}{1 - e^{-\sigmabar{t}}} \frac{p_0(x_s^1,\vx_t^{d_1+1:d})}{p_0(\vx_t^{d_1+1:d})}\\
             &= \frac{e^{-\sigmabar{s}} - e^{-\sigmabar{t}}}{1 - e^{-\sigmabar{t}}} {p_0(x_s^1|\vx_t^{d_1+1:d})}.
\end{align*}

Here, we used the binomial expansion identity:
\[
\sum_{k=0}^{d_1-1} 
C_{d_1-1}^{k}
\left[\frac{e^{-\sigmabar{s}} - e^{-\sigmabar{t}}}{1 - e^{-\sigmabar{s}}}\right]^{k} = \left[1 + \frac{e^{-\sigmabar{s}} - e^{-\sigmabar{t}}}{1 - e^{-\sigmabar{s}}}\right]^{d_1-1}.
\]

Similarly, for $x_s^1 = [\textbf{M}]$:
\begin{equation}
    p_{s|t}^1([\textbf{M}]|\vx_t) = \frac{1 - e^{-\sigmabar{s}}}{1 - e^{-\sigmabar{t}}}.
\end{equation}

In general, we have
\begin{equation}
\label{eq:p_absorb_tweedie}
    p_{s|t}^i(x_s^i|\vx_t) = \begin{cases}
        \frac{e^{-\sigmabar{s}} - e^{-\sigmabar{t}}}{1 - e^{-\sigmabar{t}}} {p_0(x_s^i|\vx_t^{\textrm{UM}})}, &x_s^i \neq [\textbf{M}],x_t^i = [\textbf{M}],\\
        \frac{1 - e^{-\sigmabar{s}}}{1 - e^{-\sigmabar{t}}},&x_s^i = [\textbf{M}],x_t^i = [\textbf{M}],\\
        \delta_{x_s^i x_t^i} , & x_t^i \neq [\textbf{M}].
    \end{cases}
\end{equation}
With trained $\vc_\theta$, we can use  $\vc_\theta(\vx_t)[i,x_s^i]$ to approximate the true conditional distribution $p_0(x_s^i|\vx_t^{\textrm{UM}})$ and sample by \cref{eq:p_absorb_tweedie}.
\subsection{Euler method and its simplified form in RADD}
\label{subsec:simple_euler}
According to theory of CTMC~\citep{Kelly1980ReversibilityAS,campbell2022continuous,lou2024discrete}, given a particular one-dimensional input $x_t$, the transition probabilities to $x_{s}$ can be approximately calculated using \cref{eq:delta_transition} and \cref{eq:reverse_eq} as follows:
\begin{align}
    p_{s|t}(x_{s}|x_t) &= \delta_{x_tx_{s}} + \tilde{\mQ}_t(x_t,x_{s}) (t - s) + o(t - s),\\
    &\approx \delta_{x_tx_{s}} + \tilde{\mQ}_t(x_t,x_{s}) (t - s),
\end{align}
where 
\begin{align}
    \tilde{\mQ}_t(x_t,x_{s}) &= \begin{cases}
         \mQ_t(x_{s},x_t) \frac{p_t(x_{s})}{p_t(x_t)}, 
         & x_t \neq x_{s}, \\
        - \sum_{k \neq x_t}   \tilde{\mQ}_t(x_t, k), & x_t=x_{s}.
    \end{cases}
\end{align}
Therefore, we can define the Euler approximation of the transition probability~\citep{campbell2022continuous,lou2024discrete}: 
\begin{equation}
\label{eq:p_euler}
    p_{s|t}^{\text{euler}}(x_{s}|x_t) = \delta_{x_tx_{s}} + \tilde{\mQ}_t(x_t,x_{s}) (t - s)
\end{equation}

For multi-dimensional case, we factorize $p_{s|t}^{\text{euler}}(\vx_s|\vx_t)$ 
as $ \prod_{i=1}^dp_{s|t}^{\text{euler},i}(x_s^i|\vx_t)$, where $p_{s|t}^{\text{euler},i}(x_s^i|\vx_t)$ is based on \cref{eq:p_euler} which use $\vx_t$ to replace $x_t$ and $x_t^1\cdots x_s^i \cdots x_t^{d} $ to replace $x_s$. 

In the case of absorbing diffusion, similar to Tweedie-$\tau$ leaping method in \cref{subsec:simple_tweedie}, we can use \cref{thm:AnalyticConcreteScore} and \cref{eq:absorb_q} to simplify \cref{eq:p_euler}, which results in 
\begin{equation}
\label{eq:p_absorb_euler}
     p_{s|t}^{\text{euler},i} (x_{s}^i|\vx_t) =
\begin{cases}
\sigma(t) \frac{e^{- \bar{\sigma}(t)}}{1 - e^{- \bar{\sigma}(t)}} (t - s) p_0(x_s^i|\vx_t^{\textrm{UM}}), & \text{if } x_{s}^i \neq [\textbf{M}],x_t^i = [\textbf{M}]\\
1 - \sigma(t) \frac{e^{- \bar{\sigma}(t)}}{1 - e^{- \bar{\sigma}(t)}}(t - s), & \text{if } x_{s}^i = [\textbf{M}],x_t^i = [\textbf{M}]\\
\delta_{x_s^i x_t^i} , &x_t^i \neq [\textbf{M}].\\
\end{cases}
\end{equation}

In practice, we also use \( c_\theta(\vx_t)[i, x_s^i] \) to approximate the true conditional distribution \( p_0(x_s^i|\vx_t^{\textrm{UM}}) \) when sampling from \cref{eq:p_absorb_euler}. 
\subsection{Equivalence of Tweedie $\tau$-leaping and Euler method under log-linear noise schedule}
\label{subsec:equivalent_sampler}

By comparing \cref{eq:p_absorb_tweedie} and \cref{eq:p_absorb_euler}, we observe that both the Tweedie $\tau$-leaping and Euler methods can be interpreted similarly:
\begin{itemize}
    \item If \( x_t^i \) is an unmasked token, keep it unchanged, i.e., \( x_s^i = x_t^i \).
    \item If \( x_t^i \) is a masked token, first determine whether it will be unmasked with a probability \( \psi(t,s) \). If it is to be unmasked, then  sample \( x_s^i \) from \( p_0(x_s^i|\vx_t^{\textrm{UM}}) \).
\end{itemize}

The only difference lies in the analytic form of \( \psi(t,s) \). For the two methods, according to \cref{eq:p_absorb_tweedie,eq:p_absorb_euler}, their corresponding \( \psi(t,s) \) are given as follows:
\begin{align}
\label{eq:psi_tweedie}
    &\psi^{\text{tweedie}} (t,s) = \frac{e^{-\sigmabar{s}} - e^{-\sigmabar{t}}}{1 - e^{-\sigmabar{t}}},\\
\label{eq:psi_euler}
    &\psi^{\text{euler}} (t,s) = \sigma(t) \frac{e^{- \bar{\sigma}(t)}}{1 - e^{- \bar{\sigma}(t)}} (t - s). 
\end{align}

In general cases, these two expressions are not equivalent. However, if we choose a log-linear noise schedule \( \sigmabar{t} = 1 - \log\left(1 - (1 - \epsilon)t\right) \), both \cref{eq:psi_euler} and \cref{eq:psi_tweedie} can be simplified to the same form \( \psi(t,s) \) as follows:
\begin{equation}
    \psi(t,s) = \frac{t-s}{t},
\end{equation}

which shows that these two sampling methods are equivalent under a log-linear noise schedule. 

\subsection{Discuss on the expectation of NFEs}
\label{subsec:E_nfe}
In this part, we show that given the noise schedule $\sigma(t)$ and a set of time steps $\{t_0 = 0,\cdots, t_n=T\}$, the NFEs can be treated as a random variable with a calculable expected value for both Euler method and Tweedie $\tau$-leaping method.

Let $l$ denote the length of the generated sequence and $N_k \in \{0,\cdots,l\}$ denote the number of dimensions that $\vx$ changed in $[t_{k-1},t_k)$. Without loss of generality, we first consider the unconditional generation case where $l = d$. The NFEs, E-NFEs, and $N_k$ can be expressed as 
\begin{equation}
    \NFE(n) = \sum_{k = 1}^n \mathbb{I}(N_k \neq 0),
\end{equation}
\begin{equation}
    \label{eq:E_NFE_def}
    \ENFE(n) = \sum_{k = 1}^n \mathbb{E}[\mathbb{I}(N_k \neq 0)] = \sum_{k = 1}^n P(N_k \neq 0),
\end{equation}
\begin{equation}
    N_k = \sum_{i=1}^d \mathbb{I} (x_{t_{k-1}}^i \neq [\textbf{M}], x_{t_k}^i = [\textbf{M}]).
\end{equation}
Furthermore, we note that the $d$ dimensions are independent. According to \cref{eq:p_absorb_tweedie,eq:p_absorb_euler}, the probability $p_{s|t}^{\cdot, i}([\textbf{M}]|\vx_t)$ depends only on time and $x_t^i$ while independent of the other dimensions of $\vx_t$. Thus, $p_{s|t}^{\cdot, i}([\textbf{M}]|\vx_t)= p_{s|t}^{\cdot, i}([\textbf{M}]|x_t^i)$. Therefore, whether a token changes from masked to unmasked is independent across the $d$ dimensions\footnote{The independence applies to whether a token changes from masked to unmasked. However, the specific unmasked token a masked token changes to depends on other dimensions.}:
\begin{align}
        &p\left(\mathbb{I}(x_s^1 = [\textbf{M}]),\cdots,\mathbb{I}(x_s^d = [\textbf{M}])|\mathbb{I}(x_t^1 = [\textbf{M}]),\cdots,\mathbb{I}(x_t^d = [\textbf{M}])\right)\\
        =&\prod_{i=1}^d p\left(\mathbb{I}(x_s^i = [\textbf{M}])|\mathbb{I}(x_t^i = [\textbf{M}])\right).
\end{align}
Since $\vx_T$ consists entirely of masked tokens with probability one, each dimension of $ \mathbb{I} (x_{t_{k-1}}^i \neq [\textbf{M}], x_{t_k}^i = [\textbf{M}])$ is independent. Consequently, $N_k$ follows a binomial distribution with parameters $d$ and $r_k$, denoted as $N_k \sim \text{Binomial} (d, r_k)$, 
where $r_k= p(x_{t_{k-1}}^i \neq [\textbf{M}], x_{t_k}^i = [\textbf{M}])$ represents the probability that $x^i$ changes within the interval $[t_{k-1},t_k)$ in each dimension. Therefore, we can further simplify \cref{eq:E_NFE_def}:

\begin{equation}
\label{eq:ENFE_r_k}
    \ENFE(n) = \sum_{k = 1}^n P(N_k \neq 0) = \sum_{k = 1}^n (1 - (1 - r_k)^d).
\end{equation}
By definition of $r_k$ and the property of absorbing diffusion:
\begin{align}
    r_k &= p(x_{t_{k-1}}^i \neq [\textbf{M}], x_{t_k}^i = [\textbf{M}])\\
    &= p(x_{t_{k-1}}^i \neq [\textbf{M}]|x_{t_k}^i = [\textbf{M}]) \prod_{j = k+1}^n p(x_{t_{j-1}}^{i} = [\textbf{M}]|x_{t_{j}}^i= [\textbf{M}])p(x_{t_n}^i = [\textbf{M}])\\
    &= \left(1 - p(x_{t_{k-1}}^i = [\textbf{M}]|x_{t_k}^i = [\textbf{M}])
    \right) \prod_{j = k+1}^n p(x_{t_{j-1}}^{i} = [\textbf{M}]|x_{t_{j}}^i= [\textbf{M}]).
    \label{eq:r_k}
\end{align}

\cref{eq:r_k} can be determined given the sampling method and noise schedule.

For Tweedie $\tau$ -leaping, based on \cref{eq:p_absorb_tweedie}, we can derive that:  
\begin{equation}
    p(x_{t_{j-1}}^{i} = [\textbf{M}]|x_{t_{j}}^i= [\textbf{M}]) = \frac{1 - e^{-\sigmabar{t_{j-1}}}}{1 - e^{-\sigmabar{t_{j}}}}.
\end{equation}
Therefore, we can express $r_k$ as  

\begin{equation}
\label{eq:r_k_tweedie}
r_k =(\frac{e^{-\sigmabar{t_{k-1}}}-e^{-\sigmabar{t_{k}}}}{1 - e^{-\sigmabar{t_{k}}}}) \prod_{j = k+1}^n (1- \frac{1 - e^{-\sigmabar{t_{j-1}}}}{1 - e^{-\sigmabar{t_{j}}}}) = \frac{e^{-\sigmabar{t_{k-1}}}-e^{-\sigmabar{t_{k}}}}{1 - e^{-\sigmabar{t_{n}}}}.
\end{equation}

For the Euler method, based on \cref{eq:p_absorb_euler}, we can derive that:  
\begin{equation}  
    p(x_{t_{j-1}}^{i} = [\textbf{M}]|x_{t_{j}}^i= [\textbf{M}]) = 1- \sigma(t_j)\frac{e^{- \bar{\sigma}(t_j)}}{1- e^{- \bar{\sigma}(t_j)}} (t_j - t_{j-1}).  
\end{equation}  
\begin{equation}  
r_k =(\sigma(t_k)\frac{e^{- \bar{\sigma}(t_k)}}{1- e^{- \bar{\sigma}(t_k)}} (t_k - t_{k-1})) \prod_{j = k+1}^n (1- \sigma(t_j)\frac{e^{- \bar{\sigma}(t_j)}}{1- e^{- \bar{\sigma}(t_j)}} (t_j - t_{j-1})).  
\end{equation}

For conditional generation cases where the generating sequence length \( l \) is less than the dimension \( d \), similar results hold. The only difference is that $N_k \sim \text{Binomial} (l,r_k)$ and \cref{eq:ENFE_r_k} changes to 
\begin{equation}
    \ENFE(n) = \sum_{k=1}^n ( 1 - ( 1-r_k)^l).
\end{equation}
Specifically, if we adopt a log-linear noise schedule and let $t_k = \frac{k}{n}$, according to \cref{subsec:equivalent_sampler}, the Euler method and Tweedie $\tau$-leaping method are equivalent.  In this case,  \cref{eq:r_k_tweedie} simplifies to  $ \frac{1}{n}$. Substituting this result into \cref{eq:ENFE_r_k}, we obtain
\begin{equation}  
    \ENFE(n) = \sum_{k = 1}^n (1 - (1 - \frac{1}{n})^l) = n (1 - (1 - \frac{1}{n})^l). 
\end{equation}

\section{Discussion for mean parameterization and RADD}
\label{sec:mean_pm}
\paragraph{Equivalence of modeling
}
Analogous to the $x_0$ prediction in continuous state diffusion models, \citet{austin2023structured} and \citet{campbell2022continuous} used the mean parameterization $\vmu_\theta(\vx_t,t)$ to learn the the reverse density $p_{0|t}^i(x_0^i|\vx_t)$, $i=1\cdots d$. According to the analytic form of reverse distribution in \cref{eq:p_absorb_tweedie}, letting $s = 0$, we have:
\begin{equation}
    p_{0|t}^i(x_0^i|\vx_t) = \begin{cases}
         {p_0(x_0^i|\vx_t^{\textrm{UM}})}, &x_0^i \neq [\textbf{M}],x_t^i = [\textbf{M}],\\
        0, &x_0^i = [\textbf{M}],x_t^i = [\textbf{M}],\\
        \delta_{x_0^i x_t^i} , & x_t^i \neq [\textbf{M}].
    \end{cases}
\end{equation}

This shows that the mean prediction is equivalent to learning conditional distributions on clean data. 
In conjunction with our discussion in \cref{subsec:reparameterize_concretescore}, the mean parameterization should be time-independent, denoted as  $\vmu_\theta(\vx_t)$, and is equivalent to our reparameterized $\vc_\theta(\vx_t)$. Empirical results like \citet{he2022diffusionbert} and \citet{sahoo2024simpleeffectivemaskeddiffusion}, which demonstrate that the time-independent model $\vmu_\theta(\vx_t)$ performs well, also validate our theory.

\paragraph{Equivalence of training objectives}
\citet{shi2024simplifiedgeneralizedmaskeddiffusion} proved that the training loss for score parameterization (i.e., DSE loss) and mean parameterization (i.e., negative ELBO loss) are equivalent. 

\paragraph{Equivalence of sampling methods}
Comparing our \cref{subsec:simple_tweedie} with \citet{shi2024simplifiedgeneralizedmaskeddiffusion,sahoo2024simpleeffectivemaskeddiffusion}, it is evident that the Tweedie $\tau$-leaping method for score parameterization is equivalent to the sampling method for mean prediction as follows:
\begin{equation}
\label{eq:mean_reverse}
    q_\theta(x_s|x_t) = p(x_s|x_t,x_0 = \mu(x_t)) = \begin{cases}
        \textrm{Cat}(x_s;x_t), &x_t \neq [\textbf{M}],\\
        \textrm{Cat}(x_s;\frac{1 - \alpha_s}{1 - \alpha_t} \mathbf{e}_{[\textbf{M}]} + \frac{\alpha_s - \alpha_t}{1 - \alpha_t} \mu(x_t)  ) , &x_t = [\textbf{M}].\\
    \end{cases}
\end{equation}
Here, $\alpha_t$ represents the probability of a token remaining unmasked at time $t$, which equals $e^{-\sigmabar{t}}$ for score parameterization. Therefore, \cref{eq:mean_reverse} and \cref{eq:p_absorb_tweedie} is equivalent. 

\revise{\section{Comparison with \citet{chen2024fastsamplingdiscretenonmarkov}}
\label{sec:comparison_dndm}
\citet{chen2024fastsamplingdiscretenonmarkov} proposes sampling methods for discrete-time and continuous-time diffusion models individually. Consider a sequence $\boldsymbol{x}$ of length $d$:
\paragraph{Discrete-time models}
 For models trained over discrete timesteps $\mathcal{T}_{\text{train}} = \{1, \cdots, T\}$,  \citet{chen2024fastsamplingdiscretenonmarkov} proposes to pre-sample the $d$ time points $\tau_i \in \mathcal{T}_{\text{train}}$, where each $\tau_i$ corresponds to a change timepoint for a specific dimension of $\boldsymbol{x}$. These timepoints form a time set $\mathcal{T}_{\text{change}} \subset \mathcal{T}_{\text{train}}$, and updates are only applied at these steps. For absorbing diffusion models, as each token changes only once, $\text{NFEs} = |\mathcal{T}_{\text{change}}| \leq \min(d, T)$.
\paragraph{Continuous-time models}
 For models trained over continuous time, $d$ change points $\tau_i$ are pre-sampled and sorted in ascending order ($\tau_{n_1} < \cdots < \tau_{n_d}$). Updates are sequentially applied at these points, resulting in $\text{NFEs} = d$, which resembles the sampling process in AO-ARM. However, how to reduce the NEFs to less than $d$ for the continuous-time model has not been investigated.
\\
\\
In contrast to \citet{chen2024fastsamplingdiscretenonmarkov}, which pre-sample specific time points and update tokens only at those predetermined points, RADD leverages a time-independent parameterization that updates tokens only when they change. This fundamental difference results in \textbf{different applicable  scenarios} for the two methods:
\begin{itemize}[leftmargin=*]
\setlength\itemindent{.5em}
    \item \citet{chen2024fastsamplingdiscretenonmarkov} applies to both absorbing and multinomial diffusion models. However, for continuous-time models like SEDD, their sampling method results in $\text{NFEs} = d$ and, as noted,  how to reduce the NEFs less than $d$ for the continuous-time model has not been investigated.
    \item RADD, on the other hand, is specifically designed for absorbing diffusion models. It is straightforward to apply the cache strategy of the RADD to continuous-time settings and reduce NEFs to less than $d$ because the input of the model is independent of the time.
\end{itemize}
Although the samplers in our paper and in \citet{chen2024fastsamplingdiscretenonmarkov} originate from different formulations, the results of Theorem D.1 \citep{chen2024fastsamplingdiscretenonmarkov} for the discrete-time sampler align with those of our \cref{eq:Tweedie_E_NFE}. However, as discussed above, the two samplers are applicable in different scenarios.}


\revise{
\section{Details of AO-ARMs}
\label{sec:ao_arm}
Any-order autoregressive models (AO-ARMs) ~\citep{UriaML14,HoogeboomARDM22,Shih2022TrainingAI} model the joint distribution autoregressively for $d!$ different orders $\pi$  of the $d$ variables. Formally, the joint distribution is factorized as $\prod_{k=1}^d p(x^{\pi(k)}|x^{\pi(<k)})$ by chain rule. Therefore, AO-ARMs actually define $\sum_{k=0}^d C_d^k (d-k) = d 2^{d-1}$ distinct univariate conditionals probabilities. 
\paragraph{Architecture} AO-ARMs model all univariate conditionals via a weight-sharing neural network, by using the [\textbf{M}] token for variables that are not present in the condition set\footnote{Condition set corresponds to variables in $x^{\pi(<k)}$.}.
For efficient parallel optimization, the architecture is designed such that given the condition set of size $k$, it can predict all $d -k$  univariate conditionals at once, similar to the output of conditional distributions in  \cref{fig:radd_network}.
\paragraph{Training} AO-ARMs are trained to minimize the negative joint likelihood of a datapoint $\vx_0$ under the expectation over the uniform distribution $U_\pi$ of orders. It can be simplified by treating $l$ as a random variable with a uniform distribution over cardinalities 1 to $d$. Further, it can be transformed into a form for better parallel optimization:
\begin{align}
\label{eq:L_AO_mc}
\mathcal{L}_{AO}(\vx_0)  &= \mathbb{E}_{\pi \sim U_\pi} \sum_{l=1}^d -\log q_\theta(x_0^{\pi(l)}|x_0^{\pi(<l)};\pi)\\
&= \mathbb{E}_{\pi \sim U_\pi} d \cdot \mathbb{E}_{l \sim U(1,\cdots,d)} -\log q_\theta(x_0^{\pi(l)}|x_0^{\pi(<l)};\pi)\\
&= d \cdot \mathbb{E}_{l \sim U(1,\cdots,d)}\frac{1}{d - l + 1} \mathbb{E}_{\pi \sim U(S_d)} \sum_{r = l}^d  -\log q_\theta(x_0^{\pi(r)} | x_0^{\pi(<l)};\pi).
\label{eq:aoarm_train}
\end{align}
Training pseudocode of \cref{eq:aoarm_train} can be referenced in \cref{alg:ao-arm-training}. 
\paragraph{Sampling}  
The sampling process for AO-ARMs generates data points autoregressively based on a randomly sampled order $\pi$. Starting from an empty sequence initialized with [\textbf{M}] tokens, the model iteratively predicts the next variable based on the current condition set $x^{\pi(<k)}$. Since the order $\pi$ is chosen randomly, AO-ARMs support generating sequences with any desired ordering, aligning with their ability to model $d!$ orderings during training. Sampling pseudocode can be referenced in \cref{alg:ao-arm-sampling}. 
\section{Comparison to prior works concerning equivalence discussion}
\label{sec:comparison_aoarm}
\citet{austin2023structured} and \citet{HoogeboomARDM22} have both discussed the relationship between absorbing discrete diffusion and AO-ARMs. Below, we provide a detailed comparison of their approaches with ours.
\\
\\
\citet{austin2023structured} made an early attempt to explore the connection between absorbing discrete diffusion and AO-ARMs. However, their work lacks rigorous proof. Instead, they qualitatively discuss the correlation between the two loss functions. Notably, in Appendix A.3 of \citet{austin2023structured}, they describe the relationship by stating, \textbf{\textit{"this looks very similar ... it is not exactly identical."}}
\\
\\
 In contrast, our work rigorously establishes this connection. By leveraging the continuous-time framework and the time-independent parameterization presented in \cref{thm:AnalyticConcreteScore}, we provide a formal proof demonstrating the equivalence between absorbing discrete diffusion and AO-ARM.
\\
\\
\citet{HoogeboomARDM22} establishes the equivalence between ARMs and the ELBO of the absorbing diffusion models directly. 
In comparison, our approach follows a different path:
\begin{enumerate}[leftmargin=*]
\setlength\itemindent{.5em}
    \item the ELBO was first reduced to $\mathcal{L}_{\text{DSE}}^T(\boldsymbol{x}_0)$ as \cref{eq:score_entropy}, as discussed in detail in \citet{lou2024discrete}.
    \item Using step-by-step substitutions via \cref{eq:loss_equivalence}, we further reduce $\mathcal{L}_{\text{DSE}}^T(\boldsymbol{x}_0)$ to $\mathcal{L}_{\text{AO}}$.
\end{enumerate}
Therefore, our approach offers unique contributions by:
\paragraph{A rigorous and alternative proof }
 We leverage the time-independent properties of the reparameterization formulation, providing an alternative and rigorous proof of this equivalence.
\paragraph{Equivalence of four losses}
 Our analysis extends to demonstrate the equivalence of four distinct losses, including $\mathcal{L}_{\text{DSE}}^T(\boldsymbol{x}_0)$ and $\mathcal{L}_{\text{AO}}(\boldsymbol{x}_0)$ in \cref{eq:loss_equivalence}, offering a deeper understanding of absorbing discrete diffusion.
\paragraph{Comprehensive experimental validation}
 We conduct a thorough study of these loss functions, with results presented in \cref{tbl:zero_shot_perplexity_small,tbl:zero_shot_perplexity_medium}. To the best of our knowledge, this exploration has not been explored in prior work.
}



\section{Algorithms for training and sampling}
\label{sec:pseudo}

\begin{algorithm}

    \caption{\revise{AO-ARM Training}}

    \label{alg:ao-arm-training}
    \begin{algorithmic}[!ht]
    \REQUIRE \revise{Network $\vc_\theta$, samples from data distribution $\pdata$}
    \REPEAT
     \STATE \revise{$\vx_0 \sim \pdata$, $\pi \sim U_\pi$.}
     \STATE \revise{$l \sim U(1,\cdots,d)$.}
     \STATE \revise{$\vx' \gets  \mathbb{I}(\pi < l)\odot \vx_0  +\mathbb{I}(\pi \geq l)\odot [\textbf{M}]  $}
     \STATE \revise{Calculate $ \mathcal{L} = \frac{d}{d - l + 1} \sum_{i = l}^d  - \log \left(\vc_\theta(\vx')[\pi(i),x_0^{\pi(i)}]\right)$.}
     \STATE \revise{Calculate $\nabla_\theta \mathcal{L}$ and run optimizer.}
     \UNTIL \revise{converged}
    \end{algorithmic}
\end{algorithm}
\begin{algorithm}
    \caption{\revise{AO-ARM Sampling}}
    \label{alg:ao-arm-sampling}
    \begin{algorithmic}[!ht]
    \REQUIRE \revise{Network $\vc_\theta$}
     \STATE \revise{Initialize $\vx \gets [\textbf{M}]\ldots[\textbf{M}]$}
     \STATE \revise{Sample $\pi \sim U_\pi$.}
    \FOR{ \revise{$i$ in $\{1,\ldots,d\}$ }}
        \STATE \revise{Sample $j\sim \text{Cat}(\vc_\theta(\vx)[\pi(i),\cdot])$}
        \STATE \revise{Update $\vx^i \gets j$}
    \ENDFOR        
    \end{algorithmic}
\end{algorithm}

\begin{algorithm}
    \caption{Discrete Diffusion Training ($\TDCE$ Loss)}
    \begin{algorithmic}[!ht]
    \REQUIRE Network $\vc_\theta$, noise schedule $\sigma$, time $[0,T]$, samples from data distribution $\pdata$
    \REPEAT
     \STATE $\vx_0 \sim \pdata$, $t \sim U([0,T])$.
     \STATE Construct $\vx_t$ by $\xi^i \sim Bernoulli(e^{- \bar{\sigma}(t)})$, $x_t^i = \mathbb{I}(\xi^i=1)x_0^i + \mathbb{I}(\xi^i=0)[\textbf{M}]$.
     \STATE Calculate $ L_\theta(\vx_t, \vx_0) =\sum_{x_t^i = [\textbf{M}]}  -\sigma(t) \frac{e^{- \bar{\sigma}(t)}}{1 - e^{- \bar{\sigma}(t)}}\log \left(\frac{e^{- \bar{\sigma}(t)}}{1 - e^{- \bar{\sigma}(t)}} \vc_\theta(\vx_t)[i,x_0^i]\right)$.
     \STATE Calculate $\nabla_\theta L(x_t, x_0)$ and run optimizer.
     \UNTIL converged
    \end{algorithmic}
\end{algorithm}
\begin{algorithm}
    \caption{Discrete Diffusion Training ($\LDCE$ Loss)}
    \begin{algorithmic}[!ht]
    \REQUIRE Network $\vc_\theta$, samples from data distribution $\pdata$
    \REPEAT
     \STATE $\vx_0 \sim \pdata$, $\lambda \sim U([0,1])$.
     \STATE Construct $\vx_\lambda$ by $\xi^i \sim Bernoulli(1 - \lambda)$, $x_\lambda^i = \mathbb{I}(\xi^i=1)x_0^i + \mathbb{I}(\xi^i=0)[\textbf{M}]$.
     \STATE Calculate $ L_\theta(\vx_\lambda, \vx_0) =\sum_{x_\lambda^i = [\textbf{M}]}  - \frac{1}{\lambda}\log \left(\vc_\theta(\vx_\lambda)[i,x_0^i]\right)$.
     \STATE Calculate $\nabla_\theta L(x_\lambda, x_0)$ and run optimizer.
     \UNTIL converged
    \end{algorithmic}
\end{algorithm}

\begin{algorithm}
    \caption{Discrete Diffusion Sampling (Unconditional)}
    \begin{algorithmic}[!ht]
    \REQUIRE Network $\vc_\theta$, noise schedule $\sigma$, time range $[0,T]$, step size $\Delta t$    
       \STATE  $t \gets T$, 
       $\vx_T \gets [\textbf{M}]\ldots[\textbf{M}]$, 
       $\vc_{cache}\gets \vc_\theta(\vx_t)$.
        \WHILE{$t > 0$} 
            \IF{Use Euler}
            \STATE  Construct transition densities  $p_{t- \Delta t|t}^{\text{euler},i}(x_{t- \Delta t}^i|\vx_t)$ by \cref{eq:p_absorb_euler} with $\vc_{cache}$.
            \STATE $x_{t- \Delta t}^i \sim \text{Cat}(p_{t- \Delta t|t}^{\text{euler},i}(x_{t- \Delta t}^i|\vx_t))$ for all $x_t^i=[\textbf{M}]$, $x_{t-\Delta t}^i  \gets x_t^i$ for all $x_t^i \neq [\textbf{M}]$.
            \ENDIF
            \IF{Use Tweedie $\tau$ -leaping}
                 \STATE   Construct transition densities $p_{t- \Delta t|t}^{i}(x_{t- \Delta t}^i|\vx_t)$ by \cref{eq:p_absorb_tweedie} with $\vc_{cache}$.
                   \STATE $x_{t- \Delta t}^i \sim \text{Cat}(p_{t- \Delta t|t}^{i}(x_{t- \Delta t}^i|\vx_t))$ for all $x_t^i=[\textbf{M}]$, $x_{t-\Delta t}^i  \gets x_t^i$ for all $x_t^i \neq [\textbf{M}]$ .
            \ENDIF
            \IF{$\vx_{t - \Delta t} \neq \vx_t$}
            \STATE  $\vc_{cache}\gets \vc_\theta(\vx_t)$.
             \ENDIF
            \STATE $t  \gets t - \Delta t$.
        \ENDWHILE
    \end{algorithmic}
\end{algorithm}
\begin{algorithm}
    \caption{Discrete Diffusion Sampling (Conditional)}
    \begin{algorithmic}[!ht]
    \REQUIRE Network $\vc_\theta$, noise schedule $\sigma$, time $[0,T]$, step size $\Delta t$, Prompt spaces $\Omega$ and tokens $\mathcal{T}$.
       \STATE  $t \gets T$,  construct $\vx_T$ with $\vx_T^\Omega = \mathcal{T}$ and $\vx_T^{\bar{\Omega}} = [\textbf{M}]$, $\vc_{cache}\gets \vc_\theta(\vx_t)$.
        \WHILE{$t > 0$} 
             \IF{Use Euler}
            \STATE  Construct transition densities  $p_{t- \Delta t|t}^{\text{euler},i}(x_{t- \Delta t}^i|\vx_t)$ by \cref{eq:p_absorb_euler} with $\vc_{cache}$.
            \STATE $x_{t- \Delta t}^i \sim \text{Cat}(p_{t- \Delta t|t}^{\text{euler},i}(x_{t- \Delta t}^i|\vx_t))$ for all $x_t^i=[\textbf{M}]$, $x_{t-\Delta t}^i  \gets x_t^i$ for all $x_t^i \neq [\textbf{M}]$ .
            \ENDIF
            \IF{Use Tweedie $\tau$ -leaping}
                 \STATE   Construct transition densities $p_{t- \Delta t|t}^{i}(x_{t- \Delta t}^i|\vx_t)$ by \cref{eq:p_absorb_tweedie} with $\vc_{cache}$.
                   \STATE $x_{t- \Delta t}^i \sim \text{Cat}(p_{t- \Delta t|t}^{i}(x_{t- \Delta t}^i|\vx_t))$ for all $x_t^i=[\textbf{M}]$, $x_{t-\Delta t}^i  \gets x_t^i$ for all $x_t^i \neq [\textbf{M}]$ .
            \ENDIF
            \IF{$\vx_{t - \Delta t} \neq \vx_t$}
            \STATE  $\vc_{cache}\gets \vc_\theta(\vx_t)$.
             \ENDIF
            \STATE $t  \gets t - \Delta t$.
        \ENDWHILE
    \end{algorithmic}
\end{algorithm}

\section{Experimental details}
\label{sec:experimental_details}
\subsection{Model details}
\label{subsec:model_details}
We implemented our RADD model based on the SEDD architecture, an decoder-only transformer model~\citep{Vaswani2017AttentionIA, Devlin2019BERTPO}. Our model incorporates rotary positional encoding~\citep{Su2021RoFormerET} but excludes all parts related to time conditioning (i.e., TimeEmbedding, adaLN-zero block~\citep{Peebles2022ScalableDM}). Instead, we added a softmax operation at the end of the neural network to ensure the output is a valid conditional distribution. This simplified architecture is similar to the standard GPT architecture, except the lack of attention mask and multiple probability output instead of single one.

\subsection{Training details}
\label{subsec:training_details}
We trained our RADD models using the following configuration settings:
\begin{itemize}
    \item Batch Size: 512
    \item Learning Rate:  \(3 \times 10^{-4}\)
    \item Exponential Moving Average (EMA):0.9999
    \item Gradient Clipping: Gradient norm clipped to 1
    \item Warmup Schedule: Applied for the first 2500 iterations
    \item weight decay: 0.03
    \item dropout rate: 0.02
\end{itemize}
The hyperparameters were adapted from \citep{lou2024discrete}, with modifications referenced from \citep{shi2024simplifiedgeneralizedmaskeddiffusion}. The main modifications were setting the weight decay to 0.03 and the dropout rate to 0.02. Due to limited computational resources, we did not perform a hyperparameter search and directly conducted experiments with these settings. Further tuning of hyperparameters may enhance performance.

\revise{In terms of training tokens, 400K iterations correspond to approximately 105 billion tokens, while 1000K iterations correspond to about 262 billion tokens. It's worth noting that the entire OpenWebText dataset contains only 9 billion tokens, meaning that models went through the dataset multiple times during training.}

The small models are trained on multi-accelerator compute nodes with float16 precision.

\subsection{Unconditional generation details}
\label{subsec:unconditional_generation_details}

For unconditional generation, we employed a log-linear noise schedule. 
As illustrated in \cref{subsec:Efficient_sampling}, the Euler method and the Tweedie $\tau$-leaping method are equivalent under this case. In practice, the implementation of the Euler method and the Tweedie $\tau$-leaping method remains the same for RADD but differs for SEDD. So we measure the perplexity of SEDD by Tweedie $\tau$-leaping method which performs slightly better while it suffices to measure the perplexity of RADD once.

As suggested by \citep{zheng2024maskeddiffusionmodelssecretly}, \revise{except for \cref{tbl:samplingspeed_fp32},} all samples are generated using float64 precision of Gumbel-based categorical sampling\revise{(abbreviated as fp64 precision below)}. No annealing methods (e.g., top-p or top-k sampling) were applied in our sampling process. 



\subsection{Further evaluation of generative perplexity}
\label{subsec:further_evaluation}

\paragraph{Runtime and entropy measurement}
To evaluate the efficiency of inference and the diversity of samples, we assessed the inference time and unigram entropy averaged across 1024 samples.  When calculating unigram entropy, we chose the natural logarithm (ln) instead of the $\text{log}_2$. 
\revise{We provide perplexity and entropy results under both fp64 and fp32 precision  in \cref{tbl:samplingspeed,tbl:samplingspeed_fp32} respectively.}

\revise{Under both precisions, RADD and SEDD exhibit comparable perplexity results for the same number of sampling steps. For large sampling steps, perplexity converges under fp64 precision but continues to decrease under fp32 precision. This discrepancy is due to precision errors in fp32, which effectively function as a form of annealing, resulting in deceptively lower perplexity values~\citep{zheng2024maskeddiffusionmodelssecretly}. To evaluate efficiency, we focus on the sampling time under fp64 precision. }

The RADD model consistently required the shortest sampling time while maintaining similar perplexity levels to the SEDD model. Specifically, RADD achieved a speed-up of up to 2.5 to 3 times with large sampling steps, as shown in \cref{tbl:samplingspeed}. These findings align with the analysis of the E-NFEs in \cref{fig:E_nfe}, validating the effectiveness of the RADD model and the caching strategy. Even with 1024 sampling steps(equal to sequence length), the cache strategy still enables about 1.5 times acceleration.


\begin{table}[h]
\centering
\caption{\textbf{Average inference time, perplexity, and entropy per sample with varying sampling steps \revise{under fp64 precision}.} The table compares the average inference time (in seconds), perplexity (PPL), and entropy for the SEDD medium model using Tweedie $\tau$-leaping  sampling methods, as well as the RADD medium model under a log-linear noise schedule with a caching strategy. The experiment is conducted on a single high-memory accelerator with a batch size of 16.}
\vspace{.15cm}
\begin{tabular}{@{}llccccccccc@{}}
\toprule
& Steps &32& 64& 128 & 256& 512& 1024&2048 &4096\\ \midrule
 SEDD-medium&Time(s)& 0.52& 0.99& 1.91& 3.78& 7.49& 14.93& 29.82&59.56\\
\rowcolor{LightLavender}  &PPL$\downarrow$& 159& 113& \textbf{94}& \textbf{87}& \textbf{84}& 86& 82&\textbf{81}\\
\rowcolor{LightYellow} & Entropy & 8.26 & 8.18&  8.14 & 8.12 & 8.09 & 8.10 & 8.09 &8.07\\ \midrule
 RADD-medium&Time(s)&0.45&0.80& 1.54& 2.96& 5.32& 8.39& 12.28& 18.20\\
 \rowcolor{LightLavender} &PPL$\downarrow$& \textbf{158}& \textbf{113}& 96& 89& \textbf{84}& \textbf{83}& \textbf{81}& \textbf{81}\\
 \rowcolor{LightYellow} & Entropy & 8.28 & 8.21 & 8.18 &8.15 & 8.14 &8.12& 8.13 & 8.13 \\ \bottomrule
\end{tabular}
\label{tbl:samplingspeed}
\end{table}

\begin{table}[h]
\centering
\caption{\revise{\textbf{Average perplexity, and entropy per sample with varying sampling steps under fp32 precision}. The table compares the average  perplexity (PPL), and entropy for the SEDD medium model using Tweedie $\tau$-leaping  sampling methods, as well as the RADD medium model under a log-linear noise schedule with a caching strategy.}}
\label{tbl:samplingspeed_fp32}
\vspace{.15cm}
\begin{tabular}{@{}llccccccccc@{}}
\toprule
& {Steps} &{32}& {64}& {128} & {256}& {512}& {1024}&{2048} &{4096}\\ \midrule
  {SEDD-medium}  &{PPL}$\downarrow$& \textbf{125}& 85& \textbf{62}& \textbf{51}& \textbf{41}& 34& \textbf{27}&\textbf{22}\\
 & Entropy & 8.18 & 8.07&  7.96 & 7.86 & 7.73 & 7.60 & 7.44 &7.25\\ \midrule
 RADD-medium   &PPL$\downarrow$& 126& \textbf{84}& 63& \textbf{51}& \textbf{41}& \textbf{33}& \textbf{27}& \textbf{22}\\
 & Entropy & 8.19 & 8.09 & 7.98 &7.88 & 7.75 &7.59& 7.45 & 7.27 \\ \bottomrule
\end{tabular}
\end{table}





\revise{\paragraph{Efficiency of our caching strategy with mini-batch}
In \cref{subsec:E_nfe}, we explored the average NFE for the single sample case.
This concept, however, extends to the mini-batch case.
\\
\\
In a mini-batch, some samples may remain unchanged while others may evolve. To address this, we use a dynamic batch-size strategy. Only the samples that have changed are passed through the neural network for computation. While this still involves a "batch-level NFE", the total NFE per sample is reduced, effectively enhancing efficiency as in the single-sample case.
\\
\\
In comparison, concurrent work \cite{sahoo2024simpleeffectivemaskeddiffusion} does not consider dynamic batch size in their implementation, so their practical acceleration falls short of the theoretical potential.
\\
\\
 To further validate the efficiency, we conducted experiments generating 64 samples under various batch sizes. The results, summarized in \cref{tbl:batchsize_experiment}, demonstrate that the average generation time with our caching strategy consistently outperforms SEDD across different batch sizes and timestep configurations. We provide a detailed analysis below:
\subparagraph{Batch Size and Hardware Utilization}
 For a fixed model, increasing the batch size leads to improved hardware utilization before reaching the maximum batch size, reducing the average sampling time per sample. In the case of RADD, sampling time decreases significantly as the batch size increases from 1 to 4. However, the reduction in sampling time becomes minimal beyond a batch size of 4, indicating that hardware utilization has nearly reached its maximum capacity at this point. This demonstrates the scalability of our approach within the limits of the available compute resources.
\subparagraph{ SEDD vs RADD}
 Under identical batch sizes and timesteps, RADD consistently outperforms SEDD in sampling speed. This highlights the efficiency of RADD’s design, where the caching mechanism reduces redundant computations and achieves faster generation, especially for larger batch sizes and higher timesteps.}

\begin{table}[h]
\centering
\caption{\revise{\textbf{The average inference time across batch sizes and timesteps of SEDD-medium and RADD-medium.} Experiments were conducted on a single accelerator with 24GB memory (maximum batch size = 8). The average generation time per sample (in seconds) is averaged over 64 samples.} }
\vspace{.15cm}
\begin{tabular}{ccccccccc}
\toprule
\textbf{Batch Size $\setminus$ Steps} & \textbf{32} & \textbf{64} & \textbf{128} & \textbf{256} & \textbf{512} & \textbf{1024} & \textbf{2048} & \textbf{4096} \\  \midrule
\multicolumn{9}{l}{SEDD-medium} \\
\arrayrulecolor{black!30}\midrule
1                                     & 0.98        & 1.83        & 3.57        & 6.95        & 13.80       & 27.16        & 54.76        & 109.90        \\ 
2                                     & 0.75        & 1.43        & 2.77        & 5.46        & 10.85       & 21.62        & 43.19        & 86.23         \\ 
4                                     & 0.67        & 1.30        & 2.56        & 5.09        & 10.15       & 20.27        & 40.50        & 80.97         \\ 
8                                     & 0.70        & 1.34        & 2.65        & 5.25        & 10.46       & 20.85        & 41.68        & 83.32         \\ 
\arrayrulecolor{black!}\midrule
\multicolumn{9}{l}{RADD-medium} \\
\arrayrulecolor{black!30}\midrule
1                                     & 0.77        & 1.34        & 2.56        & 4.86        & 8.73        & 14.00        & 20.45        & 30.70         \\
2                                     & 0.60        & 1.08        & 2.05        & 3.98        & 7.32        & 12.32        & 18.88        & 28.97         \\
4                                     & \textbf{0.50}        & \textbf{0.97}        & \textbf{1.87}        & \textbf{3.65 }       & \textbf{6.75}        & 11.26        & 17.67        & 27.67         \\ 
8                                     & 0.51        & \textbf{0.97}        & 1.90        & 3.71        & 6.76        & \textbf{10.87}        & \textbf{16.76}        & \textbf{26.58 }        \\ 
\arrayrulecolor{black!}\midrule
\end{tabular}
\label{tbl:batchsize_experiment}
\end{table}



\paragraph{Sampling as any-order autoregressive models}
As outlined in \cref{thm:AnalyticConcreteScore}, $\vc_\theta$ can be interpreted as a conditional distribution over clean data. One natural approach is to use this directly for generating samples, similar to any-order autoregressive models. However, there are \(d!\) possible ways to decompose the joint distribution into conditional distributions. We tested three representative cases:

\begin{itemize}
    \item forward: $p(x^1\cdots x^d) = \prod_{k = 1}^d p(x^k|x^{(<k)})$
    \item backward: $p(x^1\cdots x^d) = \prod_{k = 1}^d p(x^k|x^{(>k)})$
    \item random: $\pi \sim U_\pi$, $p(x^1\cdots x^d) = \prod_{k = 1}^d p(x^{\pi(k)}|x^{\pi(<k)}$)
\end{itemize}

The results are presented in \cref{tbl:ar_generation}. Perplexity was calculated as the average over 1024 samples. For the random case, we calculated the average perplexity across different randomly generated \(\pi\), \revise{corresponding to the standard AO-ARMs sampling method. It shows that the result of the standard AO-ARMs sampling method aligns closely with the converged perplexity of the $\tau$-leaping method under fp64 precision in large steps.} Among the different decomposition orders, the forward order demonstrated the best performance.





\begin{table}[h]
    \centering
    \caption{\textbf{Quality of unconditionally generated text evaluated by perplexity ($\downarrow$).} For a fixed model, the best perplexity is \textbf{bolded}.} 
    \begin{tabular}{ll}
        \toprule
     Method&RADD-medium\\ \midrule
     Forward&\textbf{81.70}\\
  Backward&103.68\\
  Random&83.10\\
      \bottomrule

    \end{tabular}

    \label{tbl:ar_generation}
\end{table}

\subsection{Further evaluation of zero-shot perplexity}
In this section, we provide an extended evaluation of the zero-shot language modeling performance of RADD models trained for \textbf{1000k} iterations. While the results in the main text focus on models trained for 400k iterations, training for longer durations can slightly improve model performance due to the increased exposure to training data. 

\begin{table}[h]
  \centering
    \caption{\textbf{Additional zero-shot language modeling perplexity ($\downarrow$) for RADD small models.} We present the perplexity for RADD models trained for \textbf{1000k} iterations based on their corresponding loss.}
\vspace{.15cm}
  \begin{tabular}{lccccr}
    \toprule
      Method & LAMBADA & WikiText2 & PTB& WikiText103 & 1BW \\ 
    \midrule

   \multicolumn{1}{l}{RADD-$\TDCE$ } & \textbf{48.92}& 37.44& 102.49& 37.20& 70.58\\
    \multicolumn{1}{l}{RADD-$\LDCE$ } & 49.74& 37.13& \textbf{98.84}& 36.66& \textbf{69.77}\\
    \multicolumn{1}{l}{RADD-AO } & 49.43& \textbf{36.86}& 102.36& \textbf{35.25}& 70.71\\
\bottomrule
  \end{tabular}
  \label{tbl:add_zero_shot_perplexity}
\end{table}

\section{Additional experimental results}

\subsection{Additional samples}
\label{subsec:additional_samples}
Additional unconditionally and conditionally generated text of RADD-$\LDCE$ small and medium models are reported in Figs. \ref{fig:radd-dse-un} to \ref{fig:radd-dce-con}. 
All of the samples are generated with 1024 steps under a log-linear noise schedule.

\newpage
\begin{figure}[H]
    {\fontfamily{lmr}\selectfont 
 Civilians of Puerto Ricans in Ohio. Chili replaced beer peas and oyster shs as Kurds replaced corn with Louisiana barbeque in Louisiana, where chili served as the dinner dish to houseplants: "Cajaro souvlaki".

In the southwest in the early-19th century, this dish became used in the Texas Southwest.[1][12]

When the "American" to mean "Spanish F" or "Saw" arose in the thought, first in the south of Mexico (Columbus) and later in America (continent), chili served as, When cooked, medicinally mixed into dishes as a side dish, with fruits and vegetables, short stewed meals, soup to date including poker pot.

Wackned chili also first appeared in North America in 1819 as some accounts describe its use as a turkey stew and chicken hock". Other versions, which are known of by euphemism as the sussificatte guitaristi, have pinned the origin of the term on the original date of the Passiacoamerica during which Spanish tribesmen could not venture into the region until (after the end of the Pequewille; jumples mugged and smoking was observed) Bigger's Passage upon the Coast. Some researchers believe that chili is served as regent proving in a nice winter as part of the traditional diet, not barbecue although the Uuy Juaco is traditionally eaten.[13][14]

The Mexican website Chili, which first occurred in 2006, contains a short page reliving 15 different barbecue recipes already served in Mexico under Brandon and Cincy of the American revolutionary movement of the 19th.[15][18] Chili became a frequentstay of Mexican dining at the Walton, Colo., family grocery store in California and North Carolina, and it has been found in several American grocery stores. Original scout Midwest founder Michael Dunbar declared in 1998 that this corrugated chili dish was "consistent with the food that served in Mexico during the 19th century."[16]

End of era and availability

The Mexican Chili became regularly served in United States Central Texas as of November 1, 2004 (December 1934), and has been sold since at Chestnut many times.[17] Many Mexican barbecue joints across the West Valley began as a cuisine instead of a regular and later would use chili, rather "porocera cooking by returning Mesopotamians."[18][19][20][21]

In culture [ edit ]

Chili was a coarse cured dish made with spices and seasoning, usually chili or side chili. Mexican meat was assigned to the game.Wet meats were made using heat, followed sometimes with hot beer or wine, such as limes or beer. Stroked chili keeps spaghetti sauce or breast dishes such as parmies and cheese. Chili was used to warm flatbread, hot dogs, lamb jerky and dried moonplugs. It was also used in hamburger, except where cuts of meat is used in meal. The dish was also used in toppdown, fried chicken, egg rolls, papaya noodles, yakka noodles, garlic, and onions. Chili is used in wine batter, unsweetened spaghetti squash which is often mashed together in sandwiches.

The Mexican Grill locations are offered at Kearney, Purdue University, we travel to other Inc. restaurants either with affordable prices or at semi-cooperative/voluntary locations. Stores range from California to Texas. Chili can also become available from the local Italian market, often as available at the Chili Jack's restaurant on the Black Hawk Courthouse.[22] Chili is not currently produced for local Italian producers.

New Mexican imports [ edit ]

A new emphasis of use of arugared Mexican beans, as well as black beans, BL beans, pickled peppers, quesadillas, and peas, form them as a menu item alternatives to items as manufactured by KFC and Mexican Grill.

The Chili is popular because of tortilla[19] which is a regular ingredient with sandwiches between Sundays. According to Google Live-Mexico Taco, making fresh, chili-free meals includes pulled pork with cheese removed sweet potatoes, ranch oranges, and mild marlic cheese with floured brown bread. Guajías, chillies, and during cold periods are also included in Mexican Chili. Caesar salad is featured in many facets of East Mexican Grill. Bay Area El-Lachi, Taco Bell employee and blogger, mentions lunch menus. Options, the better bets, were the traditional winter chicken dishes on Fridays in the Hudson Brewery that call the restaurant proper chili.[21]
}
\caption{\textbf{Unconditionally generated text of RADD-$\LDCE$ small.} }
\label{fig:radd-dse-un}
\end{figure}

\newpage
\begin{figure}[H]
    {\fontfamily{lmr}\selectfont \textcolor{blue}{I have a  }  " Kot actu my vit " Shit  " est " thou est first and me first  " Work Begins By email: 10 January 2012 7:21 GMT Kristy

Latest article at the "Mail Stream page

Impressions

Because I am still relying on the e-mails of the EIS server to indicate legitimate databases, I fail the virtual sharing argument, share function, disk space index, etc. (Failure is not sufficient to have my mail client be contains privileged information....) once I set the transport destination on the email e-mess, it contains privileged information extra.

My e-mails list which connections have received data, and you can use RSS to view the packets. In an application with a connection infrastructure you can create an arbitrary object structure, for which and for the system where layers of your communications stack are designed.

" enforce kernel space mass. append first 2. :y ;; 0. cons. end eq bd. nec. end df1 lift8d 1 foo 8 11 modified-scwd0h 1 ls 120 127 dbbsi 1 dbbsi 1 check 503(123-s3)p \#syntkilledestpcoud"

The client also throws back an IP address hash that can easily be cached right after it originally generated; low-level operations run by microservices.

Using bodyURL parameter, I run a privilege test.

After a reasonable approximation of Oracle's best RSA algorithm, the result is of RCELOG assuming the assertion that this method works all ~1 n of trails ~[day 11 - day left] + job from 5-18 and last for 9 after J.T. Leeper's Theorem (which then contradict the thesis that a cleanup bug may exist). I couldn't resist finding that, if it did not, needs to be fixed in a ASAP in order for exploit.

A better example to go of is using an IA-64 firewall in Server Enterprise. The result can enable you to continue leveraging Server Enterprise's firewall design, but not ignoring the same. Remember, if you want to enforce Server Enterprise's anomaly control, you need how it seems to integrate with both functionality and logic.

One last reference to the DBTR leak we first found in the world in our focus on Server Enterprise is that a client application can not query physical storage systems. The incompatibility of default logic with lightswitch is not of a subject per se restriction. Alten can benefit from impertional clanchism, therefore worth keeping in mind that the solution also includes both Java EE applications.

Subsidizing Server Identity

A collaborative research project specializing in server authentication is as important as it is to the user base; it is easy for ORM and AD tools to handle authentication from each Server without an extensive need, external to which server.

To have only one authentication stickler running an instance, and loading a form authentication service that joins one server with a live e.g.i. justifiable sex, a declarative, composable-by-side Expanded Persistent Application ( ERP) class is brought about.

In general, it tends to only need two IDs in an application approaching RHEL.

InJh mail company distributed in New York City.'s email system, with 6 ID., This application should also deal with issued data model, ERP (e.g. data model proxy (0.), database (0.6.) boundaries.[ 10]

The majority is typical chief operating in order to accomplish a technical milestone, and [ * ] is kept calm by the development team [ 11]. As a result, being a ready-track application, this open consistencyGateway is very much ready.

A fuzzed server data file with sheltered and forstrap A per day routines with messages such as response, reply, reply, excluded is reserved for applications... that contain SQL. A server file can be deployed over two machines under the requirements of the application, such as building windows of good email for long time, to simulate simple third-party software for email-mirroring, or for slab email for messaging. It is also settled that an application requiring isolation test passes must have an isolated SQL server resources, which are typically lightweight enough by default (tight door if you're) to manage.

Configuring Environments standard ( R2. R2 is a practical "dry" multi-purpose instantiation mechanism, but it still recieved different designations using knctions that contain module mechanism (where or wherever), i.e. only module will be (elegally) it can be harness, but in the context supporting the interoperability, not every application must implicitly adopt only one and allows supporting the similarity of the particular implementation. Why do it use R names, but the usual reason is that: it's a database that is accessed \textcolor{blue}{in a joint framework.}}
\caption{\textbf{Conditionally generated text of RADD-$\LDCE$ small.} Prompt tokens are in \textcolor{blue}{blue}.}
\label{fig:radd-dse-con}
\end{figure}

\newpage
\begin{figure}[H]
    {\fontfamily{lmr}\selectfont 
  Hall of Fame quarterback to Pro Bowlist. At times, Wright hasn’t hit this reset button—he had a recommittal with the 2011 regime, but he certainly is on notice: Who’d rather be unemployed starring tennis all-stars than a 500-castle NFL franchise? Is the experience a perfect match for him? Can he go over Suh, who is coming off a Super Bowl championship season that included a major letdown expected him to just discuss: a torn ACL?

Derrick Williams vs. former Nebraska QB Kevin Youkil at the University of Miami: "Bright warm-up. Become captivated by the game in right light and suppose, at both the amateur and top-level level Brandon Pettit came into consideration from visionary standpoint. Third thought: I revered Robb Hart’s job as offensive coordinator while as an assistant coach at Iowa State. Let Jimmie Langer tell his story."

Ricky Duches, former Ohio State coordinator, defeated new LSU offensive line coach Les Miles with high school Nebraska quarterback Willy Clabo. (In the movie from Boston, Katerla Floodgate also got the Lombardi Trophy.)

Williams vibes as he recalls his father and—argenceably he's been better physically and at other things—reveals just how big-time football is. And what he received for being so synonymous with the game: "Back in 1982, when I was 16, I saw the big-time football. I knew what the NFL was...I got to Superdome, and out I knew Jim Brown wasn’t exactly greeting Nogueira with a being handshake. But I saw it! [When] else got out on the field, I stood there half-swedicking...We liked it for a little while." As Al Crump, the experienced conference speaker, was truly dazzled in a ballroom, tells me of her visit, "We’ve been in the group for years. I’ve been pretty carefully planning get a visit. There were Seahawks and could I say badgers? While suffering." Nineteen-year-old Husker’s fan saw it as a chance to make her own observations amidst the relentless publicity during the late-May 26 offsite disruption.She's discovered better than most what other attendees have done when it comes to integrity and candor in their feats with interviews, and bloggers featuring her accounts of the Washington Redskins and Chuck Siedelbaum. Beginner loves groupies.

Sgt. Gun Casper Olson, free agents with San Diego at Faulkner: "I got to really get to know my brother, Buck (Olson, Sr.). We're a supervisor in the Major League for the last 15 years, who will work with us in the $159 million and $10 260 per billion level, so we got to get to know him every single day. Came to see who often begins and ends up staying where professional sports are. Hawaii didn’t offer it as Kansas did." However, in many ways, he was highly complimentary of that former team’s coach, comparing what Hawaii offered to Voeltt Park whom he described as Kurt Stater of the Eagles: "I got to see Cliff Starsy, North Carolina’s head coach at the time, and watch Nick Saban, who was the potential replacement for the Alabama coaches before they got his hands on Bradley/Hedge Briles. I watched everything. On the football field was his ono. As a coach, he gets so excited, I couldn’t drive with him on the field, though he wasn’t playing. His closest friend manages TSA for him. And our lineback players'm ht excited that he puts on. I want to know where Robert Griffin III would be next because he has reached out to Matt directly. We are so intense that my drive won’t much use."

Chris Jackson, Jr., Jr., 2015 Stanford and Paula, who were then older than him and split chances are part of several big families. Jackson’s momma; Paula’s soft loves; and Jackson’s pressure on everything in the team and social sphere. Still, they are each given enough space to talk about the game ultimately one day. On Monday, for her first hour-long conversation as a prospect with people wearing the surrogate for best mothers, she talked of her relationship with the ninth-grade scholarship and converting it into an effective recruiting tactic; essentially, she had won her family’s trust with linebackers that she could stage for a bright future. "And that shows true faith and confidence," she informs me.

Orryn Lamba Jackson, 2015 sophomore in Cornell, an African-American computer geek with giant L.D. glasses, named Abraham Lincoln to Sanford Abraham, the mayor of her community in Benton, Ark., that school’s throwback. She is as humble as Los Aldridge, where her book was held}
\caption{\textbf{Unconditionally generated text of RADD-$\LDCE$ medium.}}
\label{fig:radd-dce-un}
\end{figure}

\newpage
\begin{figure}[H]
    {\fontfamily{lmr}\selectfont \textcolor{blue}{I have a}  ? Well, I don’t really think I leverage capital at the end of the day. I have a - a theory that these things are Fiction- ignorable and dangerous.

Huh… Hmm.. But they will never recognise me.

How fierce am I.. Ma Rdf, called me out in the heat of sanctimonia and ignorance.

I am in a position of being in the middle, not the pack.

Primordial wealth has no ample hands, but I am especially heavy. My first step is not gonna be rich.

As much as I claim, I will provide you with nothing. I will force you to do whatever it take to go the route that you deem appropriate. I’ll demand that you become my expert tool.

Usually when they gangge me, I know when their indulgence is not for me.

You´re practicing WITH the Masochist!

A young lady took Petra wrapsrt me a strawberry extract while he was thirsty as I was eating out that day.

She bragged as she hugged the unsuspecting waitress to tell him, “Let’s revue that recipe! That meth thumb isn't good.” Besides, she said, “So friggin' gums don’t mong like that cure som a bowl!”<|endoftext|>"liars stalling liars." The Town Movement Leaders Still Nader Meanwhile?

On November 9, a new report [partedially co-written The Environmental Working Group] revealed allegations that the EPA had routinely used decoys on pigs in Oregon today, exacerbating Ecolabelion in U.S.-origin food.

Let’s face it - it is time begged little children to form the cover of the smoke is inferno and crime malls - no matter how likely it is for them to say something funny or remember their fathers mumbling with far more reluctance when addressing the press. So we expect our government agent to be moral when they have a (public) right to information without leaving the door open to their investigation

BILL May they learn our government officials needn’t superlative skills in which case they take a tough line...One reason all reported tests are said to be acts of terrorism by corrupt agents of the political is you’ve never reached a threshold level you must accept testing. So there are two reasons why?

One is that the public - public opinion has changed recently. Takes time, and no public pressure won’t be meaningful to algae farms when the bloggers get down to shove. We were honestly assured of all of this pressure on the way-down to Nixon - tell us if you are willing to listen to a sly reporter...The folks you Daily Sheeple have the most difficult time doing is fishing crackers to gather sand with hundreds of pounds of scales, concrete, or grease trays.

* * *

Related: Bringing Up:

Earlier: Former EVA Does “Nader’s EPA” Test In November, Parrotes Daily Exam

Earlier: Mining Smog’s Seed Means Persuating Boger Ecolideca

Photo Credit: Biolanao/Whisper<|endoftext|>Macroorganics, published in Environmental and Swedish Medicine, have detected an inter-cell growth factor-alpha (IFAS-alpha) gene in what the team considers to be the most tasty and dismaying organism and object in the world. The researchers studied the diet of 60 million modern sheep, goats, sheep, cattle, chickens, rats and mice whose accumulate more data revealed patterns of greater importance to be found in the human population. Specifically a gene (IFAS-alpha) disproportionately responds in mice to oxidative stress.

Rheum-enriched flavon fat (AST) and consumption of milk fat-oddimal fractional density lipoprotein (FFL) were indeed shown to have significant effects. Medication of these substances in control animals or controls did not induce the selected IFAS-alpha. colon cancer rates, in contrast with the original cause in antibiotic- and non-infected control mice that underlie the principle, were not necessarily improved.

In turns antibiotic resistance and colonally carcinogenic antibiotic feeding bacteria are not related. Numbers are unclear but the fact these changes are attributed to them proposes that conventional antibiotics could not deny known causal effects on the adjacent cousins.

The Swedish researchers led by Professor Erik Johansson and Anne Kristin Gunnarsson-Silk both University of Sweden who were elected to chair the "Global VIP Conference on Animal Food Physiology" today on May 1 in Vilskovisno, shared by the Office of Ontrialed CounsellingASE MEDIC; the high-level Swedish Food Safety Research Institute; Division of Borneck, Tissue, Cellular and Virus; System Genomics Service that operates \textcolor{blue}{in a joint framework.}}
\caption{\textbf{Conditionally generated text of RADD-$\LDCE$ medium.} Prompt tokens are in \textcolor{blue}{blue}.}
\label{fig:radd-dce-con}
\end{figure}

\newpage

\end{document}